\documentclass[11pt]{article}
\usepackage{fullpage}
\usepackage{amsmath,amssymb,amsthm,mathtools}
\usepackage{tikz}
\usepackage{tocloft}
\usepackage{natbib}
\usepackage[colorlinks=true, linkcolor=blue, citecolor=blue]{hyperref}

\theoremstyle{plain}
\newtheorem{theorem}{Theorem}
\newtheorem{proposition}[theorem]{Proposition}
\newtheorem{lemma}[theorem]{Lemma}
\newtheorem{corollary}[theorem]{Corollary}
\theoremstyle{definition}
\newtheorem{definition}[theorem]{Definition}
\theoremstyle{remark}
\newtheorem{remark}[theorem]{Remark}

\theoremstyle{definition}
\newtheorem{observation}[theorem]{Observation}
\theoremstyle{remark}

\newcommand{\Pbb}{\mathbb{P}}
\newcommand{\Ebb}{\mathbb{E}}
\newcommand{\Rbb}{\mathbb{R}}
\newcommand{\MC}{\mathrm{MC}}
\newcommand{\SMC}{\mathrm{SMC}}
\newcommand{\SC}{\mathrm{SC}}
\newcommand{\Err}{\mathrm{Err}}
\newcommand{\KL}{D_{\mathrm{KL}}}
\newcommand{\distH}{\mathrm{dist}_H}

\newcommand{\Unif}{\mathrm{Unif}}
\newcommand{\Ber}{\mathrm{Ber}}
\newcommand{\ind}[1]{\mathbf{1}\!\left\{#1\right\}}
\newcommand{\ThetaCode}{\Theta_m}
\newcommand{\code}[1]{c(#1)}

\newcommand{\E}{\mathbb{E}}
\newcommand{\R}{\mathbb{R}}

\title{The Sample Complexity of Multicalibration}
 \author{Natalie Collina \qquad  Jiuyao Lu \qquad Georgy Noarov \qquad Aaron Roth\\{University of Pennsylvania}}
\date{}

\begin{document}
\maketitle

\begin{abstract}
We study the minimax sample complexity of multicalibration in the batch setting. A learner observes $n$ i.i.d.\ samples from an unknown distribution and must output a (possibly randomized) predictor whose population multicalibration error, measured by Expected Calibration Error (ECE), is at most $\varepsilon$ with respect to a given family of groups. For every fixed $\kappa > 0$, in the regime $|G|\le \varepsilon^{-\kappa}$, we prove that $\widetilde{\Theta}(\varepsilon^{-3})$ samples are necessary and sufficient, up to polylogarithmic factors. The lower bound holds even for randomized predictors, and the upper bound is realized by a randomized predictor obtained via an online-to-batch reduction. This separates the sample complexity of multicalibration from that of marginal calibration, which scales as $\widetilde{\Theta}(\varepsilon^{-2})$, and shows that mean-ECE multicalibration is as difficult in the batch setting as it is in the online setting, in contrast to marginal calibration which is strictly more difficult in the online setting. In contrast we observe that for $\kappa = 0$, the sample complexity of multicalibration remains $\widetilde{\Theta}(\varepsilon^{-2})$ exhibiting a sharp threshold phenomenon.

More generally,  we establish matching upper and lower bounds, up to polylogarithmic factors, for a weighted $L_p$ multicalibration metric for all $1 \le p \le 2$, with optimal exponent $3/p$. We also extend the lower-bound template to a regular class of elicitable properties, and combine it with the online upper bounds of \cite{hu2025efficient} to obtain matching bounds for calibrating properties including expectiles and bounded-density quantiles.
\end{abstract}

\clearpage
\pagenumbering{gobble}
{\footnotesize
\tableofcontents
}
\thispagestyle{empty}
\clearpage
\pagestyle{plain}
\pagenumbering{arabic}
\setcounter{page}{1}

\section{Introduction}

\paragraph{Calibration and multicalibration.}
A predictor is \emph{calibrated} if, conditional on the value it predicts, the predicted value equals the expected outcome \citep{dawid1982well}. The standard quantitative measure of mis-calibration is the \emph{Expected Calibration Error (ECE)}, which sums the magnitude of the prediction-conditional bias across all prediction values:
\[
\mathrm{ECE} \;=\; \sum_{v} \bigl|\E[v - Y \mid \text{prediction} = v]\cdot \Pbb[\text{prediction}=v]\bigr|.
\]
\emph{Multicalibration}, introduced by \cite{hebert2018multicalibration}, strengthens calibration by requiring it to hold simultaneously on every subpopulation represented by a collection of \emph{group functions} $G$.  The \emph{Expected Multicalibration Error} is the maximum, over groups, of the group-weighted ECE.

Since its introduction, multicalibration and closely related notions have found a wide range of applications, from learning predictors that are simultaneously optimal for many loss functions --- \emph{omniprediction} \citep{gopalan2021omnipredictors,gopalan2023loss,okoroafor2025near} --- to strengthening complexity-theoretic constructions \citep{casacuberta2024complexity,dwork2025supersimulators}, to low complexity algorithms for distributed information aggregation \citep{collina2025tractable,collina2026collaborative}. These applications motivate a thorough understanding of the statistical requirements for achieving multicalibration.

\paragraph{Batch multicalibration.}
In the \emph{batch} or \emph{statistical} setting, a learner is given a finite family $G$ of binary groups together with $n$ i.i.d.\ samples from an unknown distribution $P$ over context--outcome pairs $(X,Y)$, and must output a (possibly randomized) predictor $Q$ whose \emph{population} multicalibration error with respect to $G$ is at most $\varepsilon$, with constant probability. 

The population multicalibration error is defined as follows (see Section~\ref{sec:preliminaries} for the precise definitions). Given a finitely supported randomized predictor $Q$ that, upon observing context $X=x$, draws its prediction from a distribution $Q_x$ supported on a finite set $V(Q) \subset [0,1]$, the signed bias on group $g$ at prediction value $v$ is $B_P(Q;v,g) = \E[g(X) \cdot Q_X(\{v\}) \cdot (v-Y)]$. The Expected Calibration Error on group $g$ sums the absolute biases across prediction values, and the Expected Multicalibration Error takes the maximum over groups:
\[
\MC_P(Q;G) \;:=\; \max_{g \in G} \sum_{v \in V(Q)} |B_P(Q;v,g)|.
\]

Multicalibration asks for substantially more  than plain (\emph{marginal}) calibration. If one only asks for calibration with no group structure, then a constant predictor that always outputs an estimate of the marginal mean $\E[Y]$ is calibrated, so the sample complexity is just $\Theta(\varepsilon^{-2})$, exactly as in mean estimation. In general, multicalibration can still permit substantial coarsening of the regression function. The real issue is to understand when a given group family rules out such coarsening and instead forces the learner to resolve a much finer estimate of $\E[Y\mid X]$ itself. Our lower bound gives a family with exactly this behavior.

\paragraph{Prior work on sample complexity.}
Despite almost a decade of algorithmic work on multicalibration, the optimal sample complexity --- as a function of the target error $\varepsilon$ --- has remained an open question. In all of the upper bounds discussed below, the group family $G$ may be an arbitrary finite class whose size is polynomial in the sample size, and its contribution to the error guarantee is only logarithmic and is absorbed into the $\widetilde O(\cdot)$ notation. This makes the polynomial-budget regime a natural target for a sharp minimax characterization. Different algorithms and error metrics have produced a wide range of exponents when translated to the common metric of mean ECE used in this paper (see Table~\ref{tab:intro-rates} and Appendix~\ref{app:rw-rate-conversions} for the conversions). After conversion to  ECE, the original deterministic algorithm of \cite{hebert2018multicalibration} gives sample complexity $\widetilde O(\varepsilon^{-7})$. \cite{gupta2021online} gave the first online multicalibration algorithms together with an online-to-batch reduction, yielding a randomized predictor with sample complexity $\widetilde{O}(\varepsilon^{-4})$. \cite{lee2022online} and \cite{haghtalab2023unifying} later developed game theoretic treatments of several multicalibration notions, and gave randomized algorithms with the same $\widetilde{O}(\varepsilon^{-4})$ sample complexity. Most recently, the online multicalibration algorithm of \cite{noarov2025high}, when combined with a suitable online-to-batch reduction (which we give in this paper), achieves $\widetilde{O}(\varepsilon^{-3})$, which is the best known upper bound---see also \citep{ghugeimproved}, who  independently gave another algorithm with the same rate.

On the lower-bound side, there has been less work. The trivial lower bound from mean estimation gives $\Omega(\varepsilon^{-2})$. To our knowledge, the only improvement is due to \cite{gibbs-tibshirani-omnipredict}, who proved an $\Omega(\varepsilon^{-5/2})$ sample complexity lower bound for the weaker notion of calibrated multiaccuracy (c.f. \cite{casacuberta2025global}), and only for \emph{deterministic} predictors. No lower bound beyond the trivial one was known to hold against \emph{randomized} predictors, and so it was consistent with known results that randomized predictors might reduce the sample complexity down to $\varepsilon^{-2}$. Indeed, the best known upper bounds continue to be realized only via randomized predictors.

Calibration has also been studied for statistics beyond means. Early work treated particular properties such as moments and quantiles \citep{jung2021moment,gupta2021online,bastani2022practical,jung2023batch,deng2023happymap,garg2024oracle}. \cite{noarov2023scope} introduced the general framework of property multicalibration for continuous scalar elicitable properties and gave  online and batch algorithms under mild regularity conditions. Very recently, \cite{hu2025efficient} obtained substantially sharper online upper bounds for a family of weighted $L_p$ multicalibration metrics on elicitable properties: the case $p=1$ is the ECE metric defined above, while larger values of $p$ measure higher moments of the bucketwise calibration bias, weighted by the prediction mass of each bucket. Prior to our work, no non-trivial sample-complexity lower bounds were known for any non-mean property, or even for means against randomized predictors.

\paragraph{Our contributions.}
We resolve the minimax sample-complexity of multicalibration up to polylogarithmic factors, by proving lower bounds using explicit polylogarithmically sized group families that therefore match the best known upper bounds throughout every polynomial-size budget regime $|G|\le \varepsilon^{-\kappa}$ with fixed $\kappa>0$. Our results hold for the standard mean property, extend to $L_p$ multicalibration metrics for $1 \le p \le 2$, and further extend to a regular class of elicitable properties including expectiles and bounded-density quantiles. The lower bounds hold even for randomized predictors, and the upper bounds are realized by randomized predictors that result from online-to-batch reductions.

Our minimax statements study the regime in which the group family may grow polynomially with $1/\varepsilon$, i.e.\ $|G|\le M_\kappa(\varepsilon)=\lceil \varepsilon^{-\kappa}\rceil$ for a fixed $\kappa>0$. This is the natural scale on which to compare with the known upper bounds, whose dependence on the group family is only logarithmic in $|G|$ and therefore contributes only polylogarithmic factors in $1/\varepsilon$. Definition~\ref{def:sample-complexity} formalizes the corresponding minimax sample-complexity problem.
In the notation of Definition~\ref{def:sample-complexity}, we determine $\SC_{\mathrm{mean\text{-}ECE}}^{(\kappa)}(\varepsilon)$ and $\SC_{L_p}^{(\kappa)}(\varepsilon)$ up to polylogarithmic factors for every fixed $\kappa>0$. For regular elicitable properties, we prove an abstract lower-bound template with the same polylogarithmic group-size dependence and combine it with matching upper bounds over the distribution classes covered by the online hypothesis in \cite{hu2025efficient}, yielding matching exponents for expectiles and bounded-density quantiles for every fixed $\kappa>0$. To summarize, we ask:
\begin{center}
\emph{What is the minimax sample complexity of multicalibration?}
\end{center}
The answer we give: $\widetilde{\Theta}(\varepsilon^{-3})$ for the ECE metric, and $\widetilde{\Theta}(\varepsilon^{-3/p})$ for the $L_p$ metric. This in particular separates the sample complexity of multicalibration from the sample complexity of plain \emph{marginal} calibration, whose batch sample complexity is $\widetilde{\Theta}(\varepsilon^{-2})$. Together with the tight online lower bound of \cite{collina2026optimal}, our result shows that the minimax exponents for mean-ECE multicalibration agree in the batch and adversarial online settings. This is in sharp contrast with ordinary marginal calibration: while the batch rate is $\widetilde{\Theta}(\varepsilon^{-2})$, \cite{qiao2021stronger,dagan2025breaking} showed that this rate cannot be obtained in the online setting.

\begin{table}[t]
\centering
\footnotesize
\begin{tabular}{|p{4cm}|p{7.5cm}|c|c|}
\hline
Paper & Guarantee & Rate in  ECE & Randomized \\
\hline
\multicolumn{4}{|c|}{\textbf{Lower Bounds}} \\
\hline
\cite{gibbs-tibshirani-omnipredict} & Batch lower bound for calibrated multiaccuracy / ECE in mean regression (for deterministic predictors) & $\Omega(\varepsilon^{-5/2})$ &  \\
\hline
\textbf{This paper} & Batch lower bound for multicalibration in mean regression and for more general properties & $\textbf{\boldmath$\widetilde \Omega(\varepsilon^{-3})$}$ & $\checkmark$ \\
\hline
\multicolumn{4}{|c|}{\textbf{Upper Bounds}} \\
\hline
\cite{hebert2018multicalibration} & Batch $(G,\alpha)$-multicalibration with discarded mass and a minimum-group-mass assumption & $\widetilde O(\varepsilon^{-7})$ &  \\
\hline
\cite{gupta2021online} & Online $(\alpha,n)$-mean multicalibration on $n$ buckets, plus online-to-batch & $\widetilde O(\varepsilon^{-4})$ & $\checkmark$ \\
\hline
\cite{haghtalab2023unifying} & Batch $(G,\varepsilon,\lambda)$ bucketed multicalibration in $L_\infty$ & $\widetilde O(\varepsilon^{-4})$ & $\checkmark$ \\
\hline
\cite{gopalan2022low} & Full multicalibration with interval basis, specialized back to mean ECE & $\widetilde O(\varepsilon^{-8})$ &  \\
\hline
\cite{globus2023boosting} & Weighted $L_2$-multicalibration from $L_2$-boosting & $\widetilde O(\varepsilon^{-10})$ &  \\
\hline
\cite{noarov2025high} & Online bucketed mean multicalibration, followed by rounding and \textbf{Proposition~\ref{prop:mean-ece-otb}} & $\textbf{\boldmath$\widetilde O(\varepsilon^{-3})$}$ & $\checkmark$ \\
\hline
\end{tabular}
\caption{Previously known and newly derived sample-complexity rates for (mean) multicalibration error as measured with ECE. For upper bounds, ``Randomized'' means the algorithm produces a probabilistic predictor. For lower bounds, ``Randomized'' means that the lower bound holds even for probabilistic predictors. Upper bounds hold for arbitrary group families whose cardinality can grow polynomially with the sample size; the calibration error bounds all grow logarithmically with the group family cardinality, which increases the sample complexity by only a $\mathrm{polylog}(1/\varepsilon)$ factor absorbed in the $\widetilde{O}(\cdot)$ notation. Some upper bounds from prior work were stated under different error metrics; see Appendix~\ref{app:rw-rate-conversions} for conversions.}
\label{tab:intro-rates}
\end{table}

\subsection{Our Results}

We now state our results more precisely.

\begin{enumerate}
    \item \textbf{Lower bound for mean ECE multicalibration (special case of Theorem~\ref{thm:lower-regular}).}
    For every sufficiently small $\varepsilon$, there exists a hard instance on a domain $[m]$ with
    \[
    m=\Theta\!\left(\frac{1/\varepsilon}{\log(1/\varepsilon)}\right)
    \]
    together with a family of only $O(\text{polylog}(1/\varepsilon))$ binary groups and a family of Bernoulli distributions, such that any possibly randomized learner achieving population multicalibration error $\varepsilon$ with probability at least $2/3$ requires
    \[
    n = \widetilde{\Omega}(\varepsilon^{-3})
    \]
    samples. The hard instance is a one-dimensional Bernoulli regression with a monotone mean function on $[m]$, showing that the optimal rate already arises on extremely simple instances.
    
    \item \textbf{Matching upper bound for mean ECE (Theorem~\ref{cor:mean-ece-sharper}).}
    We record the straightforward online-to-batch conversion of the online multicalibration algorithm of \cite{noarov2025high} into a randomized batch predictor that achieves population multicalibration error $\varepsilon$ using only
    \[
    n = \widetilde{O}(\varepsilon^{-3}\log|G|)
    \]
    samples. Together with the lower bound, this establishes $\widetilde{\Theta}(\varepsilon^{-3})$ as the optimal sample complexity for mean ECE multicalibration, up to polylogarithmic factors, for every fixed polynomial group-size budget $|G|\le \varepsilon^{-\kappa}$ with $\kappa>0$.
    
    \item \textbf{$L_p$ multicalibration for $1 \le p \le 2$ (Theorem~\ref{thm:lower-regular} and Theorem~\ref{thm:upper-main}).}
    We extend both bounds to the $L_p$ multicalibration metric (cf.\ \cite{hu2025efficient}). For every fixed $\kappa>0$, the optimal sample complexity is $\widetilde{\Theta}(\varepsilon^{-3/p})$ for every $p \in [1,2]$. The lower bound follows from a H\"older comparison relating $L_p$ error back to ECE, applied to the same compressed hard family. The upper bound combines the online $L_p$ algorithm in \cite{hu2025efficient} with an online-to-batch reduction.
    
    \item \textbf{Regular elicitable properties (Theorem~\ref{thm:lower-regular} and Theorem~\ref{thm:upper-main}).}
    We introduce a regular class of elicitable properties, encompassing means, expectiles, and bounded-density quantiles among other examples. For this class, the same lower-bound template yields $\widetilde{\Omega}(\varepsilon^{-3/p})$ for the property-specific weighted $L_p$ multicalibration metric. When combined with the online upper-bound framework of \cite{hu2025efficient}, this gives matching batch exponents of $3/p$ for expectiles and bounded-density quantiles, up to polylogarithmic factors, for all $p \in [1, 2]$ and all fixed $\kappa>0$.
\end{enumerate}

We remark on the exponent $\kappa$ that determines how quickly the group family is allowed to grow as a function of $\varepsilon$. The lower-bound construction uses only polylogarithmically many groups in the hard-instance parameter $m$; after choosing $m$ as a function of $\varepsilon$, this becomes $|G|=\mathrm{polylog}(1/\varepsilon)$. As a result, the sharp lower bounds $\widetilde{\Omega}(\varepsilon^{-3})$ for mean ECE and $\widetilde{\Omega}(\varepsilon^{-3/p})$ for weighted $L_p$ already fit every polynomial budget $|G|\le \varepsilon^{-\kappa}$ with fixed $\kappa>0$. At $\kappa=0$ (corresponding to constant-size group families that do not grow with $1/\varepsilon$), no separation is possible for binary groups: if $|G|$ is fixed, then the group-membership patterns partition the domain into at most $2^{|G|}$ cells, and predicting the empirical mean on each cell gives sample complexity $\widetilde O(2^{|G|}\varepsilon^{-2})=O(\varepsilon^{-2})$. The remaining question is therefore the sharp joint dependence on $\varepsilon$ and $|G|$, especially in the regime in which $|G|$ grows with $\epsilon$ but only polylogarithmically.

\subsection{Lower Bound Proof Overview}

Our lower bounds are fully stated and proved in Section~\ref{sec:lower-bounds}, with the general result (which applies to all regular properties and to $L_p$ metrics) is recorded in Theorem~\ref{thm:lower-regular}. Here, we now give a proof overview in the language of our standard ($L_1$, mean) multicalibration sample complexity lower bound, in order to elide technicalities associated with the general statement and proof; still, we will address the extension to other $L_p$ norms and other distributional properties at the end of this proof sketch. (Since the complete proof in Section~\ref{sec:lower-bounds} proves the lower bound for general regular distributional properties, many of the intermediate lemmas which we reference here are proved in greater generality than we describe here.)

\paragraph{The Hard Instance.} To establish our $\Tilde{\Omega}(\epsilon^{-3})$ lower bound, we construct a sequence of hard instances parameterized by $m = \Tilde{\Theta}(1/\epsilon)$. For each $m$, the hard instance consists of:
\begin{enumerate}
    \item A hard binary group family $G_m$ (Definition~\ref{def:hard-groups}); 
    \item A hard family of data distributions $\mathcal{H}_m$ (Definition~\ref{def:hard-family}).
\end{enumerate}
At a high level, we construct $\mathcal{H}_m$ as a parametric family $(P_\theta)_{\theta \in \Theta_m}$ of distributions $P_\theta$ over $\mathcal{X} \times \mathcal{Y}$, with the parameter space $\Theta_m$ satisfying two desiderata: being (1) very large, and (2) pairwise well-separated. These two properties are clearly in tension, but, borrowing from a coding theory perspective, it turns out that they are possible to satisfy at the same time: such a collection $\Theta_m$ can be taken to be any code satisfying a \emph{packing property}.

The construction of the individual joint distributions $P_\theta$ must be quite specific, and their conditional label distributions crucially satisfy a certain monotonicity property, which is formalized via what we call \emph{staircase maps} $t_\theta$ ($\theta \in \Theta_m$). We will describe this in more detail shortly. 

Meanwhile, the hard group family $G_m$ is constructed so as, at a high level, to: (1) be small: $|G_m| = O(\text{polylog} (m))$, and (2) give rise to a well-behaved (in the sense of boundedness/slow growth of the coefficients) approximate representation of all possible sign threshold functions on the domain $[m]$. The significance of this latter property will be described shortly, but we note that it is tightly interlinked with the already mentioned monotonicity property of staircase maps. Once again, we see that both of these properties are in direct tension --- as one must represent the threshold function family by a well-behaved approximate basis much smaller than its cardinality --- yet again, a coding theory and a discrepancy theory insight enables both of these properties to hold at once. Specifically, the groups in $G_m$ will be constructed in a \emph{dyadic representation} manner, which decomposes every prefix interval into a disjoint union of dyadic intervals at different scales and then represents each dyadic block by a short, nearly orthogonal sign vector obtained from a low-correlation code.
(This group construction is inspired by \cite{gopalan2024omnipredictors}, which studies omniprediction for regression and builds interval approximations from low-rank factorizations of the identity matrix via results based on Johnson-Lindenstrauss lemma; but differs in the technical details.)

The existence of both types of codes, with the properties required for constructing $\mathcal{H}_m$ and $G_m$, follows in a simple way from an elementary probabilistic method-based lemma, allowing us to concisely present\footnote{We note that the results in that section are folklore (e.g.\ the packing argument is of Gilbert-Varshamov type), and our goal is to present them in a brief and unifying manner for the benefit of the reader, underscoring the remarkable point that our entire hard instance is constructed with the help of a single basic coding theory primitive.} the few coding theory primitives that we need in Section~\ref{subsec:coding-theory}.

\paragraph{Proof structure.}
Now, we describe the steps that our lower bound proof goes through. At a high level, we start with any learner that, with high probability, enforces approximate $G_m$-multicalibration on any joint distribution $P_\theta \in \mathcal{H}_m$. We then show, crucially relying on the properties of $G_m$ and $\mathcal{H}_m$, that such a multicalibrated learner must in fact be a \emph{good predictor} on $\mathcal{H}_m$. Then, once again relying on the properties of the hard distribution family $\mathcal{H}_m$, we show that a precise-enough predictor on this hard family is in fact an \emph{exact $\theta$-decoder}, i.e.\ must be able to exactly (without approximation error) determine the ground truth parameter $\theta \in \Theta_m$ using its sample from $P_\theta$. And finally, using that the parameter space $\Theta_m$ is very large, we show via an application of Fano's inequality that an exact $\theta$-decoder must have a high sample complexity, which evaluates to $\Omega(m^3) = \Tilde{\Omega}(\epsilon^{-3})$. Via the just-laid-out chain of equivalencies, this exact-decoding lower bound thus also serves as a multicalibration lower bound, just as we had set out to show. 

In what follows, we now describe each of the proof steps in more detail, referencing the relevant intermediate lemmas established in Section~\ref{sec:lower-bounds}.

\paragraph{Step 1: A $G_m$-multicalibrated predictor is a low-error predictor:}
In Proposition~\ref{prop:anticoarsen}, we show that for any distribution $P_\theta$ as we construct it, a learner's prediction error is bounded by its $G_m$-multicalibration error with an additional logarithmic factor:
\[
\text{Learner's prediction error} \leq O(\log m) \cdot (\text{Learner's } G_m\text{-multicalibration error}).
\]
To understand the mechanics of this step, we must now discuss how each $P_\theta$ is constructed. For means, $P_\theta$ is the following joint distribution. Contexts are distributed uniformly over the finite domain $[m]$, and the conditional distributions are chosen as $Y|X=i \sim Ber(t_\theta(i))$ for all $i \in [m]$. Here, the key object is $t_\theta$, indexed by $\theta$, which we call a \emph{staircase map}. For the illustration of this map, see Figure~\ref{fig:staircase}. This map is crucially defined to (1) be monotonically increasing, and (2) ``hide'' the bits in its defining codeword $\theta$ across its domain.

Now, recall that the group family $G_m$ is defined to approximately represent all the $m$ signed threshold functions on the domain $[m]$. The key point in proving this step's assertion is to note
that for the staircase $t_\theta$, due to its monotonicity, \emph{every sign pattern of $v-t_\theta(i)$ (where $v$ is any prediction) is always equivalent to one of the threshold functions in the
domain $[m]$}. Since our groups $G_m$ approximate every such threshold sign with only
polylog($m$) signed test functions, 
we may thus \emph{convert the learner's prediction error (which depends on the error patterns $v-t_\theta(i)$) into a decomposition in terms of the group functions in $G_m$ --- and thus (up to a loss in the boundedness parameter of the coefficients, which is only a $\log(m)$ factor) into the learner's multicalibration error.}
To summarize, this is a delicate
structural step: one must simultaneously approximate all thresholds while keeping the group
family small enough to obtain the desired minimax lower bound.

\paragraph{Step 2: A low-error predictor is an exact $\theta$-decoder:} Having demonstrated that small-error multicalibration forces small prediction error, we now carry this line of reasoning forward, and show that once a small-enough prediction error is achieved, it becomes possible for the learner to exactly determine the ground truth parameter value $\theta$. This is shown in Proposition~\ref{prop:decode}. 

The key property of our hard instance leveraged in this result is that any two distinct \emph{staircase maps} $t_\theta, t_{\theta'}$ for $\theta, \theta' \in \Theta_m$ (where $\theta \neq \theta'$) must be well-separated in $L_1$ distance. This well-separation property (Lemma~\ref{lem:staircase-separation}) more precisely states that any such pair of staircase maps $t_\theta, t_{\theta'}$ are at least $\Omega(1/m)$ apart in $L_1$-distance, and follows from the packing property of the code $\Theta_m$ via a direct connection via Hamming distance.

With this well-separation property in hand, it is then clear that by driving down the prediction error below $\Tilde{O}(1/m)$, and then taking the nearest neighbor of the best estimate $\hat{\theta}$ within the code $\Theta_m$, the ground-truth $\theta$ can be recovered exactly.

\paragraph{Step 3: An exact $\theta$-decoder has high sample complexity:} Now consider any learner who with high probability is able to exactly determine, from an i.i.d.\ sample of size $n$ from $P_\theta$, the true distribution parameter $\theta \in \Theta_m$. At this point, we can invoke Fano's inequality to demonstrate that this determination requires a large sample size. Quantitatively, Lemma~\ref{lem:fano-close} observes that (1) the parameter space $\Theta_m$ is very large (exponential in $m$), and (2) by a regularity property of distribution means (more explicitly, by a standard quadratic upper bound on KL divergence), it holds that any two disributions in $\mathcal{H}_m$ are $O(1/m^2)$-close in KL divergence --- and that by Fano's inequality, these two properties imply the sought sample complexity bound 
\[
n \geq \Omega \left(\frac{\log |\Theta_m|}{1/m^2} \right) = \Omega \left(\frac{m}{1/m^2} \right) = \Omega(m^3) = \Tilde{\Omega}(\epsilon^{-3}).
\]

\paragraph{Generalization to regular properties and $L_p$ metrics:} The extension to an optimal lower bound for other $L_p$ multicalibration metrics turns out to be immediate for $p \in [1, 2]$. For this range of $p$, simply applying Holder's inequality with our ECE ($L_1$) lower bound extends it to the lower sample complexity bound $\Tilde{\Omega}(\epsilon^{-3/p})$, and our upper bounding section later confirms this to be optimal via an online-to-batch reduction from the method of \cite{hu2025efficient}, whose bounds (after conversion) turn out to match this. 

However, as we also briefly mention in the future work (conclusion) section, proving tight lower bounds for $L_p$ multicalibration for $p \in (2, \infty]$ --- which is lossy with respect to the Holder conversion that we just described --- does not appear to easily follow from our $L_1$ lower bound construction. We therefore leave tight $L_p$-multicalibration bounds for $p > 2$ to future work, and note that this future direction underscores the nuanced nature of obtaining multicalibration lower bounds.

Next, generalizing our lower bounds to multicalibration of distributional properties $\Gamma$ beyond means requires care. Instead of the mean residuals $v-\mu(i)$, the general proof deals with the expected identification function $M_\Gamma(v,t)$, and is not automatic without further assumptions on $\Gamma$. We define ``regular'' properties $\Gamma$ as those that satisfy a certain collection of assumptions (see Definition~\ref{def:regular}).

In a nutshell, first, the identification function must be quantitatively non-flat. 
Secondly --- and critically --- our $\Gamma$-specific hard distribution family must satisfy a Fano-type requirement that all constituent pairs of distributions $P_\theta, P_{\theta'}$ be quadratically close in KL distance. This ensures that exact $\theta$-decoders of $P_\theta$ remain subject to a sample lower bound via Fano's inequality. 

We note that in terms of adjustments to the above-explained hard instance, only the hard distribution family needs to vary with the property $\Gamma$; meanwhile, the hard group family $G_m$ remains $\Gamma$-agnostic. Finally, our regularity assumptions on $\Gamma$ are not very restrictive, and we illustrate this by proving that --- aside from means --- \emph{quantiles} are also regular properties, subject to natural mild Lipschitzness, as are \emph{expectiles}; see the Appendix.

\subsection{Upper Bound Proof Overview}

The upper bound proofs are contained in Section~\ref{sec:upper-bounds}. The blueprint is somewhat standard, and proceeds by constructing and proving appropriate online-to-batch reduction statements. However, one does encounter several technical subtleties in comparison to familiar online-to-batch reductions for no-regret learning, and so we discuss the outline here. First, we describe the canonical mean-ECE case, whose upper bound guarantee is stated in Theorem~\ref{cor:mean-ece-sharper} and proved in Section~\ref{subsec:upper-mean-sharp}. Then, we turn to the general upper bound for $L_p$ property multicalibration, which is formally stated in Theorem~\ref{thm:upper-main} and proved in Sections~\ref{subsec:upper-otb} and~\ref{subsec:upper-plug-in}.

\paragraph{Upper bound for mean ECE multicalibration.}
For the ECE metric, the online-to-batch conversion is relatively straightforward. An online algorithm processes a stream of i.i.d.\ samples $(X_1,Y_1), \ldots, (X_T,Y_T)$ and, at round $t$, outputs a distribution-valued rule $q_t:\mathcal{X}\to\Delta(\Lambda)$ on a finite grid $\Lambda=(v_k)_{k=1}^K$. The realized prediction at round $t$ is then sampled from $q_t(X_t)$. We convert the transcript into a batch predictor by averaging the roundwise grid distributions: $Q_S := \frac{1}{T}\sum_{t=1}^T q_t$.

Let $\widehat{\MC}_T(S;G,\Lambda)$ denote the normalized empirical multicalibration error of the transcript on the grid $\Lambda$, and for each group $g$ and bucket $k$ let $\widehat B_k(S;g)$ and $B_k(S;g)$ denote the corresponding empirical and population bucket biases. Then the population bias of $Q_S$ on each group--bucket pair differs from the corresponding empirical bias by a martingale difference:
$B_k(S;g) - \widehat{B}_k(S;g) \;=\; \frac{1}{T}\sum_{t=1}^T M_t,$
where each $M_t$ is bounded and has conditional mean zero given the history. An Azuma--Hoeffding bound with a union over groups and sign patterns yields
$\E\!\left[\MC_P(Q_S;G)\right] \;\le\; \E\!\left[\widehat{\MC}_T(S;G,\Lambda)\right] + O\!\left(\sqrt{\frac{K + \log|G|}{T}}\right).$

Now, to instantiate the batch predictor, we utilize the online algorithm of~\cite{noarov2025high}. As they prove, it achieves cumulative bucketed multicalibration error $\widetilde{O}(T/K + \sqrt{TK})$ with $K$ prediction buckets. After normalizing by $T$ and rounding buckets to their centers, the empirical ECE of their algorithm is
$\widehat{\MC}_T(S;G,\Lambda) \;\le\; \widetilde{O}\!\left(\frac{1}{K} + \sqrt{\frac{K}{T}}\right).$
Setting $K = \widetilde{\Theta}(T^{1/3})$ makes both online terms $\widetilde{O}(T^{-1/3})$, and the online-to-batch transfer term is on the same order, $\widetilde{O}(T^{-1/3})$, giving the bound $\E[\MC_P(Q_S;G)] \le \widetilde{O}(T^{-1/3})$ and thus sample complexity $\widetilde{O}(\varepsilon^{-3}\log|G|)$ after Markov's inequality. We highlight that it is due to this same-order transfer loss that our batch lower bound does not recover the incomparable online multicalibration lower bound of~\cite{collina2026optimal}.

\paragraph{General ($L_p$, property) upper bound.}
In the general case the reduction is more delicate. The main reason is that the $L_p$ error metric involves the ratio $|B_k|^p / \pi_k^{p-1}$, where $\pi_k$ is the population mass of bucket $k$. This quantity is both nonlinear, and can blow up in ``light'' buckets where $\pi_k$ is small. This requires a more nuanced online-to-batch reduction, which must in particular use a more complex concentration bound than the familiar Azuma-Hoeffding.

To address this, we follow a variance-adaptive Freedman strategy (with a dyadic peeling component) introduced by~\cite{hu2025efficient}, and handle the ratio structure of the metric via a two-part argument. Namely, we set a threshold $\tau = \widetilde{\Theta}(1/T)$ and separately consider light and heavy buckets.
\emph{(1) Light buckets} (population mass $\pi_k < \tau$) contribute at most $K\tau$ total $L_p$ error, by the trivial bound $|B_k|^p / \pi_k^{p-1} \le \pi_k$.
\emph{(2) Heavy buckets} ($\pi_k \ge \tau$) are handled with the aforementioned strengthened Freedman's inequality of~\cite{hu2025efficient}, a variance-adaptive concentration inequality; it is needed because while standard Freedman inequality controls a martingale sum in terms of its predictable quadratic variation, here the variance proxy $\pi_k$ is itself a random quantity (it depends on the online algorithm's choices). As in \cite{hu2025efficient}, the proof takes a union over dyadic scales of $\pi_k$, paying only a $\log T$ factor, and then uses $\pi_k \ge \tau$ on each scale to absorb the lower-order terms.

Having obtained the just-described online-to-batch transfer guarantee, we instantiate the batch predictor with the recent online $L_p$ multicalibration algorithm of \cite{hu2025efficient}. This algorithm targets online \emph{swap} multicalibration guarantees, which are in fact somewhat stronger than ordinary multicalibration (and we in fact present the above online-to-batch reduction in the language of the swap error metric). For $p \in [1, 2]$, the algorithm of~\cite{hu2025efficient} uses discretization $K = \widetilde{\Theta}(T^{1/3})$ and leads, after the batch conversion, to the general multicalibration upper bound of $\widetilde{O}(\varepsilon^{-3/p})$, which matches our general multicalibration lower bound.

\subsection{Additional Related work}
\paragraph{Uniform Convergence Bounds} A related but distinct line of work studies \emph{uniform convergence} of multicalibration error over a fixed predictor class. \cite{shabat2020uniform} give uniform convergence bounds for bucketed multicalibration error with logarithmic dependence on the size of a finite predictor class and graph-dimension dependence for infinite classes, together with a matching lower bound for that uniform-convergence problem. \cite{rosenberg2022exploration} show more generally that such multicalibration uniform-convergence guarantees can often be obtained by reparametrizing standard ERM sample-complexity bounds, and instantiate this viewpoint using VC- and Rademacher-complexity analyses for several model classes. These results do not establish the sample complexity of learning a predictor with low multicalibration error: they control the gap between empirical and population multicalibration error uniformly over a fixed class $H$, but they do not imply that $H$ contains any predictor with small population multicalibration error, which also requires that the class itself is sufficiently expressive. In particular, they are compatible with every predictor in $H$ having large multicalibration error, whereas our results characterize the sample complexity of producing a predictor whose population multicalibration error is small.

\paragraph{Other Multicalibration Lower Bounds} \cite{gibbs-tibshirani-omnipredict} prove the only previous lower bound that we are aware of on the sample complexity of batch multicalibration, beyond the $\Omega(\varepsilon^{-2})$ lower bound that follows from marginal calibration/mean estimation (in fact their lower bound is for the strictly weaker measure of calibrated multiaccuracy). In our notation, their result implies that for the problem of mean regression, learning a deterministic predictor with expected (mean) multicalibration error $\varepsilon$ requires at least 
$n\geq \Omega(\varepsilon^{-5/2})$ many samples. They also give an $n \leq O(\varepsilon^{-3})$ sample complexity upper bound for the weaker notion of calibrated multiaccuracy. In contrast, we identify the minimax optimal sample complexity for multicalibration: $\widetilde{\Theta}(\varepsilon^{-3})$, and our lower bound holds even for randomized predictors; similarly our analysis extends also to other multicalibration error metrics and to other elicitable properties beyond the mean. 

In a complementary direction, \cite{gopalan2024multiclass} study computationally efficient multi-class calibration when the label space has size $k$. Their lower bounds are mainly about the \emph{auditing} or recalibration problem of improving a given predictor while preserving squared loss, and show that more expressive multi-class notions such as canonical calibration and full smooth calibration require exponentially many samples in $k$ (and, for some notions, also face computational hardness). These results concern high-dimensional label spaces and a different recalibration task, and are therefore orthogonal to our minimax sample-complexity lower bounds for scalar multicalibration.

\cite{collina2026optimal} characterize the optimal rate for mean multicalibration in the online adversarial setting under the same ECE metric, proving a tight $\widetilde{\Theta}(T^{-1/3})$ normalized error rate. Their result and ours are incomparable: just as we characterize the optimal batch sample complexity, they characterize the optimal online rate, and neither theorem implies the other. In particular, our online-to-batch reductions incur an additive loss term of order $\widetilde \Omega(T^{-1/3})$, so our batch lower bounds do not lift to non-vacuous online lower bounds by contrapositive. 

Our high-level lower bound proof outline intersects with, but substantially differs from, that of \cite{collina2026optimal}. 
As a place of substantial difference, their hard distribution is tailored to the adversarial setting and encodes the mean as $\E[Y\mid X]=X$ exactly, whereas our batch lower bound uses a parametric monotone stochastic staircase family powered by certain (packing-type) binary codes. 
On the other hand, there is overlap insofar \emph{group family} component of the hard instance is concerned. Specifically, a major technical purpose of the group construction in both proofs is that it provides a succinct basis for approximating families of sign threshold functions on a finite domain; this aids with the high-level step of proving that low multicalibration error implies low prediction error (which they refer to as ``truthfulness''). However, the exact group construction we use here is different: \cite{collina2026optimal} use a subsampled Walsh basis technique, while we employ a coding theory approach coupled with a dyadic representation inspired by~\cite{gopalan2024omnipredictors}. (We note that their subsampled Walsh approach could also be used to define groups in the place of our coding approach, but we present a fully coding theory-based construction as a simple and consistent approach.)

\section{Preliminaries}
\label{sec:preliminaries}

We begin with the population notion used in the mean-calibration lower and upper bounds. In this section we first define multicalibration for the mean property, then introduces the analogous definitions for general elicitable properties.

Fix a context space $\mathcal{X}$, a distribution $P$ on $\mathcal{X}\times[0,1]$, and a family $G$ of binary groups $g:\mathcal{X}\to\{0,1\}$. Let $\mu(x):=\E[Y\mid X=x]$ denote the regression function of $P$.

\subsection{Mean multicalibration for finitely supported randomized predictors}

\begin{definition}[Finitely supported randomized predictor]
A finitely supported randomized predictor on $\mathcal{X}$ is a rule $Q=(Q_x)_{x\in\mathcal{X}}$ that assigns to each context $x\in\mathcal{X}$ a probability distribution $Q_x$ on $[0,1]$, and for which there exists a finite set $V(Q)\subseteq [0,1]$ such that $Q_x(V(Q))=1$ for all $x\in\mathcal{X}$.
\end{definition}

\noindent Equivalently, after observing $X=x$, the predictor outputs a random value $V\sim Q_x$, and only finitely many prediction values can occur overall. We work with this form throughout the main text because it lets us write calibration error as an explicit sum over prediction values. Restricting to a finite support is without loss of generality: Appendix Proposition~\ref{prop:quantize-arbitrary} shows that any arbitrary randomized predictor can be quantized to a finite grid while changing multicalibration and prediction error by at most an arbitrarily small additive term.

\begin{definition}[Population mean multicalibration] \label{def:population-mean-multicalibration}
For a signed weight function $w:\mathcal{X}\to[-1,1]$ and a prediction value $v\in V(Q)$, define the population signed bias by
\begin{align*}
B_P(Q;v,w)
&:= \E\!\left[w(X)\,Q_X(\{v\})\,(v-Y)\right] \\
&= \E\!\left[w(X)\,Q_X(\{v\})\,(v-\mu(X))\right].
\end{align*}

Define the corresponding signed-weight error by
\[
\Err_P(Q;w):=\sum_{v\in V(Q)} |B_P(Q;v,w)|.
\]
When $g:\mathcal{X}\to\{0,1\}$ is a binary group, this is its Expected Calibration Error (ECE):
\[
\Err_P(Q;g):=\sum_{v\in V(Q)} |B_P(Q;v,g)|.
\]
Finally define the Expected Multicalibration Error of $Q$ with respect to $G$ by
\[
\MC_P(Q;G):=\max_{g\in G}\Err_P(Q;g).
\]
\end{definition}

\begin{definition}[Prediction error]
\label{def:prediction-error}
The average absolute prediction error of $Q$ is defined as
\[
\Delta_P(Q)
:=
\E\!\left[\int |v-\mu(X)|\,Q_X(dv)\right]
=
\sum_{v\in V(Q)}
\E\!\left[Q_X(\{v\})\,|v-\mu(X)|\right],
\]
with the equality holding because $Q$ is finitely supported.
\end{definition}

\begin{remark}
A \emph{deterministic} predictor $f:\mathcal{X}\to[0,1]$ is the special case in which $Q_x=\delta_{f(x)}$ for every $x\in\mathcal{X}$.
In that case $V(Q)=f(\mathcal{X})$, and for every signed weight $w$ and every $v\in f(\mathcal{X})$,
\[
B_P(Q;v,w)
=
\E\!\left[w(X)\ind{f(X)=v}(v-Y)\right]
=
\E\!\left[w(X)\ind{f(X)=v}(v-\mu(X))\right].
\]
Accordingly,
\[
\Err_P(Q;w)
=
\sum_{v\in f(\mathcal{X})}
\left|
\E\!\left[w(X)\ind{f(X)=v}(v-Y)\right]
\right|
\qquad \text{and} \qquad
\Delta_P(Q)=\E\!\left[|f(X)-\mu(X)|\right].
\]
The randomized notation is thus a direct extension of the usual, deterministic, multicalibration. We will freely identify deterministic predictors with corresponding point-mass randomized predictors.
\end{remark}

\subsection{Learners and minimax sample complexity}

We now formalize the statistical problem studied in the paper. The learner is given the group family $G$ together with the sample, and we measure sample complexity in a minimax sense over all distributions and all group families of bounded size.

\begin{definition}[Learners given a group family]
\label{def:learner}
A learner is a sequence $\mathcal{A}=(A_n)_{n\ge 1}$
such that for every context space $\mathcal{X}$, every outcome space $\mathcal{Y}$, every finite family of binary groups $G\subseteq \{0,1\}^{\mathcal{X}},$ and every sample
$S\in (\mathcal{X}\times\mathcal{Y})^n,$
and every internal random seed $\zeta$, the output
$A_n(G,S,\zeta)$ 
is a finitely supported randomized predictor on $\mathcal{X}$. 
\end{definition}

\begin{definition}[Minimax sample complexity]
\label{def:sample-complexity}
Fix a nonnegative error functional $\mathcal{E}$ that assigns a value $\mathcal{E}(P,G,Q)\in[0,\infty)$ 
to each distribution $P$ on $\mathcal{X}\times\mathcal{Y}$, each finite family of binary groups
$G\subseteq \{0,1\}^{\mathcal{X}},$
and each finitely supported randomized predictor $Q$ on $\mathcal{X}$. For a learner $\mathcal{A}$ as in Definition~\ref{def:learner}, define its instance-wise sample complexity by
\[
n_{\mathcal{A}}(\varepsilon;P,G,\mathcal{E})
:=
\inf\Bigl\{
n\ge 1:
\Pbb_{S\sim P^n,\ \zeta}
\!\left(
\mathcal{E}\!\left(P,G,A_n(G,S,\zeta)\right)\le \varepsilon
\right)\ge \frac23
\Bigr\}.
\]
For a size budget
$M:(0,1)\to \mathbb{N},$
define the minimax sample complexity by
\[
\SC_{\mathcal{E}}(\varepsilon;M)
:=
\inf_{\mathcal{A}}
\sup_{\substack{\mathcal{X},\,P,\,G:\\ G\subseteq\{0,1\}^{\mathcal{X}},\ |G|\le M(\varepsilon)}}
n_{\mathcal{A}}(\varepsilon;P,G,\mathcal{E}).
\]
For every $\kappa>0$, define the polynomial-size budget $M_\kappa(\varepsilon):=\left\lceil \varepsilon^{-\kappa}\right\rceil$
and write
\[
\SC_{\mathcal{E}}^{(\kappa)}(\varepsilon):=\SC_{\mathcal{E}}(\varepsilon;M_\kappa).
\]
\end{definition}
\begin{remark}[Restricted minimax sample complexity]
\label{rem:restricted-sample-complexity}
If $\mathcal{P}$ is a class of admissible distributions on context--outcome spaces, we write
\[
\SC_{\mathcal{E}}^{\mathcal{P}}(\varepsilon;M)
:=
\inf_{\mathcal{A}}
\sup_{\substack{(\mathcal{X},\mathcal{Y},P)\in\mathcal{P},\\ G\subseteq\{0,1\}^{\mathcal{X}},\ |G|\le M(\varepsilon)}}
n_{\mathcal{A}}(\varepsilon;P,G,\mathcal{E})
\]
for the corresponding minimax sample complexity with the supremum restricted to $P\in\mathcal{P}$, and
\[
\SC_{\mathcal{E}}^{\mathcal{P},(\kappa)}(\varepsilon):=\SC_{\mathcal{E}}^{\mathcal{P}}(\varepsilon;M_\kappa).
\]
When $\mathcal{P}$ is the unrestricted class of all distributions, this reduces to Definition~\ref{def:sample-complexity}.
\end{remark}

Note that $\SC_{\mathcal{E}}^{(\kappa)}$ allows the cardinality of the group function class $|G|$ to grow at a polynomial rate with $1/\epsilon$. This is the regime that will allow us to prove bounds that tightly match the best known upper bounds' dependence on $\epsilon$ (up to polylogarithmic factors), as these upper bounds have only a logarithmic dependence on $|G|$. 
\begin{remark}
\label{rem:sample-complexity-shorthand}
For the mean ECE metric, we write
\[
\mathcal{E}_{\mathrm{mean\text{-}ECE}}(P,G,Q):=\MC_P(Q;G),
\qquad
\SC_{\mathrm{mean\text{-}ECE}}:=\SC_{\mathcal{E}_{\mathrm{mean\text{-}ECE}}}.
\]
We will introduce analogous shorthand for the weighted $L_p$ metrics and for elicitable properties when those notions are defined later in the paper.
\end{remark}

\subsection{Elicitable properties}

% Up to this point the paper has focused on the mean property. We now introduce definitions of elicitable properties, their regularity, with the goal of demonstrating that our lower and upper bounds extend to a regular class of one-parameter elicitable properties, and then instantiate the abstract result for expectiles and quantiles. The lower-bound argument uses only the regularity conditions introduced below. The matching batch upper bounds later in the section additionally invoke the standard Lipschitz hypothesis from the online results in \cite{hu2025efficient}. Proofs that mirror their analogues in the mean multicalibration case are deferred to Appendix~\ref{sec:regular-properties}.

% The section has four parts. We first define property multicalibration and the regular families that support the lower-bound construction. We then state the abstract lower and upper results: the lower bound requires a regular family, while the upper bound uses the online hypothesis in \cite{hu2025efficient}. After that we explain the abstract lower-bound template by building the hard family inside a regular parameter interval. Finally, we verify that expectiles and bounded-density quantiles satisfy the required assumptions.

% The learner and sample-complexity definitions from Section~\ref{sec:preliminaries} extend verbatim with the label space $[0,1]$ replaced by an arbitrary outcome space $\mathcal{Y}$. Throughout this section we use that generalized version.

We now extend the mean-specific notions above to general scalar elicitable properties. This section introduces the property-specific multicalibration metrics used later, the regularity condition underlying our lower-bound template, and the concrete examples needed in the sequel. The general lower bound for regular properties is proved in Section~\ref{sec:lower-bounds}, while the corresponding upper bounds under the online hypothesis in \cite{hu2025efficient} are proved in Section~\ref{sec:upper-bounds}. Proofs of the example regularity verifications are deferred to Appendix~\ref{app:regular-verifications}.

The learner and minimax sample-complexity definitions from Definitions \ref{def:learner} and \ref{def:sample-complexity}, together with the restricted version in Remark \ref{rem:restricted-sample-complexity}, extend verbatim when the outcome space $[0,1]$ is replaced by an arbitrary label space $\mathcal{Y}$. Throughout this section we use that extension.

Fix an elicitable property $\Gamma$ with a bounded identification function
\[
\mathsf{V}:[0,1]\times \mathcal{Y}\to [-1,1].
\]
\begin{definition}[Property multicalibration \citep{noarov2023scope,hu2025efficient}]
For a distribution $P$ on $\mathcal{X}\times \mathcal{Y}$, a finitely supported randomized predictor $Q$, a prediction value $v\in V(Q)$, and a signed weight $w:\mathcal{X}\to[-1,1]$, define the property-specific signed bias by
\[
B_P^\Gamma(Q;v,w)
:=
\E\!\left[w(X)\,Q_X(\{v\})\,\mathsf{V}(v,Y)\right].
\]
For a binary group $g:\mathcal{X}\to\{0,1\}$, define its ECE-type error by
\[
\Err_P^\Gamma(Q;g)
:=
\sum_{v\in V(Q)} \left|B_P^\Gamma(Q;v,g)\right|,
\qquad
\MC_P^\Gamma(Q;G)
:=
\max_{g\in G}\Err_P^\Gamma(Q;g).
\]
Also define the weighted $L_p$ version, for every $p\ge 1$, by
\[
\pi_P(Q;v):=\E\!\left[Q_X(\{v\})\right],
\quad
\Err_P^{\Gamma,(p)}(Q;g)
:= \mkern-30mu
\sum_{\substack{v\in V(Q)\\ \pi_P(Q;v)>0}} \mkern-20mu
\frac{|B_P^\Gamma(Q;v,g)|^p}{\pi_P(Q;v)^{p-1}},
\quad
\MC_P^{\Gamma,(p)}(Q;G)
:=
\max_{g\in G}\Err_P^{\Gamma,(p)}(Q;g).
\]
\end{definition}
When $\Gamma$ is the mean property and $\mathsf{V}(v,y)=v-y$, the case $p=1$ recovers Definition~\ref{def:population-mean-multicalibration}.
\begin{remark}
For an elicitable property $\Gamma$, we write
\[
\mathcal{E}_{\Gamma}(P,G,Q):=\MC_P^\Gamma(Q;G),
\qquad
\SC_{\Gamma}:=\SC_{\mathcal{E}_{\Gamma}}.
\]
For the corresponding weighted $L_p$ metric, we write
\[
\mathcal{E}_{\Gamma,p}(P,G,Q):=\MC_P^{\Gamma,(p)}(Q;G),
\qquad
\SC_{\Gamma,p}:=\SC_{\mathcal{E}_{\Gamma,p}}.
\]
\end{remark}

The next definition packages the ingredients needed for our lower bound construction. We need a one-parameter family of conditional label distributions whose parameter really is the target property, whose expected identification signal points toward the true parameter with linear strength, and whose nearby parameters are statistically hard to distinguish.

\begin{definition}[Regular property]
\label{def:regular}
Fix a bounded scalar property $\Gamma$ with identification function~$V$.

Let $I_0\subset(0,1)$ be a nondegenerate closed interval. A one-parameter family
$\{D_t:t\in I_0\}$ is a
\emph{regularity witness} for $\Gamma$ on $I_0$ if, for some constants
$c_\Gamma,C_{\Gamma,\mathrm{KL}}>0$, the following conditions hold:
\begin{enumerate}
    \item for every $t\in I_0$, $\Gamma(D_t)=t$;
    \item for every $t\in I_0$ and every $v\in[0,1]$,
    \[
    (v-t)M_\Gamma(v,t)\ge c_\Gamma(v-t)^2,
    \qquad \text{where } M_\Gamma(v,t):=\Ebb_{Y\sim D_t}[V(v,Y)];
    \]
    \item for every $t,t'\in I_0$,
    \[
    \KL(D_t\,\|\,D_{t'})\le C_{\Gamma,\mathrm{KL}}(t-t')^2.
    \]
\end{enumerate}
$\Gamma$ is called \emph{regular} if it admits a regularity witness on
some nondegenerate closed $I_0\subset(0,1)$.
\end{definition}

Whenever $\{D_t:t\in I_0\}$ is a regularity witness, condition~(2) implies that
$M_\Gamma(v,t)$ has the same sign as $v-t$ and that
\[
c_\Gamma|v-t|\le |M_\Gamma(v,t)|
\qquad (v\in[0,1],\ t\in I_0).
\]

\begin{remark}
Each part of Definition~\ref{def:regular} plays a specific role in the proof. Condition (1) says that the scalar parameter $t$ genuinely indexes the property value we are trying to predict. Condition (2) is the property analogue of the mean identity $\E[v-Y\mid t]=v-t$: it says that the expected identification signal has the correct sign and is not too flat, so small property multicalibration forces predictions to be close to the true parameter. Condition (3) is the information-theoretic input needed for Fano's inequality: it says that changing the hidden parameter by $\delta$ changes the conditional law by only $O(\delta^2)$ in KL. In the examples below, means, expectiles, and bounded-density quantiles all fit this template.
% We will also say that an elicitable property $\Gamma$ is \emph{regular} if it admits at least one regular family in the sense of Definition~\ref{def:regular}.
\end{remark}

Next, we ascertain that our regularity definition is benign enough to include some fundamental distributional properties. The regularity proofs for these properties are deferred to
Appendix~\ref{app:regular-verifications}.
\begin{theorem}[Examples of Regular Properties]
\label{thm:regular-examples}
The following properties are regular:
\begin{enumerate}
    \item The mean property, with identification function
$V_{\mathrm{mean}}(v,y)=v-y$, admits a Bernoulli regularity witness on every
closed interval $I_0\subset(0,1)$; on $I_0=[1/4,3/4]$ one may take
$c_\Gamma=1$ and $C_{\Gamma,\mathrm{KL}}=16/3$.
    \item For every expectile level $\tau\in(0,1)$, with identification function
\[
V_\tau(v,y):=|\tau-\ind{y\le v}|(v-y),
\]
the expectile property admits a regularity witness on the fixed interval
$[1/4,3/4]$, and for every distribution $\nu$ on $[0,1]$ the map
\[
v\mapsto \Ebb_{Y\sim \nu}[V_\tau(v,Y)]
\]
is $1$-Lipschitz on $[0,1]$.
    \item For every quantile level $q\in(0,1)$, with identification function
\[
V_q(v,y):=\ind{y\le v}-q,
\]
there exist a constant $\Lambda_q>0$, a closed interval $I_q\subset(0,1)$,
and a constant $C_q<\infty$ such that the truncated-exponential family
$\{D_\lambda:\lambda\in[-\Lambda_q,\Lambda_q]\}$, reparameterized by its
$q$-quantile, is a regularity witness on $I_q$, and every distribution in
that witness family has density bounded above by $C_q$.
\end{enumerate}
\end{theorem}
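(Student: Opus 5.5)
Each of the three parts is a direct verification of conditions (1)--(3) of Definition~\ref{def:regular} for an explicit family. Condition (2) comes from a monotonicity or strong-monotonicity property of the expected identification function $M_\Gamma(v,t)$. Condition (3) comes from bounding the KL divergence by the $\chi^2$ divergence, $\KL(D_t\|D_{t'})\le \chi^2(D_t\|D_{t'})$, and then showing the family is smooth in its parameter.

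\emph{Mean.} Take $D_t=\Ber(t)$. Condition (1) is immediate. For condition (2), $M(v,t)=v-t$, so $(v-t)M=(v-t)^2$ and $c_\Gamma=1$. For condition (3), $\chi^2(\Ber(t)\|\Ber(t'))=(t-t')^2/(t'(1-t'))$. On $[1/4,3/4]$ the denominator is at least $3/16$, which gives $C_{\Gamma,\mathrm{KL}}=16/3$. The same computation works on any closed $I_0\subset(0,1)$, with the constant depending on $I_0$.

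\emph{Expectile.} First, the Lipschitz claim. For fixed $y$, the map $v\mapsto|\tau-\ind{y\le v}|(v-y)$ is continuous: at $v=y$ both one-sided expressions vanish. It is piecewise linear with slopes $\tau$ or $1-\tau$, both at most $1$. Hence it is nondecreasing and $1$-Lipschitz in $v$, and taking expectations preserves both properties. Moreover its slope is at least $\min(\tau,1-\tau)$, so $M(v,t)-M(t,t)\ge \min(\tau,1-\tau)(v-t)$ for $v\ge t$, and symmetrically for $v<t$. Since $M(t,t)=0$ exactly when $t$ is the expectile, condition (2) holds with $c_\Gamma=\min(\tau,1-\tau)$.

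For the witness, use the two-point family $D_t=(1-a_t)\delta_0+a_t\delta_1$. Its $\tau$-expectile $t$ solves $\tau a_t(1-t)=(1-\tau)(1-a_t)t$, which gives
\[
a_t=\frac{(1-\tau)t}{\tau(1-t)+(1-\tau)t}.
\]
This is smooth and strictly increasing in $t$. On $[1/4,3/4]$ it stays in a compact subset of $(0,1)$ and has bounded derivative. Then the Bernoulli $\chi^2$ bound gives condition (3) with a $\tau$-dependent constant.

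\emph{Quantile.} Use the truncated exponential density $f_\lambda(y)=\lambda e^{\lambda y}/(e^\lambda-1)$ on $[0,1]$, with $f_0\equiv1$. Its CDF $F_\lambda$ is smooth in $(\lambda,y)$, and $F_\lambda(y)$ is strictly decreasing in $\lambda$ for $y\in(0,1)$. So the $q$-quantile $t(\lambda)$ is a smooth, strictly increasing function of $\lambda$. By the implicit function theorem, $t'(\lambda)=-\partial_\lambda F/f_\lambda$, which is bounded away from $0$ on $[-\Lambda_q,\Lambda_q]$. Set $I_q=t([-\Lambda_q,\Lambda_q])$; for small enough $\Lambda_q$ it is a nondegenerate interval around $q$, contained in $(0,1)$.

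The densities are bounded above by $C_q=\Lambda_q e^{\Lambda_q}/(e^{\Lambda_q}-1)$ and below by a positive constant. For condition (2), $M(v,t)=F_\lambda(v)-q=F_\lambda(v)-F_\lambda(t)$, and the density lower bound gives $(v-t)M\ge(\min f_\lambda)(v-t)^2$. For condition (3), use $\chi^2(f_\lambda\|f_{\lambda'})\le \sup|\partial_\lambda f|^2(\lambda-\lambda')^2/\min f$. Then convert $\lambda$-differences to $t$-differences using the lower bound on $t'(\lambda)$.

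The main obstacle is this last step: establishing uniform control of the reparameterization, namely the lower bound on $t'(\lambda)$ and the containment $I_q\subset(0,1)$. I would handle it by computing $\partial_\lambda F_\lambda(y)$ explicitly and checking it is strictly negative on compact subsets of $(0,1)$. The needed bounds then follow by compactness, and together with the positive density floor this yields all constants.
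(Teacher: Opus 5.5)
Your proposal is correct and follows essentially the same route as the paper: it uses the same witnesses, namely $\Ber(t)$ for the mean, the identical Bernoulli family with success probability $p_\tau(t)=(1-\tau)t/(\tau+(1-2\tau)t)$ for expectiles, and the truncated exponential family for quantiles, and it checks conditions (1)--(3) in the same way. The differences are local. For the expectile sign condition you use the slope lower bound $\min\{\tau,1-\tau\}$, which holds for any distribution, where the paper computes $M=a_{\tau,t}(v-t)$ explicitly. For the quantile you use the implicit function theorem where the paper uses the closed-form $t_q(\lambda)$ with $t_q'(0)=q(1-q)/2$, and you bound the KL via $\KL\le\chi^2$ where the paper uses the exponential-family identity $\KL=\tfrac{A''(\xi)}{2}(\lambda-\lambda')^2\le\tfrac18(\lambda-\lambda')^2$.
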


\begin{proof}[Proof of Theorem~\ref{thm:regular-examples}]
The mean claim is Proposition~\ref{prop:mean-regular-app}, the expectile claim is
Proposition~\ref{prop:expectile-regular-app}, and the quantile claim is
Proposition~\ref{prop:quantile-regular-app}.
\end{proof}

% \begin{definition}[Regular one-parameter family]
% \label{def:regular-family}
% Fix a nondegenerate interval
% \[
% I=[a,b]\subset (0,1).
% \]
% We call a family
% \[
% (\Gamma,\mathsf{V},\{D_t:t\in I\})
% \]
% regular if there are constants $c_\Gamma,C_{\Gamma,\mathrm{KL}}>0$ such that:
% \begin{enumerate}
%     \item $\Gamma(D_t)=t$ for every $t\in I$;
%     \item if
%     \[
%     M_\Gamma(v,t):=\E_{Y\sim D_t}\!\left[\mathsf{V}(v,Y)\right],
%     \]
%     then for every $v\in[0,1]$ and every $t\in I$,
%     \[
%     (v-t)\,M_\Gamma(v,t)\ge c_\Gamma (v-t)^2;
%     \]
%     \item for every $t,t'\in I$,
%     \[
%     \KL(D_t\,\|\,D_{t'})\le C_{\Gamma,\mathrm{KL}}(t-t')^2.
%     \]
% \end{enumerate}
% The second condition implies both that $M_\Gamma(v,t)$ has the same sign as $v-t$ and that
% \[
% c_\Gamma |v-t|\le |M_\Gamma(v,t)|
% \qquad
% (v\in[0,1],\ t\in I).
% \]
% \end{definition}

\section{Lower Bounds}
\label{sec:lower-bounds}

We now provide our general sample complexity lower bound for multicalibration. This bound holds not only for classical --- mean ECE --- multicalibration, but in fact for all regular properties $\Gamma$ and all $L_p$-multicalibration metrics for $p \in [1, 2]$. We now formally state our lower bound as the following Theorem~\ref{thm:lower-regular}, which we prove in the rest of this Section.

\begin{theorem}[General Multicalibration Lower Bound]
\label{thm:lower-regular}
Fix a regular property $\Gamma$, and fix $p\in[1,2]$. Then for small enough $\varepsilon>0$, there exists a parameter $m=\Theta\!\left(\frac{(1/\varepsilon)^{1/p}}{\log(1/\varepsilon)}\right)$ and:
\begin{itemize}
    \item a family $G_m$ of binary groups on $[m]$ with
    \[
    |G_m|=O\!\left(\log^4(1/\varepsilon)\right),
    \]
    \item and a finite family $\mathcal H_m^\Gamma$ of distributions on
    $[m]\times[0,1]$,
\end{itemize}
such that the following holds. 

Consider any possibly randomized learner whose input consists of: (1) the group family $G_m$,
and (2) $n$ i.i.d.\ samples from an unknown distribution
$P\in\mathcal H_m^\Gamma$, and whose output is a finitely supported randomized predictor
$Q$. Then, to ensure that $Q$ satisfies
\[
\Pbb\bigl(\MC_P^{\Gamma,(p)}(Q;G_m)\le \varepsilon\bigr)\ge \frac23
\qquad\text{for every }P\in\mathcal H_m^\Gamma,
\]
the learner's sample size must be at least\footnote{Here and in what follows, we may endow our asymptotic notation (such as $\Omega$) with a property subscript $\Gamma$ to highlight hidden dependencies on property-specific regularity constants.}:
\[
n
=\Omega_\Gamma\!\left(\frac{1}{\varepsilon^{3/p}\log^3(1/\varepsilon)}\right)
=\widetilde\Omega_\Gamma(\varepsilon^{-3/p}).
\]
\end{theorem}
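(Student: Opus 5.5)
The plan follows the three-step template of the overview, and only the hard family depends on $\Gamma$. Fix a regularity witness $\{D_t:t\in I_0\}$ with constants $c_\Gamma,C_{\Gamma,\mathrm{KL}}$.

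\textbf{Construction.} First, by a Gilbert--Varshamov-type probabilistic argument, take a binary code $\Theta_m\subseteq\{0,1\}^{m}$ with $\log|\Theta_m|=\Omega(m)$ and pairwise Hamming distance $\Omega(m)$. To each $\theta$ associate a monotone staircase $t_\theta:[m]\to I_0$. Write $I_0=[a,b]$ and $h=(b-a)/(2m)$. Set
\[
t_\theta(i)=a+2h(i-1)+h\,\theta_i .
\]
This map is nondecreasing in $i$, its values are spaced $\Theta(1/m)$ apart, and each bit $\theta_i$ shifts one level by $h$. Let $P_\theta$ be the law with $X\sim\Unif[m]$ and $Y\mid X=i\sim D_{t_\theta(i)}$, and put $\mathcal H_m^\Gamma=\{P_\theta:\theta\in\Theta_m\}$.

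Second, build $G_m$ by decomposing every prefix $[1,j]$ into $O(\log m)$ dyadic blocks. Each dyadic level's indicator pattern is encoded by a short low-correlation $\pm1$ code of length $O(\mathrm{polylog}\, m)$, with coordinates turned into binary groups via $s=2g-1$. The result is that every threshold sign function $\sigma_j(i)=\mathrm{sign}(i-j)$ can be written as $\sum_g \alpha_g g$ up to a small additive error, with coefficients satisfying $\sum|\alpha_g|=O(\mathrm{polylog}\,m)$. Then $|G_m|=O(\log^4 m)$.

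\textbf{Step 1 (multicalibration $\Rightarrow$ accuracy).} Since $t_\theta$ is monotone, for each prediction value $v$ the sign of $M_\Gamma(v,t_\theta(i))$ in $i$ is a threshold pattern, by condition (2). Hence
\[
\sum_v \E\bigl[Q_X(\{v\})\,|M_\Gamma(v,t_\theta(X))|\bigr]=\sum_v B_P^\Gamma(Q;v,\sigma_{j(v)}) .
\]
Expanding $\sigma_{j(v)}$ in the groups and applying the triangle inequality bounds this by $(\sum|\alpha|)\cdot\MC_P^\Gamma(Q;G_m)$ plus the approximation error. Condition (2) then gives
\[
c_\Gamma\,\Delta\le \mathrm{polylog}(m)\,\MC^\Gamma+\text{small}.
\]
For $p\in(1,2]$, H\"older's inequality applied bucketwise with weights $\pi(v)$ gives $\sum_v|B|\le(\sum_v|B|^p/\pi^{p-1})^{1/p}$. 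So $\MC^\Gamma\le(\MC^{\Gamma,(p)})^{1/p}\le\varepsilon^{1/p}$.

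\textbf{Step 2 (accuracy $\Rightarrow$ decoding).} For $\theta\ne\theta'$,
\[
\frac1m\sum_i|t_\theta(i)-t_{\theta'}(i)|=h\cdot\frac{d_H(\theta,\theta')}{m}=\Omega(1/m).
\]
Choose $m$ so that $\mathrm{polylog}(m)\,\varepsilon^{1/p}\le c/m$; this gives $m=\Theta(\varepsilon^{-1/p}/\log(1/\varepsilon))$. A predictor $Q$ with $\Delta<c/(2m)$ identifies $\theta$: take the $\theta'$ minimizing $\E_X\int|v-t_{\theta'}(X)|\,Q_X(dv)$. By the triangle inequality this minimizer is unique and equals $\theta$. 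Because the criterion depends only on $Q$ and on known maps, it is a valid estimator.

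\textbf{Step 3 (Fano).} By condition (3),
\[
\KL(P_\theta\|P_{\theta'})=\frac1m\sum_i\KL\bigl(D_{t_\theta(i)}\|D_{t_{\theta'}(i)}\bigr)\le C_{\Gamma,\mathrm{KL}}\,h^2=O(1/m^2).
\]
Fano's inequality with success probability $2/3$ then forces
\[
n\cdot O(m^{-2})\gtrsim\log|\Theta_m|=\Omega(m), \qquad\text{i.e. } n=\Omega(m^3).
\]
Substituting $m$ yields $\Omega_\Gamma(\varepsilon^{-3/p}\log^{-3}(1/\varepsilon))$.

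\textbf{Main obstacle.} The hard part is Step 1: constructing $G_m$ so that all $m$ thresholds are approximated by only polylogarithmically many groups, with total coefficient mass $O(\log m)$ and a uniform approximation error small enough not to swamp the $1/m$ separation. I would first try the dyadic decomposition with a random $\pm1$ code of length $L=O(\log^2 m)$ per level. Each block indicator is recovered as $\frac1L\sum_\ell c_\ell(\text{block})\,c_\ell(\cdot)$, and the cross-correlation error is controlled by a union bound. The leftover error is then charged using $|M_\Gamma|\le1$, after verifying that it can be made $o(1/m)$ by the choice of $L$.
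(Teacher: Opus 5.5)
There is a genuine gap in Step 1, in how you dispose of the threshold-approximation residual. You bound $\sum_v B_P^\Gamma(Q;v,e_{j(v)})$ additively by $\|e\|_\infty$ using $|M_\Gamma|\le 1$. You therefore need $\|e\|_\infty=O(1/m)$ (you say $o(1/m)$) to stay below the $\Theta(1/m)$ separation.

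No group family of polylogarithmic size can achieve that, whatever the code. Suppose every threshold sign is within $\delta$ of $\operatorname{span}(G_m\cup\{1\})$. Consecutive threshold signs differ by $2$ at a single point, so halving their differences puts every point indicator $\mathbf 1\{u=r\}$ within $\delta$ of that span. This gives an $m\times m$ matrix $B$ of rank at most $|G_m|+1$, with diagonal entries within $\delta$ of $1$ and off-diagonal entries at most $\delta$ in absolute value. Then $A=(B+B^\top)/2$ has rank at most $2(|G_m|+1)$. On the other hand, $\operatorname{rank}A\ge(\operatorname{tr}A)^2/\|A\|_F^2\gtrsim m^2/(m+m^2\delta^2)$, which is $\Omega(m)$ once $\delta=O(1/m)$. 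So your route forces $|G_m|=\Omega(m)=\widetilde\Omega(\varepsilon^{-1/p})$, contradicting the claimed $|G_m|=O(\log^4(1/\varepsilon))$. No code length $O(\mathrm{polylog}\,m)$ can make your ``verify it is $o(1/m)$'' step go through.

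The missing idea is that this residual is self-bounding, not additive. Let $s_v(i)$ be the sign of $v-t_\theta(i)$, let $e_v:=s_v-\widehat s_v$, and let $A_v:=B_P^\Gamma(Q;v,s_v)=\frac1m\sum_i Q_i(\{v\})\,|M_\Gamma(v,t_\theta(i))|$. Then $|B_P^\Gamma(Q;v,e_v)|\le\frac1m\sum_i|e_v(i)|\,Q_i(\{v\})\,|M_\Gamma(v,t_\theta(i))|\le\|e_v\|_\infty A_v$, because $e_v$ multiplies exactly the nonnegative weights that define $A_v$. So constant accuracy $\|e_v\|_\infty\le 1/4$ already gives $\frac34A_v\le|B_P^\Gamma(Q;v,\widehat s_v)|$, and the error is absorbed into the left-hand side. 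With a per-scale triangle inequality this needs only correlation $O(1/\log m)$ per scale, i.e.\ code length $O(\log^3 m)$, which yields $|G_m|=O(\log^4 m)$.

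Relatedly, the coefficient bound you must use is $\sum_g\sup_j|\alpha_g(j)|=O(\log m)$, uniform over the threshold index. Since $j=j(v)$ varies with $v$, you can only pull out $\sum_v|B_P^\Gamma(Q;v,g)|=\Err_P^\Gamma(Q;g)$ when the coefficient of $g$ is bounded independently of $v$. A per-threshold bound on $\sum_g|\alpha_g(j)|$ alone can cost an extra factor $|G_m|$, changing the power of $\log$ in $m$ and in $n$. The dyadic one-block-per-scale structure with coefficients $\pm1/k_h$ supplies the uniform bound.

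Your other ingredients are sound and essentially equivalent to the paper's: the one-bit-per-site staircase, the decoder minimizing $\Delta_{t_{\theta'}}(Q)$, the Fano step, and the H\"older reduction.
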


\begin{corollary}[Minimax lower bound for regular properties]
\label{cor:lower-regular-minimax}
For every regular property $\Gamma$, every $p\in[1,2]$, and every fixed
$\kappa>0$,
\[
\SC_{\Gamma,p}^{(\kappa)}(\varepsilon)=\widetilde\Omega(\varepsilon^{-3/p}).
\]
\end{corollary}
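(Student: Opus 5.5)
The corollary should follow directly from Theorem~\ref{thm:lower-regular}; all that remains is to check that the hard instance fits inside the polynomial group-size budget $M_\kappa$. Fix a regular $\Gamma$, $p\in[1,2]$ and $\kappa>0$. For $\varepsilon$ small enough, Theorem~\ref{thm:lower-regular} gives a domain $[m]$, a group family $G_m$ with $|G_m|\le C\log^4(1/\varepsilon)$, and a finite family $\mathcal H_m^\Gamma$ of distributions on $[m]\times[0,1]$. Since $\log^4(1/\varepsilon)=o(\varepsilon^{-\kappa})$ for every fixed $\kappa>0$, there is an $\varepsilon_0(\kappa,\Gamma)$ such that $C\log^4(1/\varepsilon)\le \lceil\varepsilon^{-\kappa}\rceil=M_\kappa(\varepsilon)$ for all $\varepsilon<\varepsilon_0$. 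For such $\varepsilon$, every pair $(P,G_m)$ with $P\in\mathcal H_m^\Gamma$ and $\mathcal X=[m]$ lies in the supremum defining $\SC_{\Gamma,p}^{(\kappa)}(\varepsilon)$.

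Next, fix any learner $\mathcal A$. Suppose for contradiction that $\sup_{P\in\mathcal H_m^\Gamma} n_{\mathcal A}(\varepsilon;P,G_m,\mathcal E_{\Gamma,p})<n^\star$, where $n^\star$ is the bound $c_\Gamma\,\varepsilon^{-3/p}\log^{-3}(1/\varepsilon)$ from the theorem. Because $\mathcal H_m^\Gamma$ is finite, the supremum is a maximum, so there is a single $n<n^\star$ that is at least every instance-wise sample complexity. One subtlety needs handling here: the definition of $n_{\mathcal A}$ is an infimum over $n$, and the success probability at sample size $n$ need not be monotone in $n$. Different $P$ could therefore have success sets that do not share a common $n$.

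The cleanest fix is to note that the proof of Theorem~\ref{thm:lower-regular} (the Fano argument via exact $\theta$-decoding) applies to any fixed sample size $n$ at which the learner $A_n$ succeeds with probability $\ge 2/3$ on all of $\mathcal H_m^\Gamma$. Equivalently, it shows that if $n<n^\star$ then some $P\in\mathcal H_m^\Gamma$ has $\Pbb(\MC_P^{\Gamma,(p)}(A_n(G_m,S,\zeta);G_m)\le\varepsilon)<2/3$. If one insists on using the theorem as a black box, the learner $\mathcal A$ can instead be replaced by a learner that, on input $n$ samples, simulates $A_{n'}$ on a prefix for a suitably chosen $n'\le n$. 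The issue then reduces to a per-$n$ statement, which is exactly what the theorem's proof yields.

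With that in hand, for every $\mathcal A$ some $P\in\mathcal H_m^\Gamma$ forces $n_{\mathcal A}(\varepsilon;P,G_m,\mathcal E_{\Gamma,p})\ge n^\star$. Taking the infimum over $\mathcal A$ gives $\SC_{\Gamma,p}^{(\kappa)}(\varepsilon)\ge n^\star=\widetilde\Omega(\varepsilon^{-3/p})$ for all $\varepsilon<\varepsilon_0$. The only step I expect to require care is the quantifier bookkeeping around the infimum in $n_{\mathcal A}$ just discussed; the group-budget inclusion is immediate.
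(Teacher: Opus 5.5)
Your budget check is exactly the paper's proof. The paper notes $|G_m|=O(\log^4(1/\varepsilon))=o(\varepsilon^{-\kappa})$ and appeals to Theorem~\ref{thm:lower-regular}; it never discusses how the theorem's fixed-sample-size conclusion controls the infimum in $n_{\mathcal A}$. The non-monotonicity issue you raise is real, but your resolution does not close it.

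Read contrapositively, the theorem gives: for every $n<n^\star$ there is some $P^{(n)}\in\mathcal H_m^\Gamma$ on which $A_n$ fails. To conclude $n_{\mathcal A}(\varepsilon;P,G_m,\mathcal E_{\Gamma,p})\ge n^\star$ you need a single $P$ on which $A_n$ fails for every $n<n^\star$. Your step ``with that in hand, for every $\mathcal A$ some $P$ forces\ldots'' silently turns $\forall n\,\exists P$ into $\exists P\,\forall n$, which is precisely the difficulty you had just identified. The prefix-simulation variant does not help either. The good $n'$ depends on the unknown $P$, so a deterministic rule $n\mapsto n'$ only relabels the same success sets. Choosing $n'$ at random destroys the $2/3$ success probability that the Fano step needs.

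A correct repair is to cover the code by success sets. Suppose $\max_{\theta\in\Theta_m} n_{\mathcal A}(\varepsilon;P_\theta^\Gamma,G_m,\mathcal E_{\Gamma,p})<n'$. For each $n<n'$, let $\Theta^{(n)}\subseteq\Theta_m$ be the set of $\theta$ on which $A_n$ succeeds with probability at least $2/3$; these sets cover $\Theta_m$. Now rerun the proof of Lemma~\ref{lem:fano-close} with the uniform prior on $\Theta^{(n)}$:
\begin{itemize}
\item Lemma~\ref{lem:lower-holder}, Proposition~\ref{prop:anticoarsen} and Lemma~\ref{lem:staircase-separation} still make the nearest-neighbour decoder into $\Theta_m$ exact on the success event.
\item Lemma~\ref{lem:pairwise-kl} still bounds the mutual information.
\end{itemize}
This gives $\log|\Theta^{(n)}|\le\tfrac32\bigl(\tfrac{C_{\Gamma,\mathrm{KL}}}{2}n\gamma_m^2+\log 2\bigr)$. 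Summing over $n<n'$,
\[
c_1m\le\log|\Theta_m|\le \log n'+\tfrac32\log 2+\tfrac{3C_{\Gamma,\mathrm{KL}}}{4}\,n'\gamma_m^2 .
\]
Take $n'=c\,m/\gamma_m^2$ with $c$ a small enough constant. Then $\log n'=O(\log m)$, and the right-hand side is below $c_1 m$ for large $m$, a contradiction. So every learner has $\max_\theta n_{\mathcal A}(\varepsilon;P_\theta^\Gamma,G_m,\mathcal E_{\Gamma,p})=\Omega_\Gamma(m/\gamma_m^2)$, the same order as in the theorem. With this in place, the rest of your argument goes through.
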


\begin{proof}
Fix $\kappa>0$. By Theorem~\ref{thm:lower-regular}, the hard family at accuracy $\varepsilon$ uses a group family of size
$|G_m|=O\!\left(\log^4(1/\varepsilon)\right)=o(\varepsilon^{-\kappa}).$
Hence the same hard instances witness the lower bound on the minimax
sample complexity.
\end{proof}

\subsection{Coding theory primitives}
\label{subsec:coding-theory}

Our sample complexity lower bound will rely on some simple coding theory results. In this subsection, we give a self-contained overview of the requisite notions, and state and prove the existence of two (families of) codes that will power our lower-bounding instance construction. We point out that the results in the subsection are folklore and presented for a fully self-contained exposition highlighting the simplicity and unified nature of the coding primitive needed for our hard instance --- both on the group family side and on the distribution family side. For a general overview of coding theory, we refer i.a.\ to monographs such as \cite{roth2006introduction}. For a deeper look at the specific themes associated with the below lemmas, including i.a.\ balanced codes and Gilbert-Varshamov theory, we refer the reader i.a.\ to the seminal paper of Ta-Shma~\cite{ta2017explicit}; however, our arguments do not require any of the complexity of more advanced coding theory.

Recall that a \emph{binary code} of \emph{block length} $k$ is a subset $\mathcal C\subseteq\{0,1\}^k$. The distance between two codewords $c, c' \in \mathcal{C}$ will be taken to be the \emph{Hamming distance}
\[
\distH(c,c'):=\bigl|\{t\in[k]:c(t)\neq c'(t)\}\bigr|.
\]
Rather than working with binary codes, it is often more convenient to deal with sign vectors $z\in\{\pm1\}^k$. These two viewpoints are equivalent via the conversion 
$\code{z}(t):=\frac{1-z(t)}{2},$
such that the Hamming distance becomes 
$\distH(\code{z},\code{z'})
=
\frac{k-\langle z,z'\rangle}{2}.$
Thus, small inner products among sign vectors are the same as large Hamming separation among the associated binary codewords.

We are now ready to prove a simple yet key fact about the existence of certain codes. It is a standard probabilistic method argument via Hoeffding's inequality \cite[Theorem~2]{Hoeffding1963}. Via two distinct instantiations, which we will present next, this fact will underlie both facets of our lower bounding instance: the hard groups construction \emph{and} the hard distribution family construction.

\begin{lemma}[Low-correlation codes]
\label{lem:codewords}
Let $N\ge 2$, let $\rho\in(0,1)$, and let $k$ be an integer satisfying
\[
k\ge 8\rho^{-2}\log(2N).
\]
Then there exist vectors $z_0,\dots,z_{N-1}\in\{\pm1\}^k$ such that for all distinct $a,b\in\{0,\dots,N-1\}$,
\[
\bigl|\langle z_a,z_b\rangle\bigr|\le \rho k.
\]
Equivalently, the associated binary codewords $\code{z_0},\dots,\code{z_{N-1}}\in\{0,1\}^k$ satisfy
\[
\distH(\code{z_a},\code{z_b})
=
\frac{k-\langle z_a,z_b\rangle}{2}
\in
\left[\frac{1-\rho}{2}k,\frac{1+\rho}{2}k\right]
\qquad (a\neq b).
\]
\end{lemma}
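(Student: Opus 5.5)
We will use the probabilistic method with independent uniformly random sign vectors. Draw $z_0,\dots,z_{N-1}$ independently and uniformly from $\{\pm1\}^k$. For a fixed pair $a\neq b$, write
\[
\langle z_a,z_b\rangle=\sum_{t=1}^k z_a(t)z_b(t).
\]
The products $z_a(t)z_b(t)$, $t\in[k]$, are i.i.d.\ uniform on $\{\pm1\}$: conditionally on $z_a$, each $z_b(t)$ is an independent fair sign, so multiplying by the fixed $z_a(t)$ preserves this. Hence $\langle z_a,z_b\rangle$ is a sum of $k$ independent mean-zero variables with range of length $2$. Hoeffding's inequality \cite[Theorem~2]{Hoeffding1963} gives
\[
\Pbb\bigl(|\langle z_a,z_b\rangle|>\rho k\bigr)\le 2\exp\!\left(-\frac{2(\rho k)^2}{k\cdot 2^2}\right)=2\exp(-\rho^2k/2).
\]

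Next we take a union bound over the $\binom{N}{2}<N^2/2$ unordered pairs. The probability that some pair violates the bound is at most
\[
N^2\exp(-\rho^2 k/2).
\]
Under the hypothesis $k\ge 8\rho^{-2}\log(2N)$, we have
\[
\rho^2k/2\ge 4\log(2N)>2\log N.
\]
So the failure probability is at most $N^2(2N)^{-4}<1$. Therefore some realization satisfies $|\langle z_a,z_b\rangle|\le\rho k$ for all $a\neq b$. The stated constant $8$ leaves slack; even $4\rho^{-2}\log(2N)$ would suffice. We will not optimize it.

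The equivalent Hamming statement follows from the identity
\[
\distH(\code{z},\code{z'})=\frac{k-\langle z,z'\rangle}{2}
\]
recorded just before the lemma. This identity holds because coordinates where the signs agree contribute $+1$ and coordinates where they disagree contribute $-1$, so $\langle z,z'\rangle=k-2\distH$. Substituting $|\langle z_a,z_b\rangle|\le\rho k$ yields
\[
\distH(\code{z_a},\code{z_b})\in\left[\frac{1-\rho}{2}k,\ \frac{1+\rho}{2}k\right].
\]

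No step is a real obstacle. The only point needing care is the independence of the coordinate products $z_a(t)z_b(t)$ for a fixed pair, which justifies applying Hoeffding's inequality. We handle it by conditioning on $z_a$ as above.
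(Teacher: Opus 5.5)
Your proposal is correct and follows the paper's proof essentially step for step. Both use independent uniform sign vectors, Hoeffding's bound $2\exp(-\rho^2k/2)$ for each pair, a union bound over fewer than $N^2/2$ pairs, and the identity $\distH(\code{z},\code{z'})=(k-\langle z,z'\rangle)/2$ for the Hamming reformulation.
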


\begin{proof}
Sample $Z_0,\dots,Z_{N-1}\in\{\pm1\}^k$ independently and uniformly at random. For fixed $a \neq b$,
\[
\langle Z_a,Z_b\rangle=\sum_{t=1}^k Z_a(t)Z_b(t)
\]
is a sum of $k$ independent Rademacher random variables. By Hoeffding's inequality,
\[
\Pbb\!\left(\bigl|\langle Z_a,Z_b\rangle\bigr|>\rho k\right)
\le 2\exp\!\left(-\frac{\rho^2k}{2}\right)
\le 2e^{-4\log(2N)}\le \frac{1}{8N^4}.
\]
There are fewer than $N^2/2$ unordered pairs $\{a,b\}$, so a union bound shows that with positive probability no pair violates the displayed correlation bound. Hence a desired realization exists.

The identity relating inner products and Hamming distance is
\[
\distH(\code{z_a},\code{z_b})
=
\sum_{t=1}^k\frac{1-z_a(t)z_b(t)}{2}
=
\frac{k-\langle z_a,z_b\rangle}{2},
\]
which immediately yields the equivalent binary-code formulation.
\end{proof}

Our first instantiation of the above lemma will provide a dyadic collection of codes that will be used in the definition of our hard group families (Section~\ref{subsec:hard-groups}). Namely, at all dyadic scales, it gives short sign signatures for dyadic blocks that are nearly orthogonal across distinct blocks. 

\begin{lemma}[Dyadic low-correlation code]
\label{lem:dyadic-signatures}
Let $m$ be a power of two, $L:=\log_2 m$, and
$\rho_m:=\frac{1}{8(1+L)}.$
For each scale $h\in\{0,1,\dots,L-1\}$, define
$n_h:=\frac{m}{2^h},
\;
k_h:=\left\lceil 8\rho_m^{-2}\log(2n_h)\right\rceil.$
Then there exist vectors
\[
z_0^{(h)},\dots,z_{n_h-1}^{(h)}\in\{\pm1\}^{k_h}
\text{ such that } 
\quad \bigl|\langle z_a^{(h)},z_b^{(h)}\rangle\bigr|\le \rho_m k_h 
\quad \text{ for all distinct $a,b\in\{0,\dots,n_h-1\}$.}
\]
\end{lemma}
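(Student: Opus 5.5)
The plan is to apply Lemma~\ref{lem:codewords} once for each scale $h\in\{0,1,\dots,L-1\}$, with $N:=n_h$, $\rho:=\rho_m$ and $k:=k_h$. The constructions at different scales need not interact, so each scale is handled independently and the resulting families are simply collected.

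We first check the hypotheses of Lemma~\ref{lem:codewords}. Since $h\le L-1$, we have $n_h=m/2^h\ge 2$, so $N\ge 2$ holds. Next, $\rho_m=\frac{1}{8(1+L)}$ lies in $(0,1)$ because $L\ge 0$. Finally, $k_h=\lceil 8\rho_m^{-2}\log(2n_h)\rceil$ is an integer and satisfies $k_h\ge 8\rho_m^{-2}\log(2n_h)$ by the definition of the ceiling.

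Lemma~\ref{lem:codewords} then produces vectors $z^{(h)}_0,\dots,z^{(h)}_{n_h-1}\in\{\pm1\}^{k_h}$ with $|\langle z_a^{(h)},z_b^{(h)}\rangle|\le\rho_m k_h$ for all distinct $a,b$. This is exactly the claimed property at scale $h$. Doing this for every $h$ completes the proof.

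There is no real obstacle here. The only point needing care is the edge condition $N\ge 2$, and that is guaranteed by restricting to $h\le L-1$, which is presumably why the top scale $h=L$ (where $n_h=1$) is excluded. If one wanted it, the case $n_h=1$ is vacuous anyway, since there are no distinct pairs.
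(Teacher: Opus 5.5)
Your proposal is correct and is exactly the paper's argument: apply Lemma~\ref{lem:codewords} separately at each scale $h$ with $N=n_h$, $\rho=\rho_m$, $k=k_h$. Your explicit checks of the hypotheses ($n_h\ge 2$ because $h\le L-1$, $\rho_m\in(0,1)$, and $k_h\ge 8\rho_m^{-2}\log(2n_h)$ by the ceiling) are details the paper leaves implicit.
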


\begin{proof}
Apply Lemma~\ref{lem:codewords} with $N=n_h$, $\rho=\rho_m$, and $k=k_h$ separately for each scale $h$.
\end{proof}

The second instantiation of Lemma~\ref{lem:codewords} guarantees the existence of a \emph{packing code} that will be a key ingredient in defining our hard data distribution instances (Section~\ref{subsec:hard-distributions}).

\begin{lemma}[Packing code]
\label{lem:packing}
There exists a universal constant $c_{\mathrm{pack}}>0$ such that for every sufficiently large integer $d$ there is a set
$
\Theta_d\subseteq\{0,1\}^d
$
with the following properties:
(1) every two distinct elements of $\Theta_d$ have Hamming distance at least $\lfloor d/8\rfloor$; 
and (2) the size is $ \log |\Theta_d|\ge c_{\mathrm{pack}}\,d.$
\end{lemma}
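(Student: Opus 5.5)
We derive Lemma~\ref{lem:packing} from Lemma~\ref{lem:codewords} by a direct instantiation, as the paper's remark suggests. Take $\rho=3/4$ and block length $k=d$. We then choose the number of codewords $N$ as large as Lemma~\ref{lem:codewords} allows. Its hypothesis $k\ge 8\rho^{-2}\log(2N)$ reads $d\ge \tfrac{128}{9}\log(2N)$, so we set
\[
N:=\left\lfloor \tfrac12\exp\!\left(\tfrac{9d}{128}\right)\right\rfloor .
\]
For $d$ large enough this gives $N\ge 2$, and the hypothesis then holds because $\log(2N)\le 9d/128$.

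Lemma~\ref{lem:codewords} now provides sign vectors $z_0,\dots,z_{N-1}\in\{\pm1\}^d$ with pairwise inner products bounded by $\tfrac34 d$ in absolute value. The associated binary codewords are therefore pairwise at Hamming distance at least $\frac{1-\rho}{2}d=d/8\ge\lfloor d/8\rfloor$. Since the inner products are strictly below $d$, distinct indices give distinct vectors. We define $\Theta_d:=\{\code{z_a}:0\le a<N\}$, which gives property (1).

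For property (2), note that $|\Theta_d|=N\ge \tfrac14\exp(9d/128)$ once $\tfrac12\exp(9d/128)\ge 2$, since $\lfloor x\rfloor\ge x/2$ for $x\ge1$. Hence
\[
\log|\Theta_d|\ \ge\ \tfrac{9d}{128}-\log 4\ \ge\ \tfrac{9d}{256}
\]
for all sufficiently large $d$, so we may take $c_{\mathrm{pack}}=9/256$.

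The only delicate point is bookkeeping: we must check that the floor in $N$ and the requirement $N\ge2$ in Lemma~\ref{lem:codewords} are compatible with a linear-in-$d$ lower bound on $\log N$. Both are absorbed by the phrase ``sufficiently large $d$''. If a cleaner constant were wanted, one could instead run a greedy Gilbert--Varshamov argument using a Hamming-ball volume bound, but the instantiation above suffices.
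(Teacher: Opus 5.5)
Your proposal is correct and matches the paper's proof essentially verbatim. Both instantiate Lemma~\ref{lem:codewords} with $\rho=3/4$, $k=d$ and $N=\lfloor \tfrac12 e^{9d/128}\rfloor$, and both conclude $\log|\Theta_d|\ge 9d/128-\log 4\ge 9d/256$. Your remark that inner products strictly below $d$ force the codewords to be distinct, so that $|\Theta_d|=N$, is a small step the paper leaves implicit.
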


\begin{proof}
Fix $\rho_\star:=3/4$ and let
$
N_d:=\left\lfloor \frac12 \exp\!\left(\frac{9d}{128}\right)\right\rfloor.
$
For all $d$ large enough, $N_d\ge 2$ and
$d\ge 8\rho_\star^{-2}\log(2N_d).$
Therefore, by Lemma~\ref{lem:codewords}, there exists a set of vectors
\[
z_0,\dots,z_{N_d-1}\in\{\pm1\}^d \text{ such that } \bigl|\langle z_a,z_b\rangle\bigr|\le \rho_\star d
\qquad (a\neq b).
\]
Let
\[
\Theta_d:=\{\code{z_a}:0\le a\le N_d-1\}\subseteq\{0,1\}^d.
\]
Then for every distinct $a,b$ (using in the implication that Hamming distance is integer-valued):
\[
\distH(\code{z_a},\code{z_b})
=
\frac{d-\langle z_a,z_b\rangle}{2}
\ge
\frac{1-\rho_\star}{2}d
=
\frac{d}{8}
\implies
\distH(\code{z_a},\code{z_b})\ge \left\lfloor \frac{d}{8}\right\rfloor.
\]
Also, once $d$ is large enough that $\frac12 e^{9d/128}\ge 2$, we have
$
N_d=\left\lfloor \frac12 e^{9d/128}\right\rfloor\ge \frac14 e^{9d/128}
$
and hence
\[
\log |\Theta_d|=\log N_d\ge \frac{9d}{128}-\log 4  \ge \frac{9}{256} d. \qedhere
\]
\end{proof}

\subsection{The hard instance: compressed groups and staircase distributions}

In this subsection, we will describe the two components of our lower-bound instances, whose complexity will be parameterized by an integer  $m$ (which we will take as a power of two). Namely, Section~\ref{subsec:hard-groups} presents our hard group family $G_m$ in Definition~\ref{def:hard-groups}, while Section~\ref{subsec:hard-distributions} constructs the (property-$\Gamma$-dependent) hard data distribution family $\mathcal H_m^\Gamma$ in Definition~\ref{def:hard-family}.

The coding primitives that we just discussed will underlie both of these constructions and their properties: Lemma~\ref{lem:dyadic-signatures} supplies the
short dyadic block signatures from which we build our compressed group family
$G_m$, while Lemma~\ref{lem:packing} supplies the packing code $\ThetaCode$
that indexes the family of hard distributions in $\mathcal H_m^\Gamma$.

\subsubsection{Hard group family \texorpdfstring{$G_m$}{Gm}}
\label{subsec:hard-groups}

We will now introduce our $m$-indexed hard group family $G_m$, and next establish a key combinatorial property of $G_m$: that despite its polylogarithmic size in $m$, it succinctly captures a large (of size proportional to $m$) collection of threshold functions. Our construction is dyadic in nature, and is inspired by~\cite{gopalan2024omnipredictors}.

\begin{definition}[Hard group family $G_m$]
\label{def:hard-groups}
Let $m\ge 16$ be a power of two and $L:=\log_2 m$. Let
\[
\rho_m:=\frac{1}{8(1+L)},
\quad \text{and define }
n_h:=\frac{m}{2^h},
\:
k_h:=\left\lceil 8\rho_m^{-2}\log(2n_h)\right\rceil
\, \text{for all scales $h\in\{0,\dots,L-1\}$. }
\]
With respect to these dyadic parameters, use Lemma~\ref{lem:dyadic-signatures} to supply a low-correlation set of vectors:
\[
z_0^{(h)},\dots,z_{n_h-1}^{(h)}\in\{\pm1\}^{k_h}.
\]
For each $h\in\{0,\dots,L-1\}$ and
$q\in\{1,\dots,k_h\}$, define
\[
\sigma_{h,q}:\{0,1,\dots,m-1\}\to\{\pm1\}
\]
by setting $\sigma_{h,q}(u)=z_a^{(h)}(q)$ on the dyadic block
$[a2^h,(a+1)2^h)$. Define signed probes on $i \in [m]$:
\[
w_0(i):=1,
\qquad
w_{h,q}(i):=\sigma_{h,q}(i-1).
\]
Now, to ensure all groups are binary, split each signed probe into two half-groups:
\[
g_{\mathrm{all}}(i):=1,
\qquad
g_{h,q,+}(i):=\frac{1+w_{h,q}(i)}{2},
\qquad
g_{h,q,-}(i):=\frac{1-w_{h,q}(i)}{2}.
\]
Then, the compressed dyadic sign group family at scale $m$ is defined as:
\[
G_m:=\{g_{\mathrm{all}}\}\cup\{g_{h,q,+},g_{h,q,-}:0\le h\le L-1,\ 1\le q\le k_h\}.
\]
\end{definition}

\begin{observation}[Group family size is polylogarithmic]
\label{obs:group-size}
Because $k_h=O((1+L)^3)$ for all $h$, there is a universal constant
$C_G>0$ such that
\[
|G_m|
=
1+2\sum_{h=0}^{L-1}k_h
\le
C_G(1+\log_2 m)^4.
\]
\end{observation}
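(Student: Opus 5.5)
We need to prove Observation~\ref{obs:group-size}: $|G_m| = 1+2\sum_h k_h \le C_G(1+\log_2 m)^4$.

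The argument is direct counting followed by a uniform bound on each $k_h$.

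\emph{Counting.} By Definition~\ref{def:hard-groups}, $G_m$ consists of $g_{\mathrm{all}}$ together with the two half-groups $g_{h,q,+}$ and $g_{h,q,-}$ for each pair $(h,q)$ with $0\le h\le L-1$ and $1\le q\le k_h$. Counting these as formal labels gives exactly $1+2\sum_{h=0}^{L-1}k_h$ elements. If some of the functions happen to coincide as elements of $\{0,1\}^{[m]}$, the set is only smaller, so the inequality we need still holds. We will read the displayed equality as a count of labels.

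\emph{Bounding $k_h$.} Since $\rho_m^{-2}=64(1+L)^2$ and $n_h=m/2^h\le m=2^L$, we have
\[
\log(2n_h)\le (L+1)\log 2\le 1+L.
\]
Hence
\[
k_h\le 8\cdot 64(1+L)^2(1+L)+1\le 513(1+L)^3.
\]

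\emph{Summing.} There are $L$ scales, so
\[
|G_m|\le 1+2L\cdot 513(1+L)^3\le 1027(1+L)^4.
\]
With $L=\log_2 m$, we may take $C_G=1027$.

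There is no real obstacle here. The only point needing care is to use the uniform bound $n_h\le m$ rather than computing $\sum_h\log(2n_h)$ exactly. The exact sum is still $O(L^2)$, which gives the same $O(L^4)$ bound up to the constant.
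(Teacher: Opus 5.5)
Your proof is correct and follows the paper's reasoning. Like the paper, you count the groups as $1+2\sum_h k_h$, bound each $k_h\le 512(1+L)^3+1$ uniformly using $n_h\le m$, and multiply by the $L$ scales; you also supply the explicit constant $C_G=1027$. Your remark that coincidences among the functions $g_{h,q,\pm}$ and $g_{\mathrm{all}}$ (e.g.\ when some probe $\sigma_{h,q}$ is constant) can only shrink the set is a sensible refinement: the paper's displayed equality is best read as a count of labels.
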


We now establish a crucial property of the group family $G_m$: its powerful representational ability. Namely, we will now see that every threshold sign on the ordered domain $[m]$ can be approximated by a short linear combination of the
compressed probes from Definition~\ref{def:hard-groups}. The decomposition of a
prefix into disjoint dyadic intervals used below is a standard discrepancy theory tool; compare the standard reference by
Matou\v{s}ek \cite[Example~2.2]{Matousek1994Range}.

\begin{definition}[Threshold sign function]
For $r\in\{0,1,\dots,m\}$ define the threshold-sign function
\[
\tau_r(u):=
\begin{cases}
+1,&u\le r-1,\\
-1,&u\ge r,
\end{cases}
\qquad u\in\{0,1,\dots,m-1\}.
\]
\end{definition}

\begin{lemma}[Threshold function approximation]
\label{lem:threshold}
Let $m$ be a power of two, let $L:=\log_2 m$, and use the sign probes
$\sigma_{h,q}$ from Definition~\ref{def:hard-groups}. There exist families of
coefficients
\[
(\alpha_0(r))_{r=0}^m,
\qquad
(\alpha_{h,q}(r))_{r=0}^m
\quad (0\le h\le L-1,\ 1\le q\le k_h),
\]
such that, for every $r\in\{0,\dots,m\}$, the function
\[
\widehat\tau_r(u):=\alpha_0(r)+\sum_{h=0}^{L-1}\sum_{q=1}^{k_h}\alpha_{h,q}(r)\sigma_{h,q}(u)
\quad
\text{satisfies}
\quad
\|\tau_r-\widehat\tau_r\|_\infty\le \frac14.
\]
Moreover, the coefficients satisfy the following boundedness conditions:
\[
\sup_{r\in\{0,\dots,m\}} |\alpha_0(r)|\le 1,
\qquad
\sum_{h=0}^{L-1}\sum_{q=1}^{k_h}\sup_{r\in\{0,\dots,m\}} |\alpha_{h,q}(r)|\le 2L.
\]
\end{lemma}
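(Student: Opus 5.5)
Write the threshold sign as $\tau_r = 2\cdot\mathbf 1_{[0,r)} - 1$. The plan is to represent the prefix indicator $\mathbf 1_{[0,r)}$ exactly as a disjoint union of dyadic blocks. We then approximate each block indicator by a scaled sum of probes using the near-orthogonality of the codewords from Lemma~\ref{lem:dyadic-signatures}. Finally, we absorb the error into the $1/4$ budget.

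\textbf{Step 1 (dyadic decomposition).} Write $r$ in binary. For $r<m$, $[0,r)$ is the disjoint union of at most one dyadic block $[a2^h,(a+1)2^h)$ per scale $h\in\{0,\dots,L-1\}$. We take $a_h(r)=\lfloor r/2^{h+1}\rfloor\cdot 2$ whenever bit $h$ of $r$ is $1$. For $r=m$ the prefix is everything, and we handle it by setting $\alpha_0=1$ with all other coefficients zero (similarly $r=0$ gives $\alpha_0=-1$). Note that at most $L$ blocks appear, each at a distinct scale.

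\textbf{Step 2 (approximating a single block).} For a block $(h,a)$ define
\[
\phi_{h,a}(u):=\frac1{k_h}\sum_{q=1}^{k_h} z^{(h)}_a(q)\,\sigma_{h,q}(u).
\]
If $u$ lies in block $b$ at scale $h$, then $\sigma_{h,q}(u)=z^{(h)}_b(q)$. Hence $\phi_{h,a}(u)=\langle z_a^{(h)},z_b^{(h)}\rangle/k_h$, which equals $1$ if $b=a$ and has absolute value at most $\rho_m$ otherwise. So $\|\phi_{h,a}-\mathbf 1_{\text{block }a}\|_\infty\le\rho_m$.

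\textbf{Step 3 (assembly and error).} Set $\alpha_0(r)=-1$ and $\alpha_{h,q}(r)=\frac{2}{k_h}z^{(h)}_{a_h(r)}(q)$ for each scale $h$ used in the decomposition (zero otherwise). Then
\[
\widehat\tau_r = -1+2\sum_h \phi_{h,a_h(r)}.
\]
The sup-norm error is at most $2L\rho_m = \tfrac{2L}{8(1+L)}<\tfrac14$. Since $\alpha_0\in\{\pm1\}$, the bound $|\alpha_0(r)|\le1$ holds.

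\textbf{Step 4 (coefficient bound).} For each $(h,q)$ we have $\sup_r|\alpha_{h,q}(r)|\le 2/k_h$. Summing over $q$ gives $2$ per scale, and summing over $L$ scales gives $2L$. Note the sup is taken inside the sum; this is exactly why it helps that every coefficient at scale $h$ has the same magnitude regardless of $r$ and of which block is selected.

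\textbf{Main obstacle.} The only delicate point is keeping the accumulated error across the $L$ scales below $1/4$. The choice $\rho_m=1/(8(1+L))$ is designed precisely for this: the factor $2$ from $\tau_r=2\mathbf 1-1$ times $L$ blocks times $\rho_m$ yields at most $1/4$. The remaining bookkeeping concerns the endpoint cases $r\in\{0,m\}$ and the index shift $w_{h,q}(i)=\sigma_{h,q}(i-1)$.
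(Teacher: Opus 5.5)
Your proposal is correct and matches the paper's proof essentially step for step. Both decompose the prefix into at most one dyadic block per scale, approximate each block indicator by $\frac{1}{k_h}\sum_q z_a^{(h)}(q)\sigma_{h,q}$, handle $r\in\{0,m\}$ separately, set $\widehat\tau_r=2\widehat\chi_r-1$ to get error at most $2L\rho_m\le 1/4$, and obtain the coefficient bound from $\sup_r|\alpha_{h,q}(r)|\le 2/k_h$.
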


\begin{proof}
For each scale $h$ and block index $0 \leq a \leq n_h-1$, let
$I_{h,a}:=\{a2^h,\dots,(a+1)2^h-1\}$ and let
\[
\widetilde{\ind{\cdot\in I_{h,a}}}(u):=\frac{1}{k_h}\sum_{q=1}^{k_h} z_a^{(h)}(q)\sigma_{h,q}(u)
=\sum_{q=1}^{k_h}\beta_{h,a,q}\sigma_{h,q}(u),
\qquad
\beta_{h,a,q}:=\frac{z_a^{(h)}(q)}{k_h}.
\]
If $u\in I_{h,a}$ then $\widetilde{\ind{\cdot\in I_{h,a}}}(u)=1$, while if
$u\in I_{h,b}$ with $b\neq a$ then
$\bigl|\widetilde{\ind{\cdot\in I_{h,a}}}(u)\bigr|\le \rho_m$. Hence
\[
\bigl\|\ind{\cdot\in I_{h,a}}-\widetilde{\ind{\cdot\in I_{h,a}}}\bigr\|_\infty\le \rho_m,
\qquad
\sum_{q=1}^{k_h}|\beta_{h,a,q}|=1.
\]

For $r=0$ and $r=m$, set $\widehat\tau_0\equiv -1$ and $\widehat\tau_m\equiv 1$;
equivalently, take $\alpha_0(0)=-1$, $\alpha_0(m)=1$, and
$\alpha_{h,q}(0)=\alpha_{h,q}(m)=0$. These two cases already satisfy the claim.

Now fix $1\le r\le m-1$, and write
$r=\sum_{j=1}^t 2^{h_j}$ with $h_1>\cdots>h_t\ge 0$. Set
$s_1:=0$, $s_j:=\sum_{u=1}^{j-1}2^{h_u}$ for $j\ge 2$, and
$I_j:=\{s_j,\dots,s_j+2^{h_j}-1\}$. Then the intervals $I_1,\dots,I_t$ are
disjoint dyadic intervals whose union is $\{0,\dots,r-1\}$, and there is at
most one such interval at each scale. Define
\[
\widehat\chi_r:=\sum_{j=1}^t \widetilde{\ind{\cdot\in I_j}},
\qquad
\chi_r(u):=\ind{u<r}.
\]
Since $\chi_r=\sum_{j=1}^t \ind{\cdot\in I_j}$, we get
\[
\|\chi_r-\widehat\chi_r\|_\infty
\le \sum_{j=1}^t \bigl\|\ind{\cdot\in I_j}-\widetilde{\ind{\cdot\in I_j}}\bigr\|_\infty
\le t\rho_m\le L\rho_m.
\]
Collecting coefficients by scale gives
\[
\widehat\chi_r(u)=\sum_{h=0}^{L-1}\sum_{q=1}^{k_h}\beta_{h,q}(r)\sigma_{h,q}(u),
\]
where, because at most one interval of each scale appears,
$\sup_r |\beta_{h,q}(r)|\le 1/k_h$ and hence
\[
\sum_{q=1}^{k_h}\sup_r |\beta_{h,q}(r)|\le 1
\qquad (0\le h\le L-1).
\]
Summing over $h$ yields
\[
\sum_{h=0}^{L-1}\sum_{q=1}^{k_h}\sup_r |\beta_{h,q}(r)|\le L.
\]

Finally, for $1\le r\le m-1$, set
$\alpha_0(r):=-1$, $\alpha_{h,q}(r):=2\beta_{h,q}(r)$, and
$\widehat\tau_r:=2\widehat\chi_r-1$. Since $\tau_r=2\chi_r-1$,
\[
\|\tau_r-\widehat\tau_r\|_\infty
=2\|\chi_r-\widehat\chi_r\|_\infty
\le 2L\rho_m
\le \frac14.
\]
The coefficient bounds become
$\sup_r |\alpha_0(r)|\le 1$ and
$\sum_{h,q}\sup_r |\alpha_{h,q}(r)|\le 2L$, as required.
\end{proof}

\subsubsection{Hard distribution family}
\label{subsec:hard-distributions}

We now present our hard distribution family construction. Unlike our group family construction $G_m$, which is property-agnostic, the hard distribution families $\mathcal H_m^\Gamma$ will depend on the property $\Gamma$. 

Henceforth, as a useful piece of notation, write $U_m:=\Unif([m])$. At a high level, we build $\mathcal H_m^\Gamma$ from a collection of $\Theta_m$-indexed distributions, where $\Theta_m$ is an $m/2$-dimensional packing code obtained from Lemma~\ref{lem:packing}. Each constituent distribution corresponds to some $\theta \in \Theta_m$, and has $X$ sampled uniformly at random from $\mathcal{X} = [m]$, while the conditional label distributions $(Y | X = i)$, $i \in [m]$, are indexed by a ``parameter map'' --- a \emph{monotonically increasing} sequence of numbers whose jumps are encoded by $\theta$.

\begin{definition}[Hard distribution family $\mathcal H_m^\Gamma$]
\label{def:hard-family}
Consider any regular property $\Gamma$. Fix a regularity witness $\{D_t:t\in I_0\}$ for it, and a nondegenerate interval
$J=[a,b]\subseteq I_0$. 

Fix power-of-two $m\ge 16$. Let $d:=m/2$,
$\gamma_m:=\frac{b-a}{8m}.$
Let $\ThetaCode\subseteq\{0,1\}^d$ be any packing code whose existence has been established by
Lemma~\ref{lem:packing}. 

For each $\theta=(\theta_1,\dots,\theta_d)\in\ThetaCode$, define the \emph{staircase map} $t_\theta$ as:
\[
t_\theta(2j-1):=c_j,
\qquad
t_\theta(2j):=c_j+\gamma_m\theta_j,
\qquad j=1,\dots,d,
\]
where we define an increasing arithmetic progression
$c_j:=a+4\gamma_m(j-1) \: \text{for $j=1,\dots,d$.}$

Each staircase map $t_\theta$ gives rise to an associated distribution $P_\theta$, defined as:
\[
P_\theta^\Gamma:\quad
X\sim \Unif([m]),
\qquad
Y\mid X=i\sim D_{t_\theta(i)}.
\]
Then, the hard distribution family is defined as the collection of all $\Theta$-indexed distributions:
\[
\mathcal H_m^\Gamma:=\{P_\theta^\Gamma:\theta\in\ThetaCode\}.
\]
\end{definition}

To shed light on this construction, Figure~\ref{fig:staircase} illustrates the nature of the staircase map in the special case when $\Gamma$ is the distribution mean. In this case, in fact, the staircase map $t_\theta$ simply corresponds to the regression function $\mu_\theta$.

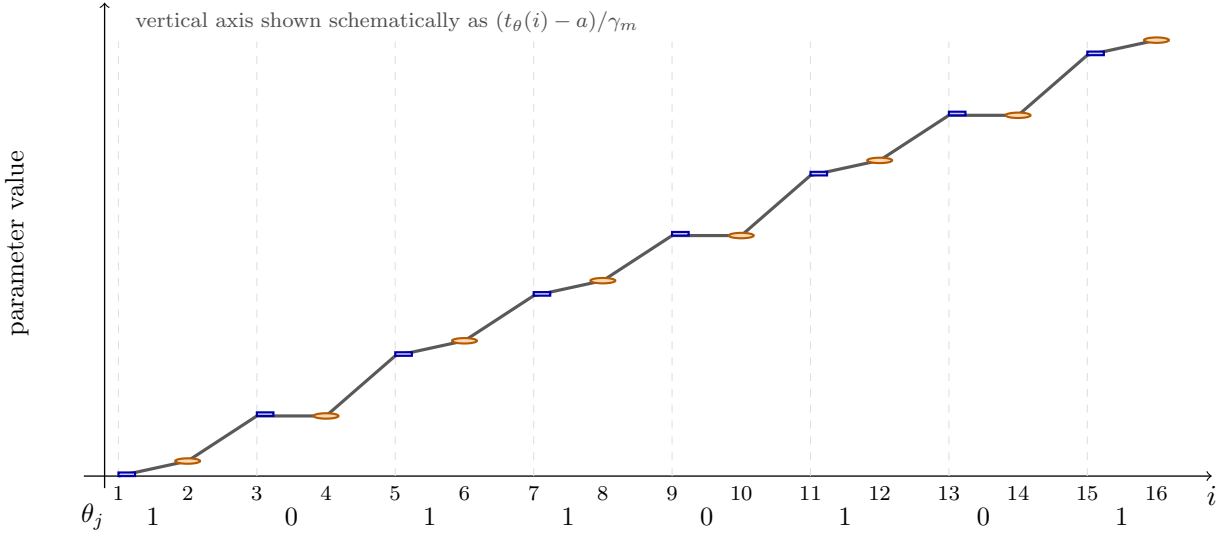
\begin{figure}[t]
\centering
\begin{tikzpicture}[x=0.92cm,y=0.2cm]
  \draw[->, line width=0.5pt] (0.5,0) -- (16.8,0) node[below] {$i$};
  \draw[->, line width=0.5pt] (0.8,-0.8) -- (0.8,31.5);
  \node[rotate=90, anchor=south, font=\small] at (-0.15,15.5) {parameter value};

  \foreach \x/\lab in {1/1,3/3,5/5,7/7,9/9,11/11,13/13,15/15} {
    \draw[gray!25, dashed] (\x,0) -- (\x,{28.9});
    \node[below] at (\x,0) {\scriptsize \lab};
  }
  \foreach \x/\lab in {2/2,4/4,6/6,8/8,10/10,12/12,14/14,16/16} {
    \node[below] at (\x,0) {\scriptsize \lab};
  }

  \draw[very thick, black!65]
    (1,0) -- (2,1) -- (3,4) -- (4,4) -- (5,8) -- (6,9) -- (7,12) -- (8,13)
    -- (9,16) -- (10,16) -- (11,20) -- (12,21) -- (13,24) -- (14,24)
    -- (15,28) -- (16,29);

  \foreach \x/\y in {1/0,3/4,5/8,7/12,9/16,11/20,13/24,15/28} {
    \filldraw[fill=blue!20, draw=blue!65!black, line width=0.8pt] (\x,\y) rectangle ++(0.24,0.24);
  }
  \foreach \x/\y in {2/1,4/4,6/9,8/13,10/16,12/21,14/24,16/29} {
    \filldraw[fill=orange!30, draw=orange!70!black, line width=0.8pt] (\x,\y) circle (0.18);
  }

  \node[anchor=west, font=\scriptsize, text=black!70] at (1.1,30.2) {vertical axis shown schematically as $(t_\theta(i)-a)/\gamma_m$};

  \node[anchor=east, font=\small] at (0.95,-2.7) {$\theta_j$};
  \foreach \x/\bit in {1.5/1,3.5/0,5.5/1,7.5/1,9.5/0,11.5/1,13.5/0,15.5/1} {
    \node[font=\small] at (\x,-2.7) {$\bit$};
  }
\end{tikzpicture}
\caption{A schematic staircase map. When specialized to the mean property, this is
exactly the regression function $\mu_\theta$. The odd indices (blue)
form a base ladder with spacing $4\gamma_m$, while each even index (orange) either stays flat or rises by $\gamma_m$ according to the corresponding
bit $\theta_j$.}
\label{fig:staircase}
\end{figure}

% \begin{observation}
% By Lemma~\ref{lem:packing}, every compressed staircase
% hard family satisfies
% \[
% \log |\ThetaCode|\ge c_1 m
% \qquad\text{with}\qquad
% c_1:=\frac{c_{\mathrm{pack}}}{2}.
% \]
% \end{observation}

We now prove a crucial property of the just defined hard distribution family: that it is well-separated. This fact is powered by the properties of the code $\Theta_m$ underlying the hard family.

\begin{lemma}[Separation in the staircase family]
\label{lem:staircase-separation}
For every distinct $\theta,\theta'\in\ThetaCode$,
\[
\lVert t_\theta-t_{\theta'}\rVert_{L_1(U_m)}\ge \frac{\gamma_m}{16}.
\]
\end{lemma}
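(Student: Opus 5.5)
We compute the $L_1(U_m)$ distance directly from the definition of the staircase maps, and then lower bound it with the Hamming separation supplied by the packing code of Lemma~\ref{lem:packing}.

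\textbf{Step 1: reduce to a Hamming distance.} Odd indices carry no information: $t_\theta(2j-1)=c_j=t_{\theta'}(2j-1)$ for every $j$, since the base ladder $c_j$ does not depend on $\theta$. At even indices,
\[
|t_\theta(2j)-t_{\theta'}(2j)|=\gamma_m|\theta_j-\theta'_j|.
\]
Averaging over the uniform distribution on $[m]$ therefore gives
\[
\lVert t_\theta-t_{\theta'}\rVert_{L_1(U_m)}=\frac{1}{m}\sum_{j=1}^{d}\gamma_m|\theta_j-\theta'_j|=\frac{\gamma_m}{m}\,\distH(\theta,\theta').
\]

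\textbf{Step 2: use the packing property.} By property (1) of Lemma~\ref{lem:packing}, applied with $d=m/2$, distinct codewords satisfy $\distH(\theta,\theta')\ge\lfloor d/8\rfloor=\lfloor m/16\rfloor$. Since $m\ge 16$ is a power of two, $m/16$ is an integer, so the floor is exactly $m/16$. Substituting,
\[
\lVert t_\theta-t_{\theta'}\rVert_{L_1(U_m)}\ge \frac{\gamma_m}{m}\cdot\frac{m}{16}=\frac{\gamma_m}{16}.
\]

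\textbf{Potential obstacle.} The only delicate point is applicability. Lemma~\ref{lem:packing} is stated for sufficiently large $d$, so we should either assume $m$ is large enough or read Definition~\ref{def:hard-family} as implicitly requiring it, since $\ThetaCode$ is taken from that lemma. Beyond this, the argument is a one-line computation with no real difficulty.
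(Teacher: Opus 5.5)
Your proposal is correct and matches the paper's proof essentially step for step. The maps agree at odd indices and differ by exactly $\gamma_m$ at each even index $2j$ with $\theta_j\neq\theta'_j$, which gives $\lVert t_\theta-t_{\theta'}\rVert_{L_1(U_m)}=\frac{\gamma_m}{m}\distH(\theta,\theta')$; the packing bound $\distH(\theta,\theta')\ge d/8=m/16$, with no floor loss since $d=m/2$ is divisible by $8$, then finishes the argument. The paper's proof also records that each $t_\theta$ is nondecreasing with values in $J$, which is used later but is not needed for this separation claim.
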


\begin{proof}
First note that every $t_\theta$ is nondecreasing and takes values in the selected interval $J$. Indeed, for
each $j$, we have
$t_\theta(2j-1)=c_j\le c_j+\gamma_m\theta_j=t_\theta(2j),$
and for each $j=1,\dots,d-1$, we also have
$t_\theta(2j)\le c_j+\gamma_m<c_j+4\gamma_m=c_{j+1}=t_\theta(2j+1).$
Also, all values lie in $J$ because
\[
t_\theta(m)\le c_d+\gamma_m=a+4\gamma_m(d-1)+\gamma_m=a+2\gamma_m m-3\gamma_m\le a+\frac{b-a}{4}<b.
\]
If $\theta$ and $\theta'$ differ in coordinate $j$, then $t_\theta$ and
$t_{\theta'}$ differ at the location $2j$, by exactly
$\gamma_m$. Thus
\[
\sum_{i=1}^m |t_\theta(i)-t_{\theta'}(i)|=\gamma_m\,\distH(\theta,\theta'),
\qquad
\lVert t_\theta-t_{\theta'}\rVert_{L_1(U_m)}=
\frac{\gamma_m}{m}\,\distH(\theta,\theta').
\]
Because $d=m/2$ is divisible by $8$ and $\ThetaCode$ comes from
Lemma~\ref{lem:packing}, every distinct $\theta,\theta'\in\ThetaCode$ satisfy
\[
\distH(\theta,\theta')\ge \frac{d}{8}=\frac{m}{16}.
\]
Substituting into the displayed identity gives the claimed separation.
\end{proof}

Next, we record that as a consequence of our regularity assumptions, same-size i.i.d.\ samples from any pair of distributions in the hard family must be pairwise close in KL divergence.

\begin{lemma}[Pairwise KL bound for the hard distribution family]
\label{lem:pairwise-kl}
The following $n$-i.i.d.-sample bound holds
for every $\theta,\theta'\in\ThetaCode$:
\[
\KL\bigl((P_\theta^\Gamma)^n\,\|\,(P_{\theta'}^\Gamma)^n\bigr)
\le \frac{C_{\Gamma,\mathrm{KL}}}{2}\,n\gamma_m^2.
\]
\end{lemma}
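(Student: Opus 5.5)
The plan is to reduce the bound to a single-sample computation and then average over the uniform context. First, by tensorization of KL divergence for product measures,
\[
\KL\bigl((P_\theta^\Gamma)^n\,\|\,(P_{\theta'}^\Gamma)^n\bigr)=n\,\KL\bigl(P_\theta^\Gamma\,\|\,P_{\theta'}^\Gamma\bigr),
\]
so it suffices to show $\KL(P_\theta^\Gamma\|P_{\theta'}^\Gamma)\le \frac{C_{\Gamma,\mathrm{KL}}}{2}\gamma_m^2$.

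Next, apply the chain rule for KL. Both distributions have the same $X$-marginal $U_m$, so the marginal term vanishes and
\[
\KL\bigl(P_\theta^\Gamma\|P_{\theta'}^\Gamma\bigr)=\frac1m\sum_{i=1}^m \KL\bigl(D_{t_\theta(i)}\,\|\,D_{t_{\theta'}(i)}\bigr).
\]
All staircase values lie in $J\subseteq I_0$; this was verified in the proof of Lemma~\ref{lem:staircase-separation}. Hence condition (3) of Definition~\ref{def:regular} applies to each term and gives $\KL(D_{t_\theta(i)}\|D_{t_{\theta'}(i)})\le C_{\Gamma,\mathrm{KL}}(t_\theta(i)-t_{\theta'}(i))^2$.

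Finally, count the nonzero differences. Odd indices satisfy $t_\theta(2j-1)=c_j=t_{\theta'}(2j-1)$, so they contribute nothing. At an even index $2j$ the difference is $\gamma_m|\theta_j-\theta'_j|\le\gamma_m$. At most $d=m/2$ indices therefore contribute, each at most $C_{\Gamma,\mathrm{KL}}\gamma_m^2$, so
\[
\KL\bigl(P_\theta^\Gamma\|P_{\theta'}^\Gamma\bigr)\le \frac1m\cdot\frac m2\cdot C_{\Gamma,\mathrm{KL}}\gamma_m^2=\frac{C_{\Gamma,\mathrm{KL}}}{2}\gamma_m^2.
\]
Multiplying by $n$ gives the claim.

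No step is a serious obstacle. The only points needing care are checking that every staircase value lies in $I_0$, which is needed to invoke regularity, and noticing that only even coordinates differ. The latter is what produces the factor $1/2$.
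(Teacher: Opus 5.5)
Your proposal is correct and follows the paper's proof essentially step for step: the paper conditions on the uniform context $X$ to get $\frac1m\sum_i \KL(D_{t_\theta(i)}\,\|\,D_{t_{\theta'}(i)})$, applies condition (3) of regularity, and bounds the at most $d=m/2$ differing even locations (each contributing $\gamma_m^2$) to get $\frac{C_{\Gamma,\mathrm{KL}}}{2}\gamma_m^2$ before tensorizing over the $n$ samples. Your explicit check that the staircase values lie in $J\subseteq I_0$ (so that regularity applies) is a detail the paper leaves implicit, but it relies on the same fact established in the proof of Lemma~\ref{lem:staircase-separation}.
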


\begin{proof}
First consider the case of a single ($n=1$) sample. Conditioning on $X$ and using regularity,
\[
\KL(P_\theta^\Gamma\,\|\,P_{\theta'}^\Gamma)
=
\frac1m\sum_{i=1}^m \KL\bigl(D_{t_\theta(i)}\,\|\,D_{t_{\theta'}(i)}\bigr)
\le
\frac{C_{\Gamma,\mathrm{KL}}}{m}\sum_{i=1}^m |t_\theta(i)-t_{\theta'}(i)|^2.
\]
The two parameter maps can differ only at even locations, and each such
location contributes $\gamma_m^2$. Therefore, since $\distH(\theta,\theta')\le d=m/2$, we have
\[
\KL(P_\theta^\Gamma\,\|\,P_{\theta'}^\Gamma)
\le
\frac{C_{\Gamma,\mathrm{KL}}}{m}\,\distH(\theta,\theta')\,\gamma_m^2
\le
\frac{C_{\Gamma,\mathrm{KL}}}{2}\,\gamma_m^2,
\]
The $n$-sample bound then follows by the
additivity of KL for product measures.
\end{proof}

\subsection{Multicalibration implies exact decoding}

Having defined the hard instance, consisting of the hard group family $G_m$ and the hard distribution family, we will now jointly leverage the structural properties of both. In a nutshell, we will now demonstrate that approximate $G_m$-multicalibration on any distribution $P_\theta$ coming from the hard distribution family $\mathcal H_m^\Gamma$ in fact results in \emph{exact recovery} of the hidden parameter $\theta \in \Theta_m$. 

As a stepping stone to this, we will first prove that a learner's \emph{prediction error} on the underlying distribution will be at most a $\log m$ factor away from the learner's $G_m$-multicalibration error. In other words, $G_m$-multicalibration implies low prediction error on instances that we have defined.

% In this subsection, we show that sufficiently small multicalibration error on the
% hard group family $G_m$ leaves so little room for coarsening that the hidden
% codeword $\theta$ --- which indexes the ground-truth distribution inside the hard
% family --- becomes recoverable.

To formalize this, fix a nondecreasing map $t:[m]\to I_0$. The reader may think of this map as
$t=t_\theta$, one of the staircase maps from
Definition~\ref{def:hard-family}, but the following proposition only, and crucially, uses the map's
monotonicity. 

Let $P_t$ denote the induced distribution $X\sim U_m, \:
(Y\mid X=i) \sim D_{t(i)},$
and write
$\MC_t^\Gamma(Q;G_m):=\MC_{P_t}^\Gamma(Q;G_m).$
Finally, recall (see Definition~\ref{def:prediction-error}) that for a predictor $Q=(Q_1,\dots,Q_m)$ on $[m]$, its
prediction error is
\[
\Delta_t(Q):=\frac1m\sum_{i=1}^m\int |v-t(i)|\,Q_i(dv).
\]

\begin{proposition}[Multicalibration implies low prediction error]
\label{prop:anticoarsen}
Let $t:[m]\to I_0$ be nondecreasing. Then every randomized predictor
$Q=(Q_1,\dots,Q_m)$ on $[m]$ with finite prediction support satisfies
\[
\Delta_t(Q)\le \frac{6(1+\log_2 m)}{c_\Gamma}\,\MC_t^\Gamma(Q;G_m).
\]
\end{proposition}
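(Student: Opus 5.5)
The plan is to linearize the prediction error using the sign structure that monotonicity of $t$ forces, and then to expand the resulting sign functions in the compressed probes via Lemma~\ref{lem:threshold}. First, by condition~(2) of Definition~\ref{def:regular}, $c_\Gamma|v-t(i)|\le |M_\Gamma(v,t(i))|$. Therefore
\[
c_\Gamma\,\Delta_t(Q)\le A:=\sum_{v\in V(Q)}\frac1m\sum_{i=1}^m Q_i(\{v\})\,|M_\Gamma(v,t(i))|.
\]
Second, conditioning on $X$ gives, for any signed weight $w$,
\[
B^\Gamma_{P_t}(Q;v,w)=\frac1m\sum_i w(i)\,Q_i(\{v\})\,M_\Gamma(v,t(i)).
\]
So all multicalibration quantities are linear functionals of the same array $Q_i(\{v\})M_\Gamma(v,t(i))$.

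The key structural step uses only that $t$ is nondecreasing. For each fixed $v$, the quantity $v-t(i)$ is nonincreasing in $i$. Hence there is an $r_v\in\{0,\dots,m\}$ with $v-t(i)\ge 0$ for $i\le r_v$ and $v-t(i)<0$ for $i>r_v$. Since $M_\Gamma(v,t(i))$ has the sign of $v-t(i)$ (and vanishes when $v=t(i)$), we obtain
\[
|M_\Gamma(v,t(i))|=\tau_{r_v}(i-1)\,M_\Gamma(v,t(i))
\]
for all $i$. Thus
\[
A=\sum_v \frac1m\sum_i \tau_{r_v}(i-1)\,Q_i(\{v\})\,M_\Gamma(v,t(i)).
\]
Now replace $\tau_{r_v}$ by $\widehat\tau_{r_v}$ from Lemma~\ref{lem:threshold}. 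Because $\|\tau_r-\widehat\tau_r\|_\infty\le 1/4$, the replacement changes the sum by at most $A/4$. This gives
\[
\tfrac34 A\le \sum_v \frac1m\sum_i \widehat\tau_{r_v}(i-1)\,Q_i(\{v\})\,M_\Gamma(v,t(i)).
\]

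Next, expand $\widehat\tau_{r_v}=\alpha_0(r_v)+\sum_{h,q}\alpha_{h,q}(r_v)\sigma_{h,q}$. The right-hand side then becomes
\[
\sum_v\Bigl[\alpha_0(r_v)\,B(v,g_{\mathrm{all}})+\sum_{h,q}\alpha_{h,q}(r_v)\,B(v,w_{h,q})\Bigr].
\]
Bounding each coefficient by its supremum over $r$ and using the triangle inequality gives at most
\[
\Err(Q;g_{\mathrm{all}})+\sum_{h,q}\sup_r|\alpha_{h,q}(r)|\,\Err(Q;w_{h,q}).
\]
Since $w_{h,q}=g_{h,q,+}-g_{h,q,-}$, linearity of $B$ in $w$ gives
\[
\Err(Q;w_{h,q})\le \Err(Q;g_{h,q,+})+\Err(Q;g_{h,q,-})\le 2\,\MC_t^\Gamma(Q;G_m).
\]
Combined with the coefficient bound $\sum_{h,q}\sup_r|\alpha_{h,q}|\le 2L$, this yields $\tfrac34A\le(1+4L)\MC$. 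Hence
\[
A\le \tfrac43(1+4L)\MC\le \tfrac{16}{3}(1+L)\MC\le 6(1+L)\MC,
\]
and dividing by $c_\Gamma$ gives the claim.

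The main point needing care is the sign identity: handling ties ($v=t(i)$, possibly on a run of indices) and choosing $r_v$ consistently so that a single threshold works for every $i$. Once that is in place, the rest is bookkeeping. The key observation is that taking the supremum of the coefficients over $r$ allows a group-wise error bound that does not depend on $v$.
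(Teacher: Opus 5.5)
Your proof is correct and follows essentially the same route as the paper. It uses the sign identity $\tau_{r(v)}(i-1)\,M_\Gamma(v,t(i))=|M_\Gamma(v,t(i))|$ with $r(v)=|\{i:t(i)\le v\}|$, where the vanishing of $M_\Gamma(t,t)$ comes from condition (1) together with the identification property. From there, the $1/4$-approximation of Lemma~\ref{lem:threshold} gives the factor $3/4$, and the coefficient bounds combined with $w_{h,q}=g_{h,q,+}-g_{h,q,-}$ give $(1+4L)\,\MC_t^\Gamma(Q;G_m)$. The only difference is cosmetic: the paper replaces $\tau$ by $\widehat\tau$ separately for each $v$ and passes to $|B_v(\widehat s_v)|$, whereas you aggregate over $v$ first and apply the triangle inequality afterward, which yields the same constant.
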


\begin{proof}
Write $L:=\log_2 m$. For each prediction value $v\in V(Q)$ and bounded signed
weight $w:[m]\to\Rbb$, define
\[
B_v(w):=\frac1m\sum_{i=1}^m w(i)Q_i(\{v\})M_\Gamma(v,t(i)),
\qquad
\Err(w):=\sum_{v'\in V(Q)}\bigl|B_{v'}(w)\bigr|.
\]
For binary groups $g:[m]\to\{0,1\}$, this notation agrees with the
population quantity under $P_t$:
\[
\Err(g)=\Err_{P_t}^\Gamma(Q;g).
\]

For each $v\in V(Q)$, let
$r(v):=|\{i\in[m]:t(i)\le v\}|$, define
$s_v(i):=\tau_{r(v)}(i-1)$ and $\widehat s_v(i):=\widehat\tau_{r(v)}(i-1)$, and note
that $s_v(i)$ is the sign of $v-t(i)$ because $t$ is nondecreasing. When
$v=t(i)$, regularity gives $M_\Gamma(v,t(i))=0$, so in all cases
\[
s_v(i)M_\Gamma(v,t(i))=|M_\Gamma(v,t(i))|\ge c_\Gamma |v-t(i)|.
\]
Hence, with
\[
A_v:=B_v(s_v)=\frac1m\sum_{i=1}^m Q_i(\{v\})|M_\Gamma(v,t(i))|,
\]
we have
\[
\Delta_t(Q)\le \frac{1}{c_\Gamma}\sum_{v\in V(Q)} A_v.
\]

Now let $e_v:=s_v-\widehat s_v$. By Lemma~\ref{lem:threshold},
$\|e_v\|_\infty\le 1/4$, so
\[
\bigl|B_v(e_v)\bigr|
\le \frac1m\sum_{i=1}^m |e_v(i)|Q_i(\{v\})|M_\Gamma(v,t(i))|
\le \frac14 A_v.
\]
Since $B_v(s_v)=B_v(\widehat s_v)+B_v(e_v)$,
this implies
$\frac34 A_v\le \bigl|B_v(\widehat s_v)\bigr|$.
Summing over $v$ gives
\[
\Delta_t(Q)
\le \frac{4}{3c_\Gamma}\sum_{v\in V(Q)} \bigl|B_v(\widehat s_v)\bigr|.
\]

Writing
\[
\widehat s_v(i)=\alpha_0(r(v))w_0(i)+\sum_{h=0}^{L-1}\sum_{q=1}^{k_h}\alpha_{h,q}(r(v))w_{h,q}(i),
\]
linearity of $B_v(\cdot)$ and the coefficient bounds from
Lemma~\ref{lem:threshold} give
\[
\sum_{v\in V(Q)} \bigl|B_v(\widehat s_v)\bigr|
\le \sup_r |\alpha_0(r)|\,\Err(w_0)
+\sum_{h=0}^{L-1}\sum_{q=1}^{k_h}\sup_r |\alpha_{h,q}(r)|\,\Err(w_{h,q}).
\]

Now $w_0=g_{\mathrm{all}}$, so
\[
\Err(w_0)=\Err_{P_t}^\Gamma(Q;g_{\mathrm{all}})\le \MC_t^\Gamma(Q;G_m).
\]
Also, for each $0\le h\le L-1$, each $1\le q\le k_h$, and each $v\in V(Q)$, we have
$w_{h,q}=g_{h,q,+}-g_{h,q,-}$ and hence
\[
B_v(w_{h,q})
=
B_v(g_{h,q,+})-B_v(g_{h,q,-}).
\]
Therefore
\[
\bigl|B_v(w_{h,q})\bigr|
\le
\bigl|B_v(g_{h,q,+})\bigr|
+
\bigl|B_v(g_{h,q,-})\bigr|,
\]
and summing over $v$ gives
\[
\Err(w_{h,q})
\le
\Err_{P_t}^\Gamma(Q;g_{h,q,+})
+
\Err_{P_t}^\Gamma(Q;g_{h,q,-})
\le 2\MC_t^\Gamma(Q;G_m).
\]

Thus, using again the coefficient bounds from Lemma~\ref{lem:threshold}, the
right-hand side is at most
$(1+4L)\MC_t^\Gamma(Q;G_m)$. Therefore, as claimed, we obtain
\[
\Delta_t(Q)
\le \frac{4}{3c_\Gamma}(1+4L)\MC_t^\Gamma(Q;G_m)
\le \frac{6(1+L)}{c_\Gamma}\MC_t^\Gamma(Q;G_m). \qedhere
\]
\end{proof}

Now, we are ready to show that on our hard instance, multicalibration in fact implies exact decoding. Indeed, as we just saw, a multicalibrated learner must have low prediction error on our instance. Now, a simple geometric argument leveraging the $L_1(U_m)$ well-separation property of the hard distribution family --- which we proved earlier in
Lemma~\ref{lem:staircase-separation} --- will be seen to turn low prediction error into exact recovery of the underlying distribution parameter $\theta \in \Theta_m$.

\begin{proposition}[Multicalibration implies exact decoding]
\label{prop:decode}
Given a finitely supported randomized predictor $Q=(Q_1,\dots,Q_m)$, define its mean prediction function by
\[
\bar f_Q(i):=\int v\,Q_i(dv),
\]
and consider any nearest-neighbor decoder in $L_1(U_m)$:
\[
\widehat\theta_\Gamma(Q)\in\arg\min_{\vartheta\in\ThetaCode}\lVert t_{\vartheta}-\bar f_Q\rVert_{L_1(U_m)}.
\]
Then,
\[
\MC_{t_\theta}^\Gamma(Q;G_m)\le \frac{c_\Gamma\gamma_m}{384(1+\log_2 m)} \quad \text{implies} \quad  \widehat\theta_\Gamma(Q)=\theta.
\]
\end{proposition}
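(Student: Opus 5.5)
The plan is to combine Proposition~\ref{prop:anticoarsen} with the separation Lemma~\ref{lem:staircase-separation} through a triangle-inequality argument. Fix the true parameter $\theta\in\ThetaCode$ and write $t=t_\theta$. As shown in the proof of Lemma~\ref{lem:staircase-separation}, $t_\theta$ is nondecreasing and takes values in $J\subseteq I_0$. Proposition~\ref{prop:anticoarsen} therefore applies and gives
\[
\Delta_{t_\theta}(Q)\le \frac{6(1+\log_2 m)}{c_\Gamma}\,\MC_{t_\theta}^\Gamma(Q;G_m)\le \frac{6\gamma_m}{384}=\frac{\gamma_m}{64}
\]
under the stated hypothesis.

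Next, I pass from the randomized prediction error to the deterministic mean prediction $\bar f_Q$. This is Jensen's inequality pointwise: for each $i$, $|\bar f_Q(i)-t_\theta(i)|=\bigl|\int (v-t_\theta(i))\,Q_i(dv)\bigr|\le\int|v-t_\theta(i)|\,Q_i(dv)$. Averaging over $i\sim U_m$ gives $\lVert \bar f_Q-t_\theta\rVert_{L_1(U_m)}\le\Delta_{t_\theta}(Q)\le\gamma_m/64$.

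Finally, I decode by nearest neighbor. Let $\widehat\theta=\widehat\theta_\Gamma(Q)$. By minimality, $\lVert t_{\widehat\theta}-\bar f_Q\rVert_{L_1(U_m)}\le\lVert t_\theta-\bar f_Q\rVert_{L_1(U_m)}\le\gamma_m/64$. The triangle inequality then gives $\lVert t_{\widehat\theta}-t_\theta\rVert_{L_1(U_m)}\le \gamma_m/32<\gamma_m/16$. If $\widehat\theta\neq\theta$, this would contradict Lemma~\ref{lem:staircase-separation}, so $\widehat\theta=\theta$. This holds for any choice of minimizer, so ties do not matter.

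None of these steps is a real obstacle. The heavy lifting is already done in Proposition~\ref{prop:anticoarsen}. The only points to check are that $t_\theta$ satisfies the hypotheses of that proposition (it is monotone and $I_0$-valued), and that the constant $384$ leaves enough slack: it gives $\gamma_m/64$, which doubles to $\gamma_m/32$ and stays below the separation $\gamma_m/16$.
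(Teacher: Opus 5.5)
Your proposal is correct and follows essentially the same route as the paper: Jensen's inequality plus Proposition~\ref{prop:anticoarsen} give $\lVert \bar f_Q - t_\theta\rVert_{L_1(U_m)} \le \gamma_m/64$, and Lemma~\ref{lem:staircase-separation} then forces exact decoding. The only difference is cosmetic. The paper shows directly that every $\theta'\neq\theta$ has $\lVert \bar f_Q - t_{\theta'}\rVert_{L_1(U_m)} \ge \gamma_m/16-\gamma_m/64 > \gamma_m/64$, whereas you bound any minimizer's distance to $t_\theta$ by $\gamma_m/32$ and argue by contradiction; the two are equivalent.
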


\begin{proof}
By Jensen's inequality,
\[
|\bar f_Q(i)-t_\theta(i)|\le \int |v-t_\theta(i)|\,Q_i(dv).
\]
Summing over $i$ gives
\[
\lVert\bar f_Q-t_\theta\rVert_{L_1(U_m)}\le \Delta_{t_\theta}(Q).
\]
By construction $t_\theta$ is nondecreasing, so Proposition~\ref{prop:anticoarsen} applies and gives
\[
\lVert\bar f_Q-t_\theta\rVert_{L_1(U_m)}
\le \frac{6(1+\log_2 m)}{c_\Gamma}\,\MC_{t_\theta}^\Gamma(Q;G_m)
\le \frac{\gamma_m}{64}.
\]
However, by Lemma~\ref{lem:staircase-separation}, for every $\theta'\neq \theta$, we have
$\lVert t_{\theta'}-t_\theta\rVert_{L_1(U_m)}\ge \frac{\gamma_m}{16},$
and therefore
\[
\lVert\bar f_Q-t_{\theta'}\rVert_{L_1(U_m)}
\ge \lVert t_{\theta'}-t_\theta\rVert_{L_1(U_m)}-\lVert\bar f_Q-t_\theta\rVert_{L_1(U_m)}
\ge \frac{\gamma_m}{16}-\frac{\gamma_m}{64}
>\frac{\gamma_m}{64}.
\]
So $t_\theta$ is the unique nearest codeword to $\bar f_Q$, and hence $\widehat\theta_\Gamma(Q)=\theta$.
\end{proof}

\subsection{Exact decoding has high sample complexity}

We have just seen that sufficiently multicalibrated learners must be exact decoders of the hidden distribution parameter $\theta \in \Theta_m$. We will now show that in order to exactly decode $\theta$, the learner's sample complexity must be large.

Specifically, recall the pairwise-(KL-)closeness property of the hard distribution family (Lemma~\ref{lem:pairwise-kl}). Applying Fano's inequality together with this property, we will demonstrate that exact decoding/learning of the family requires sample complexity on the order of $m/\gamma_m^2$.

\begin{lemma}[Exact decoding has high sample complexity]
\label{lem:fano-close}
Suppose a learner gets $n$ i.i.d.\ samples from an unknown distribution in
$\mathcal H_m^\Gamma$ and outputs predictors $Q_{S,\zeta}$ such that for every
$\theta\in\ThetaCode$,
\[
\Pbb_{S\sim (P_\theta^\Gamma)^n,\,\zeta}\!\left(
\lVert \bar f_{Q_{S,\zeta}}-t_\theta\rVert_{L_1(U_m)}\le \frac{\gamma_m}{64}
\right)\ge \frac23.
\]
Then, $n\ge c_{2,\Gamma}\,\frac{m}{\gamma_m^2},$ where  $c_{2,\Gamma}>0$ is a constant that depends only on $\Gamma$.
\end{lemma}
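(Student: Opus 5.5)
The plan is to convert the learner into an exact decoder of $\theta$ and then apply Fano's inequality with the pairwise KL bound of Lemma~\ref{lem:pairwise-kl}.

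\textbf{Step 1: a decoder with error at most $1/3$.} Given $(S,\zeta)$, form $Q_{S,\zeta}$ and output the nearest-neighbor codeword
\[
\widehat\theta:=\widehat\theta_\Gamma(Q_{S,\zeta})\in\arg\min_{\vartheta\in\ThetaCode}\lVert t_\vartheta-\bar f_{Q_{S,\zeta}}\rVert_{L_1(U_m)}.
\]
On the event $\lVert \bar f_{Q}-t_\theta\rVert_{L_1(U_m)}\le \gamma_m/64$, the triangle-inequality argument in the proof of Proposition~\ref{prop:decode} applies verbatim. It uses the separation $\gamma_m/16$ from Lemma~\ref{lem:staircase-separation}, and shows that every $\theta'\neq\theta$ lies at distance greater than $\gamma_m/64$ from $\bar f_Q$, so $\widehat\theta=\theta$. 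Hence, for every $\theta\in\ThetaCode$,
\[
\Pbb_{S\sim (P_\theta^\Gamma)^n,\,\zeta}(\widehat\theta\neq\theta)\le 1/3.
\]
The internal randomness $\zeta$ is harmless: Fano's inequality holds for randomized estimators, since $\zeta$ is independent of $\theta$ and we may treat $(S,\zeta)$ as the observation. By the chain rule, the mutual information is unchanged by including $\zeta$.

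\textbf{Step 2: Fano's inequality.} Put a uniform prior on $\theta\in\ThetaCode$. The mutual information between $\theta$ and $S$ is at most the average pairwise KL divergence, and by Lemma~\ref{lem:pairwise-kl}
\[
I(\theta;S)\le \max_{\theta,\theta'}\KL\bigl((P_\theta^\Gamma)^n\,\|\,(P_{\theta'}^\Gamma)^n\bigr)\le \tfrac{C_{\Gamma,\mathrm{KL}}}{2}\,n\gamma_m^2.
\]
Fano's inequality then gives
\[
\frac13\ge \Pbb(\widehat\theta\neq\theta)\ge 1-\frac{I(\theta;S)+\log 2}{\log|\ThetaCode|}.
\]
Rearranging,
\[
\tfrac{C_{\Gamma,\mathrm{KL}}}{2}\,n\gamma_m^2\ge \tfrac23\log|\ThetaCode|-\log 2.
\]

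\textbf{Step 3: size of the code.} By Lemma~\ref{lem:packing} with $d=m/2$, we have $\log|\ThetaCode|\ge c_{\mathrm{pack}}m/2$. For $m$ large enough, $\log 2\le \tfrac13\cdot c_{\mathrm{pack}}m/2$, so the right-hand side above is at least $c_{\mathrm{pack}}m/6$. This yields
\[
n\ge \frac{c_{\mathrm{pack}}}{3C_{\Gamma,\mathrm{KL}}}\cdot\frac{m}{\gamma_m^2},
\]
so one can take $c_{2,\Gamma}=c_{\mathrm{pack}}/(3C_{\Gamma,\mathrm{KL}})$, possibly shrunk further to cover the finitely many small $m$.

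\textbf{Main obstacle.} The only real subtlety is the small-$m$ regime, where the $\log 2$ term is not yet absorbed and Lemma~\ref{lem:packing} only holds for "sufficiently large $d$." There I would simply shrink the constant, using the trivial bound $n\ge1$ together with the fact that $m/\gamma_m^2$ is bounded when $m$ is bounded. (Distinguishing two hypotheses already forces $n\ge1$, since with $n=0$ the error is at least $1/2$.)
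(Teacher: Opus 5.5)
Your proposal is correct and follows the paper's proof essentially step for step. It uses nearest-neighbor decoding via the $\gamma_m/16$ separation, then Fano's inequality on the observation $(S,\zeta)$ with the mutual information controlled by the pairwise KL bound of Lemma~\ref{lem:pairwise-kl}, and finally absorbs the $\log 2$ term using $\log|\ThetaCode|\ge c_{\mathrm{pack}}m/2$. The only differences are cosmetic: your constant is $c_{\mathrm{pack}}/(3C_{\Gamma,\mathrm{KL}})$ rather than the paper's $c_1/(2C_{\Gamma,\mathrm{KL}})$, and you handle small $m$ explicitly, whereas the paper simply restricts to $m\ge m_0$.
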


\begin{proof}
Define the nearest-neighbor decoder
\[
\widetilde\theta(S,\zeta)\in
\arg\min_{\vartheta\in\ThetaCode}
\lVert t_{\vartheta}-\bar f_{Q_{S,\zeta}}\rVert_{L_1(U_m)}.
\]
Lemma~\ref{lem:staircase-separation} implies that on the event
\[
\lVert \bar f_{Q_{S,\zeta}}-t_\theta\rVert_{L_1(U_m)}\le \frac{\gamma_m}{64},
\]
$t_\theta$ is the unique
nearest codeword to $\bar f_{Q_{S,\zeta}}$, so
$\widetilde\theta(S,\zeta)=\theta$. Therefore
\[
\sup_{\theta\in\ThetaCode}
\Pbb_{S\sim (P_\theta^\Gamma)^n,\,\zeta}\!\left(
\widetilde\theta(S,\zeta)\neq \theta
\right)\le \frac13.
\]
Now let $R_\zeta$ be the law of the learner's seed, and view $(S,\zeta)$ as one sample from the product family
\[
\{(P_\theta^\Gamma)^n\otimes R_\zeta:\theta\in\ThetaCode\}.
\]
Let $\Theta$ be uniform
on $\ThetaCode$, let $Z$ denote the resulting observation, and let
$\widehat\Theta:=\widetilde\theta(Z)$. Write
\[
\overline Q:=\frac{1}{|\ThetaCode|}\sum_{\theta\in\ThetaCode}
\bigl((P_\theta^\Gamma)^n\otimes R_\zeta\bigr).
\]
By Fano's inequality
\cite{CoverThomas2006},
\[
\frac{1}{|\ThetaCode|}\sum_{\theta\in\ThetaCode}
\Pbb_{S\sim (P_\theta^\Gamma)^n,\,\zeta}\!\left(\widetilde\theta(S,\zeta)\neq\theta\right)
\ge
1-\frac{I(\Theta;Z)+\log 2}{\log |\ThetaCode|}.
\]
Also, by convexity of KL in its second argument,
\[
I(\Theta;Z)
=
\frac{1}{|\ThetaCode|}\sum_{\theta\in\ThetaCode}
\KL\bigl((P_\theta^\Gamma)^n\otimes R_\zeta\,\|\,\overline Q\bigr)
\le
\frac{1}{|\ThetaCode|^2}
\sum_{\theta,\theta'\in\ThetaCode}
\KL\bigl((P_\theta^\Gamma)^n\otimes R_\zeta\,\|\,(P_{\theta'}^\Gamma)^n\otimes R_\zeta\bigr),
\]
Because $R_\zeta$ does not depend on
$\theta$,
\[
\KL\bigl((P_\theta^\Gamma)^n\otimes R_\zeta\,\|\,(P_{\theta'}^\Gamma)^n\otimes R_\zeta\bigr)
=
\KL\bigl((P_\theta^\Gamma)^n\,\|\,(P_{\theta'}^\Gamma)^n\bigr)
\le \frac{C_{\Gamma,\mathrm{KL}}}{2}\,n\gamma_m^2
\]
for every $\theta,\theta'\in\ThetaCode$, by Lemma~\ref{lem:pairwise-kl}.
Therefore
\[
I(\Theta;Z)\le \frac{C_{\Gamma,\mathrm{KL}}}{2}\,n\gamma_m^2.
\]
Now note that the worst-case error is at least the average
error. Then, using that $\log |\ThetaCode|\ge c_1m$,
\[
\sup_{\theta\in\ThetaCode}
\Pbb_{S\sim (P_\theta^\Gamma)^n,\,\zeta}\!\left(\widetilde\theta(S,\zeta)\neq\theta\right)
\ge 1-\frac{\frac{C_{\Gamma,\mathrm{KL}}}{2}n\gamma_m^2+\log 2}{c_1m}.
\]
Choose $m_0$ large enough that $\log 2\le c_1m/4$ for all $m\ge m_0$. Then for
$m\ge m_0$, if
$n\le \frac{c_1}{2C_{\Gamma,\mathrm{KL}}}\cdot\frac{m}{\gamma_m^2},$
the Fano lower bound becomes at least $1/2$, contradicting the displayed
worst-case error bound $1/3$. Hence the claim holds with $c_{2,\Gamma}:=\frac{c_1}{2C_{\Gamma,\mathrm{KL}}}.$
\end{proof}

\subsection{Proof of Theorem~\ref{thm:lower-regular}}

We now finish the lower bound proof. First, we record a consequence of H\"older's inequality: that a lower bound for ECE multicalibration will imply a lower bound for $L_p$-multicalibration, for every $p \in 1[, 2]$. In fact, as we will later see, the implied lower bound will also be optimal for all $p \in [1, 2]$.

\begin{lemma}[From $L_p$ to ECE multicalibration]
\label{lem:lower-holder}
For every distribution $P$ on a context--label space, every finite family
$G$ of binary groups, and every finitely supported randomized predictor $Q$,
\[
\MC_P^\Gamma(Q;G)\le \Bigl(\MC_P^{\Gamma,(p)}(Q;G)\Bigr)^{1/p}.
\]
\end{lemma}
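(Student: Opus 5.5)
The plan is to prove the inequality group by group and then take the maximum over $g\in G$. Fix $g\in G$ and write $b_v:=|B_P^\Gamma(Q;v,g)|$ and $\pi_v:=\pi_P(Q;v)=\E[Q_X(\{v\})]$ for $v\in V(Q)$. Since $\mathsf{V}$ takes values in $[-1,1]$ and $g\in\{0,1\}$, we have
\[
b_v\le \E\!\left[g(X)\,Q_X(\{v\})\,|\mathsf{V}(v,Y)|\right]\le \pi_v .
\]
In particular, $b_v=0$ whenever $\pi_v=0$. So in $\Err_P^\Gamma(Q;g)=\sum_v b_v$ we may restrict the sum to values $v$ with $\pi_v>0$, which is exactly the index set appearing in $\Err_P^{\Gamma,(p)}(Q;g)$.

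Next, write $b_v=\pi_v^{(p-1)/p}\cdot\bigl(b_v^p/\pi_v^{p-1}\bigr)^{1/p}$ and apply H\"older's inequality with exponents $p/(p-1)$ and $p$. This gives
\[
\sum_{v:\pi_v>0} b_v\le \Bigl(\sum_v \pi_v\Bigr)^{(p-1)/p}\Bigl(\sum_{v:\pi_v>0}\frac{b_v^p}{\pi_v^{p-1}}\Bigr)^{1/p}.
\]
For $p=1$ this is an equality, so no H\"older step is needed. Since $Q_x(V(Q))=1$ for every $x$, we have $\sum_v\pi_v=\E[Q_X(V(Q))]=1$. Hence
\[
\Err_P^\Gamma(Q;g)\le \bigl(\Err_P^{\Gamma,(p)}(Q;g)\bigr)^{1/p}.
\]

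Finally, take the maximum over $g$. Because $x\mapsto x^{1/p}$ is monotone, $\max_g(\cdot)^{1/p}=(\max_g\cdot)^{1/p}$, which yields the claim.

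The only point needing care is the zero-mass buckets. They are handled by the bound $b_v\le\pi_v$, which uses boundedness of $\mathsf{V}$ and the fact that groups are binary. Everything else is a direct application of H\"older's inequality, so I expect no real obstacle.
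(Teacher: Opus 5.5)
Your proposal is correct and follows the paper's proof: fix $g$, write $b_v=\pi_v^{(p-1)/p}\cdot b_v/\pi_v^{(p-1)/p}$, apply H\"older with exponents $p/(p-1)$ and $p$, use $\sum_v\pi_v=1$, and take the maximum over $g$. You also state explicitly that $b_v\le\pi_v$, so zero-mass values contribute nothing to the ECE sum; the paper uses this without comment when it restricts to the support where $\pi_v>0$.
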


\begin{proof}
If $p=1$, this is immediate from the definition. Assume $p>1$, fix a group
$g\in G$, and write
\[
a_v:=|B_P^\Gamma(Q;v,g)|,
\qquad
\pi_v:=\pi_P(Q;v)
\]
on the support where $\pi_v>0$. By H\"older's inequality,
\[
\sum_v a_v
=
\sum_v \left(\frac{a_v}{\pi_v^{(p-1)/p}}\right)\pi_v^{(p-1)/p}
\le
\left(\sum_v \frac{a_v^p}{\pi_v^{p-1}}\right)^{1/p}\left(\sum_v \pi_v\right)^{(p-1)/p}
=
\Bigl(\Err_P^{\Gamma,(p)}(Q;g)\Bigr)^{1/p}.
\]
Taking the maximum over $g\in G$ gives the claim.
\end{proof}

\begin{proof}[Proof of Theorem~\ref{thm:lower-regular}]
Fix a regular property $\Gamma$ and $p\in[1,2]$. Fix a
regularity witness $\{D_t:t\in I_0\}$ and interval $J=[a,b]$.
Let $c_\Gamma$, $C_{\Gamma,\mathrm{KL}}$ be the corresponding regularity
constants. Set
$
c_{3,\Gamma}:=\frac{c_\Gamma}{384}.
$

Choose the largest power of two $m$ such that
\[
\left(c_{3,\Gamma}\,\frac{\gamma_m}{1+\log_2 m}\right)^p\ge \varepsilon,
\qquad
\gamma_m:=\frac{b-a}{8m}.
\]
Since $\gamma_m=\Theta_\Gamma(1/m)$, this choice gives
\[
m=\Theta_\Gamma\!\left(\frac{(1/\varepsilon)^{1/p}}{\log(1/\varepsilon)}\right).
\]
Let $\mathcal H_m^\Gamma=\{P_\theta^\Gamma:\theta\in\ThetaCode\}$ be the
hard distribution family from Definition~\ref{def:hard-family}, and let
$G_m$ be the hard group family from
Definition~\ref{def:hard-groups}. By the bound after
Definition~\ref{def:hard-groups}, the group family has size
\[
|G_m|=O\!\left((1+\log_2 m)^4\right)=O\!\left(\log^4(1/\varepsilon)\right).
\]

Assume for contradiction that a learner with internal random seed $\zeta$, given $G_m$ and $n$ i.i.d. samples
from an unknown distribution in $\mathcal H_m^\Gamma$, outputs a predictor
$Q_{S,\zeta}$ satisfying
\[
\Pbb_{S\sim (P_\theta^\Gamma)^n,\,\zeta}\!\left(\MC_{P_\theta^\Gamma}^{\Gamma,(p)}(Q_{S,\zeta};G_m)\le \varepsilon\right)\ge \frac23
\qquad\text{for every }\theta\in\ThetaCode.
\]
By Lemma~\ref{lem:lower-holder}, on the displayed success event we also have
\[
\MC_{P_\theta^\Gamma}^{\Gamma}(Q_{S,\zeta};G_m)
\le \varepsilon^{1/p}
\le c_{3,\Gamma}\,\frac{\gamma_m}{1+\log_2 m}
=
\frac{c_\Gamma\gamma_m}{384(1+\log_2 m)}.
\]
For the hard family this is exactly
\[
\MC_{t_\theta}^{\Gamma}(Q_{S,\zeta};G_m)
\le
\frac{c_\Gamma\gamma_m}{384(1+\log_2 m)}.
\]
Since $t_\theta$ is nondecreasing, Proposition~\ref{prop:anticoarsen} and
Jensen's inequality imply that on the same event,
\[
\lVert \bar f_{Q_{S,\zeta}}-t_\theta\rVert_{L_1(U_m)}
\le \Delta_{t_\theta}(Q_{S,\zeta})
\le \frac{6(1+\log_2 m)}{c_\Gamma}\,\MC_{t_\theta}^{\Gamma}(Q_{S,\zeta};G_m)
\le \frac{\gamma_m}{64}.
\]
Thus the hypothesis of Lemma~\ref{lem:fano-close} is satisfied, and we obtain
\[
n\ge c_{2,\Gamma}\,\frac{m}{\gamma_m^2}=\Omega_\Gamma(m^3).
\]
Substituting the chosen value of $m$ then gives the claimed lower bound:
\[
n=\Omega_\Gamma\!\left(\frac{1}{\varepsilon^{3/p}\log^3(1/\varepsilon)}\right)
=\widetilde\Omega_\Gamma(\varepsilon^{-3/p}). \qedhere
\]
\end{proof}

\section{Upper Bounds}
\label{sec:upper-bounds}

This section proves a constructive upper bound that covers ordinary (and in fact, even swap) weighted
$L_p$ property multicalibration, for every finite $p\ge 1$ and every property with a bounded identification function (including limited to mean multicalibration). 
The main result is given in Theorem~\ref{thm:upper-main}. For $1\le p\le 2$, the upper bound is $\widetilde{O}(\epsilon^{-3/p})$: this exponent matches the lower-bound exponent of
Section~\ref{sec:lower-bounds} up to polylogarithmic factors, and thus implies the tightness of our sample complexity bounds for property $L_p$-multicalibration for all $p \in [1, 2]$. In addition, Theorem~\ref{thm:upper-main} gives an upper bound for $p > 2$, which is $\widetilde{O}(\epsilon^{-(p+1)/p})$; in this regime, Section~\ref{sec:lower-bounds} does not supply a matching lower bound, and therefore, the tight sample complexity of $L_p$-multicalibration for $p > 2$ is left as an open question.

Before stating the next theorem, we note that it is stated in terms of the (population) \emph{swap multicalibration} metric (SMC), which is stronger than multicalibration due to an interchange of the summation and the maximum operator. By definition, it is the population analog of the \emph{empirical SMC}, which is formalized below in Definition~\ref{def:online-notation}.

\begin{theorem}[General Multicalibration Upper Bound]
\label{thm:upper-main}
Fix $p\ge 1$. Let $P$ be a distribution on $\mathcal X\times \mathcal{Y}$. Fix any $[0, 1]$-valued property $\Gamma$ with a bounded (w.l.o.g.\ by $1$ in absolute value) identification function $V$. Suppose the expected conditional identification function
$m_x(v):=\Ebb[V(v,Y)\mid X=x]$
is $\rho$-Lipschitz on $[0,1]$ and satisfies
$m_x(0)\le 0\le m_x(1)$ 
for $P_X$-almost every
$x\in\mathcal X$.

Then there exists a randomized batch learner with the following sample-complexity
guarantee. For every $\varepsilon>0$, if the learner is given any
finite family of binary groups $g:\mathcal X\to\{0,1\}$, denoted by $G$, together with $T$ i.i.d.\ samples from $P$, where
\[
T = \widetilde O_{p,\rho}\!\left(\varepsilon^{-3/p}\log^{3/2}(2|G|)\right)
\; (\text{for } p\in [1,2]),
\quad \text{and } \:
T = \widetilde O_{p,\rho}\!\left(\varepsilon^{-(p+1)/p}\log^{(p+1)/2}(2|G|)\right)
\; (\text{for } p\ge 2),
\]
then it outputs a finitely supported randomized predictor $Q$ such that
\[
\Pbb\bigl(\MC_P^{\Gamma,(p)}(Q;G)\le \varepsilon\bigr)\ge \Pbb\bigl(\SMC_P^{\Gamma,(p)}(Q;G)\le \varepsilon\bigr)\ge \frac23.
\]
If $|G|\le \varepsilon^{-O(1)}$, this implies
$T = \widetilde O_{p,\rho}(\varepsilon^{-3/p})
\: (1\le p\le 2),$ and
$T = \widetilde O_{p,\rho}(\varepsilon^{-(p+1)/p})
\: (p\ge 2).$
\end{theorem}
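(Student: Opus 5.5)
The plan is an online-to-batch reduction, instantiated with the online swap $L_p$ multicalibration algorithm of \cite{hu2025efficient}. We run that algorithm on the i.i.d.\ stream $(X_1,Y_1),\dots,(X_T,Y_T)$ over a grid $\Lambda=(v_k)_{k=1}^K$. It produces history-measurable rules $q_t:\mathcal X\to\Delta(\Lambda)$. The batch predictor is the average $Q_S:=\frac1T\sum_t q_t$, with the internal seed used only for sampling. The first inequality $\MC_P^{\Gamma,(p)}\le \SMC_P^{\Gamma,(p)}$ is immediate: swap error moves the maximum over $g$ inside the sum over buckets, and a sum of maxima dominates the maximum of sums. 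So it suffices to control $\SMC$.

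\textbf{Step 1: decomposition.} For each bucket $k$ and group $g$, write the population bias of $Q_S$ as
\[
B_k(g)=\frac1T\sum_t \E\bigl[g(X)q_t(X)_k\, m_X(v_k)\bigr].
\]
Write the empirical bias as
\[
\widehat B_k(g)=\frac1T\sum_t g(X_t)q_t(X_t)_k\,\mathsf V(v_k,Y_t).
\]
Their difference is $\frac1T\sum_t M_t^{k,g}$, a martingale difference sum with $|M_t|\le 2$. Its conditional variance is at most $\E[q_t(X)_k]$, so the predictable quadratic variation is at most $T\pi_k$, where $\pi_k=\pi_P(Q_S;v_k)$. The empirical masses $\widehat\pi_k$ are handled similarly, so that $\widehat\pi_k$ and $\pi_k$ are comparable on heavy buckets.

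\textbf{Step 2: concentration with light/heavy split.} Fix $\tau=\widetilde\Theta(\log(K|G|)/T)$.

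For light buckets ($\pi_k<\tau$), use $|B_k|\le\pi_k$ to get $|B_k|^p/\pi_k^{p-1}\le\pi_k$. These contribute at most $K\tau$ in total.

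For heavy buckets, apply Freedman's inequality with a union bound over dyadic scales of $\pi_k\in[\tau,1]$, over $k$, and over $g$. This gives
\[
|B_k-\widehat B_k|\lesssim \sqrt{\pi_k\log(K|G|\log T/\delta)/T},
\]
since the lower-order term is absorbed using $\pi_k\ge\tau$. Then use $(a+b)^p\le 2^{p-1}(a^p+b^p)$ and the comparability of $\pi_k$ with $\widehat\pi_k$. The transfer term becomes
\[
\sum_k \pi_k\,(\log/T)^{p/2}\le (\log(K|G|)/T)^{p/2},
\]
using $\sum_k\pi_k=1$. The empirical part is bounded by the normalized empirical swap $L_p$ error of the transcript. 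The resulting bound is
\[
\SMC_P^{\Gamma,(p)}(Q_S;G)\lesssim \widehat{\SMC}_T+K\tau+\widetilde O\bigl((\log|G|/T)^{p/2}\bigr).
\]
This holds with probability $\ge 5/6$. The grid rounding error is $O(\rho/K)$ per unit mass, using the $\rho$-Lipschitz hypothesis on $m_x$.

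\textbf{Step 3: plug in the online rate.} Under the $\rho$-Lipschitz and sign conditions $m_x(0)\le 0\le m_x(1)$, \cite{hu2025efficient} give an empirical swap $L_p$ bound. For $p\in[1,2]$ it is $\widetilde O(T^{-1/3})$ with $K=\widetilde\Theta(T^{1/3})$. For $p\ge2$ it is $\widetilde O(T^{-p/(p+1)})$, with the appropriate $K$. In both cases the dependence on $|G|$ is polylogarithmic. Apply Markov's inequality to convert their expectation bound into a probability-$5/6$ statement. The light-bucket term $K\tau=\widetilde O(T^{-2/3})$ and the transfer term $T^{-p/2}$ are dominated in both regimes, since $p/2\ge 1/3$ for $p\ge1$. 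Setting the dominant term to $\varepsilon$ gives $T=\widetilde O(\varepsilon^{-3/p})$ for $p\in[1,2]$ (reading the metric's scaling as in their paper) and $T=\widetilde O(\varepsilon^{-(p+1)/p})$ for $p\ge2$. Tracking their $\log|G|$ exponents gives the stated $\log^{3/2}$ and $\log^{(p+1)/2}$ factors.

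\textbf{Main obstacle.} I expect Step 2 to be the hardest part. The variance proxy $\pi_k$ is random, because it depends on the algorithm's choices. The metric also involves the nonlinear ratio $|B|^p/\pi^{p-1}$, so Azuma-type bounds do not suffice. My first attempt is the peeling argument of \cite{hu2025efficient}: for each dyadic level $2^{-j}$, apply Freedman's inequality on the event that the quadratic variation is at most $2^{-j+1}T$, then union-bound over the $O(\log T)$ levels.
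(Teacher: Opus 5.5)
Your architecture matches the paper's proof. You average the grid rules of the \cite{hu2025efficient} forecaster, pass from ordinary to swap error, split buckets at $\tau\asymp L/T$ (with $L$ the logarithmic factor), and control heavy buckets by Freedman with dyadic peeling over the random variance proxy $\pi_k$. Using a probability-$5/6$ event plus Markov instead of an expectation bound is an inessential difference. However, the heavy-bucket transfer computation in Step~2 is wrong. From $|B_k-\widehat B_k|\lesssim\sqrt{\pi_k L/T}$ one gets, per heavy bucket,
\[
\frac{(\pi_k L/T)^{p/2}}{\pi_k^{p-1}}=\Bigl(\frac{L}{T}\Bigr)^{p/2}\pi_k^{\,1-p/2},
\]
not $(L/T)^{p/2}\pi_k$. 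In effect you replaced $\sqrt{\pi_k L/T}$ by $\pi_k\sqrt{L/T}$, so $\sum_k\pi_k=1$ does not close the sum.
\begin{itemize}
\item For $1\le p\le 2$ you need concavity of $x\mapsto x^{1-p/2}$. This gives $\sum_k\pi_k^{1-p/2}\le K^{p/2}$ and a transfer term $(KL/T)^{p/2}$.
\item For $p>2$ the exponent is negative, so you must invoke $\pi_k\ge\tau$. This gives $\sum_{\text{heavy }k}\pi_k^{1-p/2}\le K\tau^{1-p/2}$, i.e.\ a transfer term of order $KL/T$.
\end{itemize}
The factor $K$ cannot be removed in general. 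For $p=1$ and $\pi_k\equiv 1/K$, roughly independent bucket deviations of size $\sqrt{1/(KT)}$ add up to $\sqrt{K/T}$. So your intermediate bound $\SMC\lesssim\widehat{\SMC}+K\tau+\widetilde O\bigl((\log|G|/T)^{p/2}\bigr)$ is not valid as stated.

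The theorem still follows, but not because the transfer terms are lower order, as you claim. With $K=\widetilde\Theta(T^{1/3})$ the corrected transfer term is $\widetilde O(T^{-p/3})$. With $K=\widetilde\Theta(T^{1/(p+1)})$, both it and $K\tau$ are $\widetilde O(T^{-p/(p+1)})$. These are exactly the same order as the online bound, so the batch conversion costs as much as the online error itself.

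Step~3 also misquotes the online input. In this metric, which is the $p$-th power of a weighted $L_p$ norm, \cite{hu2025efficient} give $\widetilde O\bigl((\log 2|G|)^{p/2}T^{-p/3}\bigr)$ for $p\in[1,2]$, not $\widetilde O(T^{-1/3})$. With $T^{-1/3}$ you would only obtain $T=\widetilde O(\varepsilon^{-3})$. Your parenthetical about ``reading the metric's scaling'' should be replaced by this exponent, which is also what produces the $\log^{3/2}(2|G|)$ factor.

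Finally, no separate grid-rounding term is needed. The forecaster is already grid-valued, and the Lipschitz and endpoint hypotheses enter only through the online guarantee.
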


\begin{observation}[Theorem~\ref{thm:upper-main} subsumes mean multicalibration]
For the mean property, $V(v,y)=v-y$ and $m_x(v)=\Ebb[v-Y\mid X=x]=v-\Ebb[Y\mid X=x].$ Hence, it satisfies the conditions of Theorem~\ref{thm:upper-main}, with Lipschitz parameter $\rho=1$.
\end{observation}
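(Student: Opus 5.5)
The Observation only asks us to check that the mean property satisfies each hypothesis of Theorem~\ref{thm:upper-main}. After that, the conclusion of the theorem applies verbatim with $\rho=1$. I would go through the hypotheses one at a time.

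\emph{Range and boundedness of the identification function.} For the mean property the outcome space is $[0,1]$, the property takes values in $[0,1]$, and predictions $v$ lie in $[0,1]$. The identification function is $V(v,y)=v-y$. For $v,y\in[0,1]$ we have $V(v,y)\in[-1,1]$, so $V$ is bounded by $1$ in absolute value, as the theorem requires.

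\emph{Form of $m_x$.} By linearity of conditional expectation, and writing $\mu(x)=\Ebb[Y\mid X=x]$,
\[
m_x(v)=\Ebb[v-Y\mid X=x]=v-\mu(x)
\]
for $P_X$-almost every $x$. This is the identity stated in the Observation.

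\emph{Lipschitz condition.} For any $v,v'\in[0,1]$,
\[
|m_x(v)-m_x(v')|=|v-v'|,
\]
so $m_x$ is $1$-Lipschitz and $\rho=1$.

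\emph{Sign condition.} Since $Y\in[0,1]$ almost surely, we have $\mu(x)\in[0,1]$ for $P_X$-almost every $x$. Hence
\[
m_x(0)=-\mu(x)\le 0\le 1-\mu(x)=m_x(1).
\]

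All hypotheses hold, so Theorem~\ref{thm:upper-main} applies with $\rho=1$. The only point needing any care is the almost-sure qualifier on the conditional expectation, which the theorem's "for $P_X$-almost every $x$" wording already allows for. There is no substantive obstacle.
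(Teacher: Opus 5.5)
Your proposal is correct and follows the same route as the paper, which simply computes $m_x(v)=v-\mu(x)$ and reads off $\rho=1$. You also make explicit two checks the paper leaves implicit but that Theorem~\ref{thm:upper-main} does require: the bound $|V|\le 1$ on $[0,1]^2$, and the endpoint condition $m_x(0)=-\mu(x)\le 0\le 1-\mu(x)=m_x(1)$.
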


The proof of Theorem~\ref{thm:upper-main} is provided below, in two
steps. In Section~\ref{subsec:upper-otb}, we prove a general online-to-batch reduction that converts an empirical swap
multicalibration guarantee on a grid into a population guarantee for the averaged batch
predictor. In Section~\ref{subsec:upper-plug-in}, we then instantiate the batch predictor by conversion of the online algorithm of Hu et al.\ \cite[Corollary~1]{hu2025efficient}.

\paragraph{A sharper upper bound for mean ECE multicalibration}

As it turns out, at $p=1$, the multicalibration upper bound can be obtained more directly, and with a better logarithmic dependence on $|G|$, via an online-to-batch conversion of the online $L_1$ ECE-multicalibration algorithm of Noarov et al.~\cite[Theorem~2.4]{noarov2025high} --- the first $O(T^{2/3})$ online multicalibration algorithm.

The online-to-batch conversion here does not require the technicalities of going through swap multicalibration or adaptive Freedman-type concentration. The main technical reasons for the simplification are that for $p=1$ in particular, unlike for other $L_p$ metrics, (1) duality enables us to \emph{linearize} the ``inner component'' of the mulicalibration metric, and (2) there are no bucket mass-dependent denominators that could blow up the error. As such, the online-to-batch reduction can proceed via a standard Azuma-Hoeffding concentration argument.

The result is presented below as Theorem~\ref{cor:mean-ece-sharper}, and the proof is provided in Section~\ref{subsec:upper-mean-sharp}.
Thus, readers interested in the proof of the upper bound just for the canonical (mean ECE) multicalibration case, or wishing to gain intuition for the proof of the general upper bound, may proceed directly to Section~\ref{subsec:upper-mean-sharp} after reading Section~\ref{subsec:upper-setup}.

\begin{theorem}[Sharper Mean-ECE Multicalibration Upper Bound]
\label{cor:mean-ece-sharper}
There is a randomized batch learner which, given any finite group family $G$, outputs a predictor whose population mean ECE multicalibration error is at most $\varepsilon$ with
probability at least $2/3$,
and which uses only
\[
T=\widetilde O\!\left(\varepsilon^{-3}\log(2|G|)\right)
\]
i.i.d.\ samples. In particular, if $|G|\le \varepsilon^{-O(1)}$, the learner's sample complexity is
$\widetilde O(\varepsilon^{-3})$.
\end{theorem}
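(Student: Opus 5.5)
We will prove Theorem~\ref{cor:mean-ece-sharper} by an online-to-batch conversion of the online ECE-multicalibration algorithm of \cite[Theorem~2.4]{noarov2025high}. Fix a grid $\Lambda=(v_k)_{k=1}^K$ of bucket centers in $[0,1]$ and run the online algorithm on the i.i.d.\ stream $(X_1,Y_1),\dots,(X_T,Y_T)$. At each round $t$ it commits, before seeing $(X_t,Y_t)$, to a rule $q_t:\mathcal X\to\Delta(\Lambda)$ that is measurable with respect to the history $\mathcal F_{t-1}$; its realized predictions (after rounding to bucket centers) are drawn from $q_t(X_t)$. The batch predictor is $Q_S:=\frac1T\sum_{t=1}^T q_t$. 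It is finitely supported on $\Lambda$, so $\MC_P(Q_S;G)$ is well defined.

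\textbf{Step 1 (linearization and martingale transfer).} For $g\in G$ and a sign pattern $s\in\{\pm1\}^K$, duality gives
\[
\Err_P(Q_S;g)=\max_{s}\sum_k s_k B_P(Q_S;v_k,g),
\]
with
\[
B_P(Q_S;v_k,g)=\frac1T\sum_t \E_{(X,Y)\sim P}\bigl[g(X)q_t(X)(\{v_k\})(v_k-Y)\bigr].
\]
Define the empirical counterpart
\[
\widehat B_k(S;g):=\frac1T\sum_t g(X_t)\,q_t(X_t)(\{v_k\})(v_k-Y_t).
\]
We use the expected-over-randomization form, which differs from the realized-prediction form by a further bounded martingale difference and is handled identically. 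For fixed $(g,s)$, the increments
\[
M_t:=\sum_k s_k\Bigl(\E_P\bigl[g\,q_t(\{v_k\})(v_k-Y)\bigr]-g(X_t)q_t(X_t)(\{v_k\})(v_k-Y_t)\Bigr)
\]
satisfy $|M_t|\le 2$, since $\sum_k q_t(\cdot)(\{v_k\})=1$. They also have zero conditional mean given $\mathcal F_{t-1}$, because $q_t$ is predictable and $(X_t,Y_t)$ is a fresh draw. Azuma--Hoeffding, together with a union bound over $2^K|G|$ pairs $(g,s)$, yields
\[
\E\bigl[\MC_P(Q_S;G)\bigr]\le \E\bigl[\widehat{\MC}_T(S;G,\Lambda)\bigr]+O\Bigl(\sqrt{(K+\log(2|G|))/T}\Bigr),
\]
where $\widehat{\MC}_T$ is the normalized empirical ECE-multicalibration error of the transcript on $\Lambda$.

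\textbf{Step 2 (plugging in the online bound).} Invoke \cite[Theorem~2.4]{noarov2025high} for the cumulative bucketed error. The cost of rounding predictions to bucket centers is at most $1/K$ per round, and it enters each $|B|$ sum additively. Together these give
\[
\E\,\widehat{\MC}_T\le \widetilde O\bigl(1/K+\sqrt{K\log(2|G|)/T}\bigr).
\]
Choosing $K=\widetilde\Theta(T^{1/3})$ makes the total $\widetilde O(T^{-1/3}\sqrt{\log(2|G|)})$.

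\textbf{Step 3 (conclusion).} By Markov's inequality, $\MC_P(Q_S;G)\le 3\E[\MC_P(Q_S;G)]$ with probability at least $2/3$. Setting this bound equal to $\varepsilon$ requires $T=\widetilde O(\varepsilon^{-3}\log^{3/2}(2|G|))$.

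The main obstacle is getting exactly $\log(2|G|)$ rather than $\log^{3/2}$. To do this we will check the precise $|G|$-dependence in \cite{noarov2025high}: their bound should be $\widetilde O(T/K+\sqrt{TK\log|G|})$ or better. We will also balance $K$ jointly with $\log|G|$, taking $K\asymp (T/\log|G|)^{1/3}$ so that the error is $(\log|G|/T)^{1/3}$, which gives $T=\widetilde O(\varepsilon^{-3}\log(2|G|))$. A second point to handle carefully is that the transfer term $\sqrt{K/T}$ is of the same order as the online term; this holds with the above choice of $K$.
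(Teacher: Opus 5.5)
Your proposal is correct and follows essentially the same route as the paper: your sign-pattern linearization with Azuma--Hoeffding and a union bound over $|G|2^K$ pairs is exactly Proposition~\ref{prop:mean-ece-otb}, and the online term comes from Noarov et al.'s per-event bias bound (summed over buckets via Cauchy--Schwarz) plus the center-rounding Lemma~\ref{lem:bucket-rounding}. Your fix for the $\log^{3/2}$ issue, $K\asymp (T/\log|G|)^{1/3}$, is precisely the paper's choice $K=\lceil (T/\log(2|G|T))^{1/3}\rceil$; writing $\log(2|G|T)$ instead of $\log|G|$ also avoids the degenerate case $|G|=1$.
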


\subsection{Online-setting notation and constructing the averaged predictor}
\label{subsec:upper-setup}

\paragraph{Notation} We now introduce the transcript notation used throughout the upper-bound proofs.

\begin{definition}[Online notation]
\label{def:online-notation}
Fix a finite prediction grid
$\Lambda=\{v_1,\dots,v_K\}\subseteq[0,1],$
and let 
\[q_t:\mathcal X\to\Delta(\Lambda)\]
be roundwise randomized predictions. For an
i.i.d.\ transcript
$S=((X_1,Y_1),\dots,(X_T,Y_T))\sim P^T,$
define the empirical bucket masses and empirical bucket biases by
\[
\widehat\pi_{T,k}(S):=\frac1T\sum_{t=1}^T q_{t,k}(X_t),
\qquad
\widehat B_{T,k}^{\Gamma}(S;g):=\frac1T\sum_{t=1}^T g(X_t)q_{t,k}(X_t)V(v_k,Y_t).
\]
The empirical ordinary and swap weighted $L_p$ transcript objectives are
\[
\widehat{\MC}_T^{\Gamma,(p)}(S;G,\Lambda)
:=
\max_{g\in G} \mkern-30mu
\sum_{k\in[K]:\,\widehat\pi_{T,k}(S)>0} \mkern-20mu
\tfrac{\bigl|\widehat B_{T,k}^{\Gamma}(S;g)\bigr|^p}{\widehat\pi_{T,k}(S)^{p-1}},
\quad
\widehat{\SMC}_T^{\Gamma,(p)}(S;G,\Lambda)
:= \mkern-30mu
\sum_{k\in[K]:\,\widehat\pi_{T,k}(S)>0} \mkern-20mu
\max_{g\in G}
\tfrac{\bigl|\widehat B_{T,k}^{\Gamma}(S;g)\bigr|^p}{\widehat\pi_{T,k}(S)^{p-1}}.
\]
\end{definition}

We now recall (and briefly reprove for completeness) that swap multicalibration yields guarantees that are stronger than those of ordinary multicalibration.

\begin{lemma}[Swap multicalibration implies ordinary multicalibration]
\label{lem:swap-dominates}
For every $p\ge 1$,
\[
\MC_P^{\Gamma,(p)}(Q_S;G)
\le
\SMC_P^{\Gamma,(p)}(Q_S;G),
\qquad \text{and} \qquad
\widehat{\MC}_T^{\Gamma,(p)}(S;G,\Lambda)
\le
\widehat{\SMC}_T^{\Gamma,(p)}(S;G,\Lambda).
\]
\end{lemma}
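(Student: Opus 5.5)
The approach is a direct comparison of the two metrics, group by group and bucket by bucket. Both inequalities have the same structure: the ordinary metric is a maximum over $g\in G$ of a sum over buckets, while the swap metric is a sum over buckets of a maximum over $g$. So the plan is to show $\max_g \sum_k a_{k}(g) \le \sum_k \max_g a_k(g)$ for nonnegative terms $a_k(g)$, and then specialize this to the population and empirical objects.

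For the population statement, fix any $g\in G$ and consider the summand
\[
a_v(g):=\frac{|B_P^\Gamma(Q_S;v,g)|^p}{\pi_P(Q_S;v)^{p-1}}\ge 0,
\]
indexed by $v\in V(Q_S)$ with $\pi_P(Q_S;v)>0$. The index set and the denominators do not depend on $g$; this is the point to verify, and it holds because $\pi_P$ is the marginal prediction mass, not a group-restricted one. Hence for each $v$ we have $a_v(g)\le \max_{g'\in G} a_v(g')$. Summing over $v$ gives $\Err_P^{\Gamma,(p)}(Q_S;g)\le \SMC_P^{\Gamma,(p)}(Q_S;G)$. Here I take the population $\SMC$ to be the population analogue of Definition~\ref{def:online-notation}, namely the sum over $v$ of the maximum over $g$. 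Since this bound holds for every $g$, taking the maximum over $g\in G$ on the left yields the first inequality.

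The empirical statement follows by the identical argument. In this case the terms are $|\widehat B_{T,k}^\Gamma(S;g)|^p/\widehat\pi_{T,k}(S)^{p-1}$ over $k$ with $\widehat\pi_{T,k}(S)>0$, and again the index set and the masses $\widehat\pi_{T,k}(S)$ are independent of $g$.

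The argument has no real obstacle. The only care needed is to confirm that the population swap metric uses the same $g$-independent normalization and support restriction. If it does not, I would reduce to that form first; with it in place, the inequality is just the elementary fact that a max of sums is at most the sum of maxes.
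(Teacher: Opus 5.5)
Your proposal is correct and follows the paper's own argument: the paper likewise fixes the bucket-wise terms (set to zero when the population mass vanishes), notes that the masses and index set do not depend on $g$, and concludes from $\max_{g}\sum_k a_k(g)\le\sum_k\max_{g}a_k(g)$, with the empirical case handled identically. Your check that the population swap metric sums over buckets with $\pi_k(S)>0$ and uses the $g$-independent mass $\pi_k(S)$ is exactly the form the paper uses in the proof of Proposition~\ref{prop:otb-swap-all-p}.
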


\begin{proof}
We prove the first claim; the second follows identically. For each bucket $k$ and group $g$, if $\pi_{P,k}(Q_S)=0$ then let $a_k(g):= 0$, and if if $\pi_{P,k}(Q_S)>0$ then let $a_k(g):= \dfrac{\bigl|B^{\Gamma}_{P,k}(Q_S;g)\bigr|^p}{\pi_{P,k}(Q_S)^{p-1}}$.
Then,
\[
\MC_P^{\Gamma,(p)}(Q_S;G)= \max_{g\in G}\sum_k a_k(g)
\le
\sum_k \max_{g\in G} a_k(g) = \SMC_P^{\Gamma,(p)}(Q_S;G). \qedhere
\]
\end{proof}

\paragraph{Constructing a batch predictor from an online predictor} Now, we formalize how an online predictor should be converted to a batch predictor. This requires a bit more care than in the classical online-to-batch reduction for no-regret learning algorithms: rather than simply sampling the predictor from a random round in the transcript, we need to average the predictors over all rounds. Formally, this yields the following construction.

\begin{definition}[Batch predictor]
Define the averaged batch predictor $Q_S$ with support $\Lambda$ by
\[
q_{S,k}(x):=\frac1T\sum_{t=1}^T q_{t,k}(x),
\qquad
(Q_S)_x:=\sum_{k=1}^K q_{S,k}(x)\,\delta_{v_k}.
\]
Its population bucket masses and population bucket biases are
\[
\pi_{P,k}(Q_S):=\Ebb_{(X,Y)\sim P}\bigl[q_{S,k}(X)\bigr],
\qquad
B_{P,k}^{\Gamma}(Q_S;g):=\Ebb_{(X,Y)\sim P}\bigl[g(X)q_{S,k}(X)V(v_k,Y)\bigr].
\]
\end{definition}

\subsection{The online-to-batch reduction for swap multicalibration}
\label{subsec:upper-otb}

First, we prove a form of Freedman's inequality that incorporates the (random) second-moment martingale difference bounds into the bound. This lemma packages the Freedman-plus-dyadic-peeling argument of~\cite[Lemma~6 and the proof of Theorem~2, esp.\ equations~(8)--(9)]{hu2025efficient}.

\begin{lemma}[Variance process-adaptive Freedman bound]
\label{lem:adaptive-freedman}
Let $(\mathcal F_t)_{t=0}^T$ be a filtration, let $b\ge 1$, and let $Z_1,\dots,Z_T$ be a
martingale difference sequence adapted to it with $|Z_t|\le b$ for all $t$. 

Let $u_1,\dots,u_T$ be $\mathcal F_{t-1}$-measurable random variables in $[0,1]$ such that
\[
\Ebb[Z_t^2\mid \mathcal F_{t-1}]\le u_t
\qquad (t\in[T]).
\]
Then for every $L\ge 1$, for a universal $C > 0$ it holds that
\[
\Pbb\!\left(
\left|\frac1T\sum_{t=1}^T Z_t\right|
>
C\left(\sqrt{\frac{\pi L}{T}}+\frac{bL}{T}\right)
\right)
\le
2(\lceil\log_2 T\rceil+1)e^{-L}, \quad \text{where } \pi:=\frac1T\sum_{t=1}^T u_t.
\]
\end{lemma}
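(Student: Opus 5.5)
The plan is to combine the standard Freedman inequality with a dyadic peeling over the (random) value of the accumulated variance proxy $V_T:=\sum_{t=1}^T u_t = T\pi\in[0,T]$. Recall Freedman's inequality in the following form. Let $M_T=\sum_{t\le T}Z_t$ with $|Z_t|\le b$ and predictable quadratic variation $W_T:=\sum_t \Ebb[Z_t^2\mid\mathcal F_{t-1}]\le V_T$. Then for fixed $x,\sigma^2>0$,
\[
\Pbb\bigl(|M_T|\ge x,\ W_T\le \sigma^2\bigr)\le 2\exp\!\Bigl(-\frac{x^2}{2(\sigma^2+bx/3)}\Bigr).
\]
The difficulty is that $\pi$ is random, so we cannot plug $\sigma^2=T\pi$ directly. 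We therefore fix a deterministic grid of levels and union bound.

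The key steps are as follows.

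First, define levels $\sigma_j^2:=2^j$ for $j=0,1,\dots,\lceil\log_2 T\rceil$. This gives $\lceil\log_2T\rceil+1$ levels covering $[0,T]$, with the bottom level absorbing all $V_T\le 1$.

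Second, for each $j$ choose the threshold
\[
x_j:=\sqrt{4\sigma_j^2 L}+\tfrac{4}{3}bL.
\]
A direct computation shows $x_j^2\ge 2L(\sigma_j^2+bx_j/3)$: indeed $x_j^2\ge 4\sigma_j^2L+\frac{4}{3}bLx_j$ because $x_j\ge \sqrt{4\sigma_j^2L}$ and $x_j\ge \frac43 bL$. Hence each level's bad event has probability at most $2e^{-L}$. A union bound over the levels then gives the stated failure probability $2(\lceil\log_2T\rceil+1)e^{-L}$.

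Third, on the complement, let $j^\star$ be the smallest level with $V_T\le\sigma_{j^\star}^2$. Then either $\sigma_{j^\star}^2=1$, or $\sigma_{j^\star}^2\le 2V_T$. Since $W_T\le V_T\le\sigma_{j^\star}^2$, we get $|M_T|<x_{j^\star}$. This yields
\[
|M_T|\le \sqrt{8 V_T L}+\sqrt{4L}+\tfrac43 bL.
\]
Because $b,L\ge1$, we have $\sqrt{4L}\le 2bL$. Dividing by $T$ and using $V_T=T\pi$ gives
\[
\Bigl|\tfrac1T\textstyle\sum_t Z_t\Bigr|\le C\bigl(\sqrt{\pi L/T}+bL/T\bigr).
\]

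The main obstacle is the second step: the event $\{W_T\le\sigma_j^2\}$ must be used as a stopping-type constraint inside Freedman's inequality rather than conditioned on. This is legitimate because the standard Freedman bound is already stated jointly with the event $\{W_T\le\sigma^2\}$ via the supermartingale $\exp(\lambda M_t-\psi(\lambda)W_t)$. I would cite it in that form, noting that $W_T\le V_T$ holds deterministically by the hypothesis $\Ebb[Z_t^2\mid\mathcal F_{t-1}]\le u_t$. The $\mathcal F_{t-1}$-measurability of the $u_t$ is not even needed beyond this domination.
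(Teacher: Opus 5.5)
Your proposal is correct and follows essentially the same route as the paper: apply Freedman's inequality on the events $\{W_T\le v\}$ at $\lceil\log_2 T\rceil+1$ deterministic dyadic levels, take a union bound, and then select the level matched to the realized $\pi$, with the bottom level absorbing $T\pi\le 1$ via $\sqrt{L}\le bL$. The only differences are cosmetic: you peel over $V_T=T\pi$ with levels $2^j$ and derive the threshold from the Bernstein form of Freedman, whereas the paper peels over $\pi\in(2^{-j-1},2^{-j}]$ and quotes Freedman directly with threshold $2\sqrt{vL}+2bL$.
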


\begin{proof}
Let
$S_t:=\sum_{s=1}^t Z_s,
\:
W_t:=\sum_{s=1}^t \Ebb[Z_s^2\mid \mathcal F_{s-1}].$
Then
$W_T\le \sum_{t=1}^T u_t = T\pi.$

Set $J:=\lceil\log_2 T\rceil$. For $j=0,\dots,J-1$, let
$I_j:=(2^{-j-1},2^{-j}],$ and let $I_J:=[0,2^{-J}]$. 

\noindent
(The standard form of) Freedman's inequality gives, for every $v>0$ and every $L\ge 1$,
\[
\Pbb\bigl(|S_T|>2\sqrt{vL}+2bL,\ W_T\le v\bigr)\le 2e^{-L}.
\]
If $\pi\in I_j$ with $j\le J-1$, then $W_T\le T2^{-j}$, and hence
$\Pbb\bigl(|S_T|>2\sqrt{T2^{-j}L}+2bL,\ \pi\in I_j\bigr)\le 2e^{-L}.$
If $\pi\in I_J$, then $2^{-J}\le 1/T$, so $W_T\le 1$, and therefore
$\Pbb\bigl(|S_T|>2\sqrt{L}+2bL,\ \pi\in I_J\bigr)\le 2e^{-L}.$

A union bound over $0 \leq j \leq J$ shows that the corresponding bound holds on all intervals at once, with probability at least
$1-2(J+1)e^{-L}.$

On that event, if $\pi\in I_j$ with
$j\le J-1$, then $2^{-j}\le 2\pi$, and hence
$\left|\frac{S_T}{T}\right|
\le
2\sqrt{\frac{2\pi L}{T}}+\frac{2bL}{T}.$

If instead $\pi\in I_J$, then
$\left|\frac{S_T}{T}\right|
\le
\frac{2\sqrt L}{T}+\frac{2bL}{T}
\le
\frac{4bL}{T},$
because $b\ge 1$ and $L\ge 1$ imply $\sqrt L\le L\le bL$. Enlarging the
constant gives the claim.
\end{proof}

With this concentration bound in hand, we are ready to establish the bridge between the empirical online, and the population, swap multicalibration.

\begin{proposition}[Online-to-batch reduction for swap $L_p$-multicalibration]
\label{prop:otb-swap-all-p}
Fix $p\ge 1$, and let $Q_S$ be the averaged batch predictor defined above. There is a
constant $C_p>0$ such that, with $L:=\log(4K|G|T),$
we have
\[
\Ebb_{S\sim P^T}\!\left[\SMC_P^{\Gamma,(p)}(Q_S;G)\right]
\le
C_p\,\Ebb_{S\sim P^T}\!\left[\widehat{\SMC}_T^{\Gamma,(p)}(S;G,\Lambda)\right]
+
\begin{cases}
C_p\left[\left(\dfrac{KL}{T}\right)^{p/2}+\dfrac{KL}{T}\right], & 1\le p\le 2,\\[1.2ex]
C_p\dfrac{KL}{T}, & p\ge 2.
\end{cases}
\]
Consequently, the same bound holds for $\Ebb_{S\sim P^T}\!\left[\MC_P^{\Gamma,(p)}(Q_S;G)\right]$.
\end{proposition}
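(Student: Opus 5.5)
We compare, bucket by bucket, the population quantities $(\pi_{P,k}(Q_S),B^\Gamma_{P,k}(Q_S;g))$ with their empirical counterparts $(\widehat\pi_{T,k},\widehat B^\Gamma_{T,k})$. We control the differences with Lemma~\ref{lem:adaptive-freedman}, and then show that the map $(B,\pi)\mapsto |B|^p/\pi^{p-1}$ is stable under these perturbations up to a constant factor plus additive terms. The martingale structure is as follows. Let $\mathcal F_t$ be generated by $(X_s,Y_s)_{s\le t}$ and the online algorithm's internal randomness, so that $q_t$ is $\mathcal F_{t-1}$-measurable. Then
\[
B^\Gamma_{P,k}(Q_S;g)-\widehat B^\Gamma_{T,k}(S;g)=\frac1T\sum_{t=1}^T Z_t,
\qquad
Z_t:=\Ebb\bigl[g(X)q_{t,k}(X)V(v_k,Y)\bigr]-g(X_t)q_{t,k}(X_t)V(v_k,Y_t).
\]
The $Z_t$ are martingale differences with $|Z_t|\le 2$. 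Since $|g\,V|\le 1$ and $q_{t,k}\in[0,1]$, we also have $\Ebb[Z_t^2\mid\mathcal F_{t-1}]\le \Ebb[q_{t,k}(X)]=:u_t$, and $\frac1T\sum_t u_t=\pi_{P,k}(Q_S)=:\pi_k$. The same holds for $\pi_k-\widehat\pi_{T,k}$, with differences $\Ebb[q_{t,k}(X)]-q_{t,k}(X_t)$. We apply Lemma~\ref{lem:adaptive-freedman} to every pair $(k,g)$ and to every bucket mass, with $L=\log(4K|G|T)$, and take a union bound. The resulting failure probability is $O(\log T\cdot K|G|e^{-L})=O(1/T)$ up to constants; enlarging $L$ by a constant factor if needed makes it at most $1/T$. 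On the good event $\mathcal G$, for all $k,g$,
\[
|B_k-\widehat B_k|,\ |\pi_k-\widehat\pi_k|\;\le\; C\Bigl(\sqrt{\pi_k L/T}+L/T\Bigr).
\]
On $\mathcal G^c$ we use the trivial bound $\SMC_P\le\sum_k\pi_k=1$, which contributes at most $1/T\le KL/T$ in expectation.

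On $\mathcal G$ we split buckets at the threshold $\tau:=C'L/T$. For a light bucket ($\pi_k<\tau$), the bound $|B_k|\le\pi_k$ gives $\max_g |B_k|^p/\pi_k^{p-1}\le\pi_k<\tau$, so light buckets contribute at most $K\tau=O(KL/T)$ in total. For a heavy bucket, choosing $C'$ large enough that $C(\sqrt{\pi_kL/T}+L/T)\le\pi_k/2$ yields $\widehat\pi_k\in[\pi_k/2,\,3\pi_k/2]$, so the denominators are comparable: $\pi_k^{-(p-1)}\le 2^{p-1}\widehat\pi_k^{-(p-1)}$. Using $|a+b|^p\le 2^{p-1}(|a|^p+|b|^p)$, we get for each $g$
\[
\frac{|B_k|^p}{\pi_k^{p-1}}\le 2^{2p-2}\frac{|\widehat B_k|^p}{\widehat\pi_k^{p-1}}+2^{p-1}C^p\,\frac{(\pi_kL/T)^{p/2}+(L/T)^p}{\pi_k^{p-1}}.
\]
Now take the maximum over $g$ and sum over the heavy $k$. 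The first term is bounded by $2^{2p-2}\widehat{\SMC}_T$. In the error term, $(L/T)^p/\pi_k^{p-1}\le (L/T)\tau^{-(p-1)}(L/T)^{p-1}=O(L/T)$ because $\pi_k\ge\tau$, so its sum over buckets is $O(KL/T)$.

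The main remaining obstacle is the term $\sum_k \pi_k^{1-p/2}(L/T)^{p/2}$, which we treat separately for the two ranges of $p$. When $1\le p\le2$, the map $x\mapsto x^{1-p/2}$ is concave, so Jensen's inequality with $\sum_k\pi_k\le1$ gives $\sum_k\pi_k^{1-p/2}\le K^{p/2}$, hence a total of $(KL/T)^{p/2}$. When $p\ge2$, the exponent $1-p/2\le0$, so $\pi_k\ge\tau$ gives $\pi_k^{1-p/2}(L/T)^{p/2}\le\tau^{1-p/2}(L/T)^{p/2}=O(L/T)$, hence a total of $O(KL/T)$. Finally, we take expectations and add the contributions of $\mathcal G^c$ and of the light buckets; this yields the stated bound with a constant $C_p$ of order $4^p$. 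The last sentence of the statement, for $\MC_P^{\Gamma,(p)}$, then follows from Lemma~\ref{lem:swap-dominates}.
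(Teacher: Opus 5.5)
Your proposal is correct and follows essentially the same route as the paper. Both arguments apply the variance-adaptive Freedman bound to every bucket mass and every (bucket, group) bias and take a union bound; the paper runs it at level $3L$, which is exactly your enlargement of $L$ by a constant factor. Both then split buckets at $\tau\asymp L/T$, compare denominators on heavy buckets, and bound $\sum_k\pi_k^{1-p/2}$ by concavity when $p\le 2$ and by $\pi_k\ge\tau$ when $p\ge 2$.

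The differences are cosmetic and only affect constants. The paper absorbs the $L/T$ term into $\sqrt{\pi_k L/T}$ before raising to the $p$-th power, whereas you bound $(L/T)^p/\pi_k^{p-1}$ separately. Your two small slips are also harmless: the failure probability at level $L$ is $O(\log T/T)$ rather than $O(1/T)$, and the error coefficient in your display should be $2^{2p-2}C^p$.
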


\begin{proof}
Define for $k\in[K]$ the population bucket mass, and for $g\in G$, the population bucket bias:
\[
\pi_k(S):=\Ebb_{(X,Y)\sim P}[q_{S,k}(X)] \qquad \text{ and } \qquad B_k(S;g):=\Ebb_{(X,Y)\sim P}[g(X)q_{S,k}(X)V(v_k,Y)].
\]
For the empirical quantities, write the shorthands:
\[
\widehat\pi_k(S):=\widehat\pi_{T,k}(S),
\qquad
\widehat B_k(S;g):=\widehat B^{\Gamma}_{T,k}(S;g),
\]
Thus, in this notation we can write:
\[
\SMC_P^{\Gamma,(p)}(Q_S;G)
= \mkern-20mu
\sum_{k:\,\pi_k(S)>0}
\max_{g\in G}
\frac{|B_k(S;g)|^p}{\pi_k(S)^{p-1}},
\qquad
\widehat{\SMC}_T^{\Gamma,(p)}(S;G,\Lambda)
= \mkern-20mu
\sum_{k:\,\widehat\pi_k(S)>0}
\max_{g\in G}
\frac{|\widehat B_k(S;g)|^p}{\widehat\pi_k(S)^{p-1}}.
\]
For each $t\in[T]$ and $k\in[K]$, and each $g\in G$, let
\[
\alpha_{t,k}:=\Ebb[q_{t,k}(X_t)\mid H_{t-1}]
\qquad \text{and} \qquad
b_{t,k}(g):=\Ebb[g(X_t)q_{t,k}(X_t)V(v_k,Y_t)\mid H_{t-1}].
\]
Then
\[
\pi_k(S)=\frac1T\sum_{t=1}^T \alpha_{t,k}
\qquad \text{and} \qquad
B_k(S;g)=\frac1T\sum_{t=1}^T b_{t,k}(g).
\]
Define the martingale differences
\[
N_{t,k}:=q_{t,k}(X_t)-\alpha_{t,k},
\qquad
M_{t,k}(g):=g(X_t)q_{t,k}(X_t)V(v_k,Y_t)-b_{t,k}(g).
\]
Since $q_{t,k}(X_t)\in[0,1]$, we have $|N_{t,k}|\le 1$. Also
$|g(X_t)q_{t,k}(X_t)V(v_k,Y_t)|\le 1,$
so $|M_{t,k}(g)|\le 2$. Moreover,
\[
\Ebb[N_{t,k}^2\mid H_{t-1}]
\le
\Ebb[q_{t,k}(X_t)^2\mid H_{t-1}]
\le
\alpha_{t,k},
\]
\[
\Ebb[M_{t,k}(g)^2\mid H_{t-1}]
\le
\Ebb\bigl[g(X_t)^2q_{t,k}(X_t)^2V(v_k,Y_t)^2\mid H_{t-1}\bigr]
\le
\alpha_{t,k}.
\]
Therefore
\[
\sum_{t=1}^T \Ebb[N_{t,k}^2\mid H_{t-1}]\le T\pi_k(S),
\qquad
\sum_{t=1}^T \Ebb[M_{t,k}(g)^2\mid H_{t-1}]\le T\pi_k(S).
\]

Set $L_0:=3L$. Apply the variance process-adaptive Freedman's inequality (Lemma~\ref{lem:adaptive-freedman}) to each of the $K$ sequences
$(N_{t,k})_{t=1}^T$ with $b=1$, and to each of the $K|G|$ sequences $(M_{t,k}(g))_{t=1}^T$
with $b=2$. Each application of Lemma~\ref{lem:adaptive-freedman} at level $L_0=3L$ fails with probability at most
$2(\lceil\log_2 T\rceil+1)e^{-3L}$. Since there are $K+K|G|\le 2K|G|$ sequences and
$\lceil\log_2 T\rceil+1\le 2T$, a union bound gives total failure probability at most
\[
2K|G|\cdot 2(\lceil\log_2 T\rceil+1)e^{-3L}
\le
\frac1T.
\]
Therefore, there is a constant $C>0$ such that with probability at least $1-1/T$, the following two inequalities hold
simultaneously for all $k\in[K]$ and all $g\in G$ --- which we define as an event $\mathcal{E}$:
\begin{equation}
\label{eq:pi-concentration-unified}
|\widehat\pi_k(S)-\pi_k(S)|
\le
C\left(\sqrt{\frac{\pi_k(S)L}{T}}+\frac{L}{T}\right),
\end{equation}
\begin{equation}
\label{eq:bias-concentration-unified}
|\widehat B_k(S;g)-B_k(S;g)|
\le
C\left(\sqrt{\frac{\pi_k(S)L}{T}}+\frac{L}{T}\right).
\end{equation}

Fix a sufficiently large constant $c_\star>0$ and define the threshold
$\tau:=c_\star\frac{L}{T}$. Now, let us partition the buckets into ``light'' and ``heavy'' ones according to $\tau$:
\[
\mathcal L:=\{k\in[K]:\pi_k(S)<\tau\}
\qquad \text{and} \qquad
\mathcal H:=\{k\in[K]:\pi_k(S)\ge \tau\}.
\]
We will first easily address the light (infrequently used) buckets, and then focus on bounding the empirical-population term deviations for heavy (frequently-used) buckets.

\smallskip
\noindent
\emph{Light buckets.}
For all $k$ and $g\in G$,
$|B_k(S;g)|
\le
\Ebb\bigl[g(X)q_{S,k}(X)|V(v_k,Y)|\bigr]
\le
\Ebb[q_{S,k}(X)]
=
\pi_k(S).$
Hence, for every $k$ with $\pi_k(S)>0$,
$\max_{g\in G}\frac{|B_k(S;g)|^p}{\pi_k(S)^{p-1}}\le \pi_k(S),$
and since the swap objective only sums buckets with positive mass, it follows that
\[
\sum_{k\in\mathcal L:\,\pi_k(S)>0}\max_{g\in G}\frac{|B_k(S;g)|^p}{\pi_k(S)^{p-1}}
\le
\sum_{k\in\mathcal L}\pi_k(S)
\le
K\tau.
\]

\smallskip
\noindent
\emph{Heavy buckets.}
Work on the event $\mathcal E$ and fix $k\in\mathcal H$. If $c_\star$ is chosen large enough,
then \eqref{eq:pi-concentration-unified} implies
$|\widehat\pi_k(S)-\pi_k(S)|\le \frac{\pi_k(S)}2,$
so
$\widehat\pi_k(S)\in\left[\frac{\pi_k(S)}2,\frac{3\pi_k(S)}2\right].$
In particular,
\[
\frac1{\pi_k(S)^{p-1}}
\le
2^{p-1}\frac1{\widehat\pi_k(S)^{p-1}}.
\]
Also, by \eqref{eq:bias-concentration-unified} and $\pi_k(S)\ge \tau=c_\star L/T$,
\[
|B_k(S;g)|
\le
|\widehat B_k(S;g)|
+
C\left(\sqrt{\frac{\pi_k(S)L}{T}}+\frac{L}{T}\right)
\le
|\widehat B_k(S;g)|+C_p\sqrt{\frac{\pi_k(S)L}{T}}.
\]
Using $(u+v)^p\le 2^{p-1}(u^p+v^p)$, we obtain
\[
\frac{|B_k(S;g)|^p}{\pi_k(S)^{p-1}}
\le
2^{p-1}\frac{|\widehat B_k(S;g)|^p}{\pi_k(S)^{p-1}}
+
C_p\left(\frac{L}{T}\right)^{p/2}\pi_k(S)^{1-p/2}.
\]
Combining this with the denominator comparison yields
\[
\max_{g\in G}\frac{|B_k(S;g)|^p}{\pi_k(S)^{p-1}}
\le
C_p\max_{g\in G}\frac{|\widehat B_k(S;g)|^p}{\widehat\pi_k(S)^{p-1}}
+
C_p\left(\frac{L}{T}\right)^{p/2}\pi_k(S)^{1-p/2}.
\]
On the event $\mathcal E$, summing over $k\in\mathcal H$ and adding the light-bucket contribution gives
\begin{equation}
\label{eq:swap-heavy-prebound-clean}
\SMC_P^{\Gamma,(p)}(Q_S;G)
\le
C_p\,\widehat{\SMC}_T^{\Gamma,(p)}(S;G,\Lambda)
+
C_p\sum_{k\in\mathcal H}\left(\frac{L}{T}\right)^{p/2}\pi_k(S)^{1-p/2}
+
K\tau.
\end{equation}
We now explicitly bound the heavy-bucket sum in~\eqref{eq:swap-heavy-prebound-clean}, separately for $p \in [1, 2]$ and for $p > 2$.

If $1\le p<2$, then $a:=1-p/2\in(0,1)$. The concavity of $x\mapsto x^a$ and $\sum_{k=1}^K \pi_k(S)=1$ imply
\[
\sum_{k\in\mathcal H}\pi_k(S)^{1-p/2}
\le
\sum_{k=1}^K \pi_k(S)^a
\le
K^{1-a}
=
K^{p/2},
\]
For the case $p=2$, it also holds that $\sum_{k\in\mathcal H}\pi_k(S)^{1-p/2}
=
\sum_{k\in\mathcal H}1
=
|\mathcal H|
\le
K
=
K^{p/2}.$
Therefore, 
\[
\SMC_P^{\Gamma,(p)}(Q_S;G)
\le
C_p\,\widehat{\SMC}_T^{\Gamma,(p)}(S;G,\Lambda)
+
C_p\left[\left(\frac{KL}{T}\right)^{p/2}+\frac{KL}{T}\right] \text{ on the event } \mathcal{E} \: (\text{for } 1\le p\le 2).
\]
For $p\ge 2$, heavy buckets satisfy $\pi_k(S)\ge \tau$, so
$\pi_k(S)^{1-p/2}\le \tau^{1-p/2}$
and
$\sum\limits_{k\in\mathcal H}\pi_k(S)^{1-p/2}
\le
K\tau^{1-p/2}.$
Thus the heavy-bucket term in \eqref{eq:swap-heavy-prebound-clean} is at most
$C_p K\left(\frac{L}{T}\right)^{p/2}\tau^{1-p/2}
=
C_p\frac{KL}{T}.$
Thus, we obtain
\[
\SMC_P^{\Gamma,(p)}(Q_S;G)
\le
C_p\,\widehat{\SMC}_T^{\Gamma,(p)}(S;G,\Lambda)
+
C_p\frac{KL}{T}
\text{ on the event } \mathcal{E} \:
(\text{for } p\ge 2).
\]

Finally, both $\SMC_P^{\Gamma,(p)}(Q_S;G)$ and
$\widehat{\SMC}_T^{\Gamma,(p)}(S;G,\Lambda)$ lie in $[0,1]$, because every bucket
contribution is at most the corresponding bucket mass and the bucket masses sum to $1$.
Therefore
\[
\Ebb\bigl[\SMC_P^{\Gamma,(p)}(Q_S;G)\bigr]
\le
C_p\,\Ebb\bigl[\widehat{\SMC}_T^{\Gamma,(p)}(S;G,\Lambda)\bigr]
+
\text{transfer term}
+
\Pbb(\mathcal E^c).
\]
Since $\Pbb(\mathcal E^c)\le 1/T$ and $K,L\ge 1$, the final $1/T$ term is absorbed into
$C_p KL/T$. This proves the swap bound. The ordinary bound then follows from
Lemma~\ref{lem:swap-dominates}.
\end{proof}

\subsection{Instantiating the batch swap multicalibration algorithm}
\label{subsec:upper-plug-in}

\begin{proposition}[\cite{hu2025efficient}]
\label{prop:hu-online-input}
Assume the hypotheses of Theorem~\ref{thm:upper-main}. Then there is a grid-supported
online forecaster, i.e.\ a sequence of roundwise distributions
$q_t:\mathcal X\to\Delta(\Lambda)$, such that:

\begin{enumerate}
    \item if $1\le p\le 2$, the forecaster uses a grid of size
    $K=\widetilde\Theta(T^{1/3})$
    and satisfies
    \[
    \Ebb\!\left[\widehat{\SMC}_T^{\Gamma,(p)}(S;G,\Lambda)\right]
    \le
    \widetilde O_{p,\rho}\!\left((\log(2|G|))^{p/2}T^{-p/3}\right);
    \]

    \item if $p\ge 2$, the forecaster uses a grid of size
    $K=\widetilde\Theta(T^{1/(p+1)})$
    and satisfies
    \[
    \Ebb\!\left[\widehat{\SMC}_T^{\Gamma,(p)}(S;G,\Lambda)\right]
    \le
    \widetilde O_{p,\rho}\!\left((\log(2|G|))^{p/2}T^{-p/(p+1)}\right).
    \]
\end{enumerate}
\end{proposition}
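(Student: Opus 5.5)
This proposition is imported from \cite{hu2025efficient}; the plan is to translate their online guarantee into our transcript notation rather than re-derive it. First, identify their setting with ours. Their online adversary reveals $x_t$, the forecaster commits to a distribution over the grid $\Lambda$ (a $K$-point discretization of $[0,1]$), then $y_t$ is revealed. Our i.i.d.\ transcript $S\sim P^T$ is a special (oblivious, stochastic) adversary, so any almost-sure or expected regret-type bound they prove against adaptive adversaries applies verbatim. We must check that their metric is exactly $\widehat{\SMC}_T^{\Gamma,(p)}(S;G,\Lambda)$. They weight each bucket by $\widehat\pi_{T,k}^{1-p}$ and take the max over groups inside the bucket sum, and our objective matches this. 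The one difference to address is that we use the \emph{conditional probabilities} $q_{t,k}(X_t)$ rather than realized indicator predictions. Their analysis controls the expected-prediction version directly; if it controls only the realized version, one more Freedman/Azuma step (Lemma~\ref{lem:adaptive-freedman}) moves between the two, at cost $\widetilde O((KL/T)^{p/2}+KL/T)$. This cost is dominated by the target rate for the chosen $K$.

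Second, verify the hypotheses. Their Corollary~1 requires the conditional expected identification function $m_x(v)$ to be $\rho$-Lipschitz with $m_x(0)\le0\le m_x(1)$, and $|V|\le1$. These are exactly the assumptions of Theorem~\ref{thm:upper-main}. Binary groups are a special case of their bounded weight classes. The $\log(2|G|)^{p/2}$ factor comes from the $\sqrt{\log|G|}$ dependence of their per-bucket regret, raised to the $p$-th power.

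Third, do the rate bookkeeping. Their bound has the schematic form
\[
\widehat{\SMC}\lesssim K^{-p}\rho^p+\bigl(K\log|G|/T\bigr)^{p/2}\ (p\le2),\qquad \widehat{\SMC}\lesssim K^{-p}\rho^p+K\log|G|/T\ (p\ge2),
\]
up to polylogs. The first term is the discretization error: rounding to the grid shifts $m_x$ by at most $\rho/K$ per unit mass. The second term is the statistical or regret cost.
\begin{itemize}
\item For $p\le2$, balancing $K^{-p}$ against $(K/T)^{p/2}$ gives $K=T^{1/3}$ and the rate $T^{-p/3}$.
\item For $p\ge2$, balancing $K^{-p}$ against $K/T$ gives $K=T^{1/(p+1)}$ and the rate $T^{-p/(p+1)}$.
\end{itemize}
These match the two claimed items.

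The main obstacle is the faithful translation of their guarantee, not any new argument. Two points need care: their normalization of the swap $L_p$ metric, and whether their bound is in expectation over the forecaster's randomness or over realized predictions. I would first check whether their bound already uses the probability-weighted bucket quantities. If it does not, I would insert the concentration step described above.
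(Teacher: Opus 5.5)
Your plan is the same as the paper's. You invoke Hu et al.'s Corollary~1 under the identical Lipschitz/endpoint hypotheses, identify their normalized swap objective with $\widehat{\SMC}_T^{\Gamma,(p)}$ (cumulative bound divided by $T$), and read off $K=\widetilde\Theta(T^{1/3})$ (via the H\"older reduction from $p=2$) and $K=\widetilde\Theta(T^{1/(p+1)})$. Two small differences. The paper avoids your realized-versus-conditional concern by encoding the forecaster's grid prediction as the point mass $q_{t,k}(x):=\mathbf 1\{p_t(x)=v_k\}$, with any internal randomness absorbed into the history, so no extra concentration step is needed. It also makes explicit one step you leave implicit: it turns Hu et al.'s high-probability bound into the stated expectation bound by taking $\delta=1/T$ and using that the objective lies in $[0,1]$.
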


\begin{proof}
Hu et al.\ \cite[Corollary~1]{hu2025efficient}, specialized to the finite class $G$ of binary groups, give a
high-probability bound for cumulative swap multicalibration under exactly the displayed
Lipschitz and endpoint assumptions. We encode their deterministic grid prediction
$p_t:\mathcal X\to\Lambda$ in our notation by the point-mass vector
\[
q_{t,k}(x):=\mathbf 1\{p_t(x)=v_k\}.
\]
If the forecaster uses internal randomness, we absorb it into the history as before. With this
choice of $q_t$, our empirical swap objective
$\widehat{\SMC}_T^{\Gamma,(p)}(S;G,\Lambda)$ is exactly the normalized swap
objective of \cite{hu2025efficient}. 
We now recall the bounds that they give for $p \in [1, 2]$ and $p \geq 2$ (coinciding at $p=2$).

For $p\ge 2$, their finite-class construction uses a grid of size
$K=\widetilde\Theta(T^{1/(p+1)})$, and Corollary~1 gives the cumulative swap bound
$\widetilde O_{p,\rho}\!\left((\log(2|G|)+\log(1/\delta))^{p/2}T^{1/(p+1)}\right)$
with probability at least $1-\delta$. Dividing by $T$ yields (on the same event) the normalized empirical bound
\[
\widehat{\SMC}_T^{\Gamma,(p)}(S;G,\Lambda)
\le
\widetilde O_{p,\rho}\!\left((\log(2|G|)+\log(1/\delta))^{p/2}T^{-p/(p+1)}\right).
\]

For $1\le p<2$, Corollary~1 gives the cumulative swap bound
$\widetilde O_{p,\rho}\!\left((\log(2|G|)+\log(1/\delta))^{p/2}T^{1-p/3}\right)$
with probability at least $1-\delta$. In this regime Hu et al.\ obtain the displayed $p<2$
rate from the same finite-class construction with grid size $K=\widetilde\Theta(T^{1/3})$
(via their H\"older reduction from the $p=2$ case). Dividing by $T$ yields (on the same event) the normalized empirical bound
\[
\widehat{\SMC}_T^{\Gamma,(p)}(S;G,\Lambda)
\le
\widetilde O_{p,\rho}\!\left((\log(2|G|)+\log(1/\delta))^{p/2}T^{-p/3}\right).
\]

Finally, the empirical swap error always lies in $[0,1]$, so setting $\delta:=1/T$ and using
$\Ebb[X]\le a+\delta$ whenever $0\le X\le 1$ and $X\le a$ with probability at least
$1-\delta$, we obtain the displayed expectation bound after absorbing the extra
$T^{-1}$ term into the $\widetilde O(\cdot)$ notation.
\end{proof}

\begin{proof}[Proof of Theorem~\ref{thm:upper-main}]
Apply Proposition~\ref{prop:hu-online-input} and then Proposition~\ref{prop:otb-swap-all-p}
to the resulting online forecaster.
If $1\le p\le 2$, then $K=\widetilde\Theta(T^{1/3})$, and the transfer term in
Proposition~\ref{prop:otb-swap-all-p} is therefore
$\widetilde O_p\!\left(\left(\frac{KL}{T}\right)^{p/2}+\frac{KL}{T}\right)
=
\widetilde O_p(T^{-p/3}),$
which is of the same order as the empirical online bound from
Proposition~\ref{prop:hu-online-input}. Hence
\[
\Ebb\bigl[\SMC_P^{\Gamma,(p)}(Q_S;G)\bigr]
\le
\widetilde O_{p,\rho}\!\left((\log(2|G|))^{p/2}T^{-p/3}\right)
\qquad (1\le p\le 2).
\]

If $p\ge 2$, then $K=\widetilde\Theta(T^{1/(p+1)})$, so the transfer term becomes
$\widetilde O_p\!\left(\frac{KL}{T}\right)
=
\widetilde O_p(T^{-p/(p+1)}),$
again of the same order as the empirical online bound. Therefore
\[
\Ebb\bigl[\SMC_P^{\Gamma,(p)}(Q_S;G)\bigr]
\le
\widetilde O_{p,\rho}\!\left((\log(2|G|))^{p/2}T^{-p/(p+1)}\right)
\qquad (p\ge 2).
\]

If the expected error is at most $\varepsilon/3$, the output will
satisfy
$\Pbb\bigl(\SMC_P^{\Gamma,(p)}(Q_S;G)\le \varepsilon\bigr)\ge \frac23.$ by Markov's inequality.
Solving the displayed expectation bounds for $T$ yields exactly the sample-complexity
bounds in the theorem. The ordinary bound follows from Lemma~\ref{lem:swap-dominates}.
\end{proof}

\subsection{Sharper upper bound for mean ECE multicalibration}
\label{subsec:upper-mean-sharp}

\begin{proposition}[ECE online-to-batch reduction]
\label{prop:mean-ece-otb}
Consider the mean property, with the identification function $V(v,y)=v-y$. Then, there is a universal constant $C_{\mathrm{mean}}>0$
such that
\[
\Ebb_{S\sim P^T}\!\left[\MC_P^{\Gamma,(1)}(Q_S;G)\right]
\le
\Ebb_{S\sim P^T}\!\left[\widehat{\MC}_T^{\Gamma,(1)}(S;G,\Lambda)\right]
+
C_{\mathrm{mean}}\sqrt{\frac{K+\log(2|G|)+1}{T}},
\]
where $\Gamma$ is the mean property, so that the ordinary weighted $L_1$ objective is exactly
mean ECE.
\end{proposition}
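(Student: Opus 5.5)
The plan is to linearize the $L_1$ objective by duality and then apply Azuma--Hoeffding with a union bound over groups and sign patterns. For $p=1$ there are no mass denominators, so for each $g$
\[
\Err_P(Q_S;g)=\sum_{k=1}^K |B_k(S;g)| = \max_{\sigma\in\{\pm1\}^K}\sum_{k=1}^K \sigma_k B_k(S;g),
\]
and similarly for the empirical error with $\widehat B_k(S;g)$. Here $B_k(S;g)=\Ebb[g(X)q_{S,k}(X)(v_k-Y)]$ is the population bias of the averaged predictor $Q_S$. I will use the same martingale decomposition as in the proof of Proposition~\ref{prop:otb-swap-all-p}, namely $B_k(S;g)=\frac1T\sum_t b_{t,k}(g)$ with $b_{t,k}(g)=\Ebb[g(X_t)q_{t,k}(X_t)(v_k-Y_t)\mid H_{t-1}]$. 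This holds because $q_t$ is $H_{t-1}$-measurable and $(X_t,Y_t)$ is a fresh draw from $P$. Consequently, $B_k(S;g)-\widehat B_k(S;g)=-\frac1T\sum_t M_{t,k}(g)$.

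Fix $g$ and a sign vector $\sigma\in\{\pm1\}^K$, and set $Z_t(g,\sigma):=\sum_k\sigma_k M_{t,k}(g)$. This is a martingale difference sequence. The key point is that it is bounded by a constant independent of $K$: since $\sum_k q_{t,k}(X_t)=1$ and $|v_k-Y_t|\le1$, we have $|\sum_k\sigma_k g(X_t)q_{t,k}(X_t)(v_k-Y_t)|\le 1$, and the same bound holds for its conditional expectation, so $|Z_t|\le 2$. By Azuma--Hoeffding,
\[
\Pbb\Bigl(\Bigl|\tfrac1T\textstyle\sum_t Z_t(g,\sigma)\Bigr|>s\Bigr)\le 2e^{-Ts^2/8}.
\]
Take a union bound over the $|G|2^K$ pairs $(g,\sigma)$. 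Then, with probability at least $1-\delta$, simultaneously for all $g,\sigma$,
\[
\sum_k\sigma_k B_k(S;g)\le \sum_k \sigma_k\widehat B_k(S;g)+\sqrt{8(K\log 2+\log(2|G|)+\log(1/\delta))/T}.
\]

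On this event, maximize over $\sigma$ and then over $g$. This gives $\MC_P^{\Gamma,(1)}(Q_S;G)\le \widehat{\MC}_T^{\Gamma,(1)}(S;G,\Lambda)+\text{deviation}$, where the empirical side is bounded above by $\max_\sigma\sum_k\sigma_k\widehat B_k=\sum_k|\widehat B_k|$. For the expectation, I use that both the population and empirical errors lie in $[0,1]$, since $\sum_k|B_k|\le\sum_k\pi_k=1$. Choosing $\delta=1/\sqrt T$ (or simply $\delta=e^{-1}$-scale and integrating the tail) makes the failure contribution $O(1/\sqrt T)$, which is absorbed into $C_{\mathrm{mean}}\sqrt{(K+\log(2|G|)+1)/T}$. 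A cleaner alternative is to integrate the tail bound directly, writing $\Ebb[\max_{g,\sigma}\cdot]\le \sqrt{8\log(2|G|2^K)/T}+O(\sqrt{1/T})$ via the standard maximal inequality for sub-Gaussian variables.

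The only delicate point is the boundedness of $Z_t$ uniformly in $K$. A naive per-bucket bound would give $|Z_t|\le 2K$ and an extra factor of $K$. The partition-of-unity observation $\sum_k q_{t,k}=1$ is what yields the $\sqrt{K/T}$ rate instead. Everything else is a routine union bound.
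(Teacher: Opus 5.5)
Your proposal is correct and follows essentially the same route as the paper. You linearize $\sum_k |B_k|$ as a maximum over sign vectors $\sigma\in\{\pm1\}^K$, use $\sum_k q_{t,k}(X_t)=1$ to bound the combined martingale increments by $2$ independently of $K$, apply Azuma--Hoeffding with a union bound over the $|G|2^K$ pairs $(g,\sigma)$, and pass to expectations by integrating the tail.

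One caution concerns your first option $\delta=1/\sqrt{T}$. It puts $\log(1/\delta)=\tfrac12\log T$ inside the deviation term, so it yields only $\sqrt{(K+\log(2|G|)+\log T)/T}$. That is not the stated universal-constant bound when $K+\log|G|\ll \log T$. The tail-integration (or sub-Gaussian maximal inequality) route you mention second is the one that actually proves the proposition, and it is what the paper does.
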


\begin{proof}
For $\sigma=(\sigma_1,\dots,\sigma_K)\in\{\pm 1\}^K$, define
\[
A^{g,\sigma}_t:=g(X_t)\sum_{k=1}^K \sigma_k q_{t,k}(X_t)(v_k-Y_t).
\]
Also define the population and empirical signed scores
\[
a_{g,\sigma}(S):=\sum_{k=1}^K \sigma_k\,\Ebb_{(X,Y)\sim P}[g(X)q_{S,k}(X)(v_k-Y)]
\qquad \text{and} \qquad
\widehat a_{g,\sigma}(S):=\frac1T\sum_{t=1}^T A^{g,\sigma}_t.
\]
Using the definition of $q_{S,k}$ and then averaging over $t$,
\[
a_{g,\sigma}(S)
=
\frac1T\sum_{t=1}^T
\Ebb_{(X,Y)\sim P}\!\left[g(X)\sum_{k=1}^K \sigma_k q_{t,k}(X)(v_k-Y)\right].
\]
Since $q_t$ is $H_{t-1}$-measurable and $(X_t,Y_t)$ is a fresh draw from $P$ independent of
$H_{t-1}$,
\[
\Ebb[A^{g,\sigma}_t\mid H_{t-1}]
=
\Ebb_{(X,Y)\sim P}\!\left[g(X)\sum_{k=1}^K \sigma_k q_{t,k}(X)(v_k-Y)\right] \text{ almost surely.}
\]
Thus, letting $(M^{g,\sigma}_t)_{t=1}^T$ be a martingale difference sequence defined as $M^{g,\sigma}_t:=\Ebb[A^{g,\sigma}_t\mid H_{t-1}]-A^{g,\sigma}_t$,
\[
a_{g,\sigma}(S)-\widehat a_{g,\sigma}(S)=\frac1T\sum_{t=1}^T M^{g,\sigma}_t.
\]
Because $g(X_t)\in\{0,1\}$, the coefficients $q_{t,k}(X_t)$ form a probability vector, and each
$v_k-Y_t\in[-1,1]$, we have $A^{g,\sigma}_t\in[-1,1]$ and hence
$|M^{g,\sigma}_t|\le 2.$
Thus, by Azuma--Hoeffding, for every $(g,\sigma)$ and $\eta>0$,
\[
\Pbb\bigl(a_{g,\sigma}(S)-\widehat a_{g,\sigma}(S)\ge \eta\bigr)
\le
\exp\!\left(-\frac{T\eta^2}{8}\right).
\]
Now define (with the second equality using that $-\sigma\in\{\pm 1\}^K$ whenever $\sigma\in\{\pm 1\}^K$):
\[
\Xi(S):=\max_{g\in G}\max_{\sigma\in\{\pm 1\}^K}
\bigl(a_{g,\sigma}(S)-\widehat a_{g,\sigma}(S)\bigr) 
=\max_{g\in G}\max_{\sigma\in\{\pm 1\}^K}
\left|a_{g,\sigma}(S)-\widehat a_{g,\sigma}(S)\right|.
\]
For every real vector $(b_1,\dots,b_K)$,
\[
\sum_{k=1}^K |b_k| = \max_{\sigma\in\{\pm 1\}^K}\sum_{k=1}^K \sigma_k b_k.
\]
Applying this identity bucketwise shows that
\[
\MC_P^{\Gamma,(1)}(Q_S;G)
\le
\widehat{\MC}_T^{\Gamma,(1)}(S;G,\Lambda)+\Xi(S).
\]
Thus, it suffices to bound $\Ebb[\Xi(S)]$. First, since there are $|G|2^K$ pairs $(g,\sigma)$, a union bound gives
\[
\Pbb\bigl(\Xi(S)\ge \eta\bigr)
\le
|G|2^K\exp\!\left(-\frac{T\eta^2}{8}\right)
\qquad (\eta>0).
\]
Now, let
$N:=\max\{e,|G|2^K\},
\:
\eta_0:=\sqrt{\frac{8\log N}{T}}.$
Since $\Xi(S)\ge 0$,
\[
\Ebb[\Xi(S)]
=
\int_0^\infty \Pbb\bigl(\Xi(S)\ge \eta\bigr)\,d\eta
\le
\eta_0+
\int_{\eta_0}^\infty N\exp\!\left(-\frac{T\eta^2}{8}\right)d\eta.
\]
Using the Gaussian tail bound
$\int_u^\infty e^{-s^2}\,ds\le \frac{e^{-u^2}}{2u}
\: (u>0),$ and that $N \geq e$,
we can see that the second term is at most $4N(T\eta_0)^{-1}e^{-T\eta_0^2/8}\le \eta_0$. Hence, for a universal constant $C_{\mathrm{mean}}>0$,
\[
\Ebb[\Xi(S)]\le 2\eta_0\le C_{\mathrm{mean}}\sqrt{\frac{K+\log(2|G|)+1}{T}}. \qedhere
\]
\end{proof}

\begin{lemma}[Rounding interval buckets to a prediction grid]
\label{lem:bucket-rounding}
Fix an integer $K\ge 1$, let
\[
I_k:=
\begin{cases}
\left[\dfrac{k-1}{K},\dfrac{k}{K}\right), & k=1,\dots,K-1,\\[1.2ex]
\left[\dfrac{K-1}{K},1\right], & k=K,
\end{cases}
\qquad
v_k:=\frac{2k-1}{2K}.
\]
For each round $t$, let $\bar p_t$ be a probability distribution on $[0,1]$, and define
\[
q_{t,k}:=\Pbb_{P\sim \bar p_t}[P\in I_k].
\]
Then for every choice of labels $y_1,\dots,y_T\in[0,1]$ and weights $g_1,\dots,g_T\in\{0,1\}$,
\[
\sum_{k=1}^K \left|\frac1T\sum_{t=1}^T g_t q_{t,k}(v_k-y_t)\right|
\le
\frac{1}{2K}
+
\sum_{k=1}^K
\left|\frac1T\sum_{t=1}^T \Ebb_{P\sim \bar p_t}[g_t\mathbf 1\{P\in I_k\}(P-y_t)]\right|.
\]
\end{lemma}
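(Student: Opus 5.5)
The plan is to compare, bucket by bucket, the grid-rounded bias with the unrounded bias, and bound the difference by the rounding distance. For each $k$, write $q_{t,k} = \Ebb_{P\sim \bar p_t}[\mathbf 1\{P\in I_k\}]$. The $k$-th rounded term then becomes
\[
\frac1T\sum_{t=1}^T g_t q_{t,k}(v_k-y_t)
=
\frac1T\sum_{t=1}^T \Ebb_{P\sim \bar p_t}\bigl[g_t\mathbf 1\{P\in I_k\}(P-y_t)\bigr]
+
\frac1T\sum_{t=1}^T \Ebb_{P\sim \bar p_t}\bigl[g_t\mathbf 1\{P\in I_k\}(v_k-P)\bigr].
\]
This is just linearity of expectation, obtained by adding and subtracting $P$ inside the indicator-weighted expectation.

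Next, apply the triangle inequality to each $k$ and sum over $k$. The first pieces sum exactly to the right-hand side sum in the lemma. It remains to show that the sum over $k$ of the absolute values of the second pieces is at most $1/(2K)$.

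For the error term, move the absolute value inside. Since $g_t\in\{0,1\}$ and $|v_k-P|\le 1/(2K)$ whenever $P\in I_k$ (as $v_k$ is the midpoint of an interval of length $1/K$, and the closed last interval keeps this true), we get
\[
\sum_{k=1}^K\Bigl|\frac1T\sum_t \Ebb[g_t\mathbf 1\{P\in I_k\}(v_k-P)]\Bigr|
\le
\frac1T\sum_t \Ebb_{P\sim\bar p_t}\Bigl[\sum_k \mathbf 1\{P\in I_k\}\,|v_k-P|\Bigr]
\le \frac{1}{2K}.
\]
The last inequality uses that the $I_k$ partition $[0,1]$, so exactly one indicator is active for each $P$.

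There is no real obstacle here. The only points needing care are that the intervals form a partition of $[0,1]$, including the endpoint $1$, and that $\bar p_t$ is supported on $[0,1]$; together these ensure $\sum_k q_{t,k}=1$ and the $1/(2K)$ pointwise rounding bound.
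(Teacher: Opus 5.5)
Your proposal is correct and follows the paper's proof essentially step for step: add and subtract $P$ inside the indicator-weighted expectation, apply the triangle inequality bucketwise, then bound the rounding term by $\frac{1}{2K}$ using $|v_k-P|\le \frac{1}{2K}$ on $I_k$, $g_t\le 1$, and the fact that the $I_k$ partition $[0,1]$. No gaps.
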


\begin{proof}
For each $k$,
\[
\frac1T\sum_{t=1}^T g_t q_{t,k}(v_k-y_t)
=
\frac1T\sum_{t=1}^T \Ebb_{P\sim \bar p_t}[g_t\mathbf 1\{P\in I_k\}(P-y_t)]
+
\frac1T\sum_{t=1}^T \Ebb_{P\sim \bar p_t}[g_t\mathbf 1\{P\in I_k\}(v_k-P)].
\]
Summing the triangle inequality over $k$ therefore yields
\[
\sum_{k=1}^K \left|\frac1T\sum_{t=1}^T g_t q_{t,k}(v_k-y_t)\right|
\le
\sum_{k=1}^K
\left|\frac1T\sum_{t=1}^T \Ebb_{P\sim \bar p_t}[g_t\mathbf 1\{P\in I_k\}(P-y_t)]\right|
+
\sum_{k=1}^K |D_k|,
\]
where
\[
D_k:=\frac1T\sum_{t=1}^T \Ebb_{P\sim \bar p_t}[g_t\mathbf 1\{P\in I_k\}(v_k-P)].
\]
If $P\in I_k$, then $|v_k-P|\le 1/(2K)$. Hence we obtain the claim by noting that
\[
\sum_{k=1}^K |D_k|
\le
\frac{1}{2KT}\sum_{t=1}^T g_t \sum_{k=1}^K \Pbb_{P\sim \bar p_t}[P\in I_k]
=
\frac{1}{2KT}\sum_{t=1}^T g_t
\le
\frac{1}{2K}. \qedhere
\]
\end{proof}

\begin{proof}[Proof of Theorem~\ref{cor:mean-ece-sharper}]
Let
$K:=\max\!\left\{1,\left\lceil\left(\frac{T}{\log(2|G|T)}\right)^{1/3}\right\rceil\right\},
\:
\Lambda:=\left\{\frac{2k-1}{2K}:k\in[K]\right\}.$
Run the online algorithm of \cite{noarov2025high} on the i.i.d.\ transcript
$S=((X_1,Y_1),\dots,(X_T,Y_T))\sim P^T.$
At round $t$, after observing the history $H_{t-1}$, let
$\bar p_t=\bar p_t^{H_{t-1}}:\mathcal X\to\Delta([0,1])$
denote the resulting randomized prediction rule. 

For each interval bucket $I_k$ from
Lemma~\ref{lem:bucket-rounding}, define the rounded grid weights by
$q_{t,k}(x):=\Pbb_{P\sim \bar p_t(x)}[P\in I_k].$
Since the intervals $(I_k)_{k=1}^K$ partition $[0,1]$, this defines a roundwise distribution-valued
rule $q_t:\mathcal X\to\Delta(\Lambda)$ and therefore an averaged batch predictor $Q_S$ as in
Subsection~\ref{subsec:upper-setup}.

Fix $g\in G$ and $k\in[K]$, and consider the bucket event
\[
E_{g,k}(x,p):=g(x)\mathbf 1\{p\in I_k\}.
\]
Define the incidence of the bucket event as
$n_{g,k}(S):=\sum_{t=1}^T \Ebb_{P\sim \bar p_t(X_t)}[E_{g,k}(X_t,P)].$
Note that $\sum_{k=1}^K n_{g,k}(S)
=
\sum_{t=1}^T g(X_t) \sum_{k=1}^K q_{t,k}(X_t)
\le T$, since $\sum_{k=1}^K q_{t,k}(X_t)=1$ and $g(X_t)\in\{0,1\}$.

We use Noarov et al.\ \cite[Theorem~2.4]{noarov2025high} in the pathwise bias-guarantee form, so the
following inequality holds for every realized transcript $S$. Applied to the event family
$\{E_{g,k}:g\in G,\ k\in[K]\}$, it gives simultaneously for all $g\in G$ and $k\in[K]$ the
cumulative bias bound
\[
\left|
\sum_{t=1}^T
\Ebb_{P\sim \bar p_t(X_t)}[g(X_t)\mathbf 1\{P\in I_k\}(P-Y_t)]
\right|
\le
C_N\left(L+\sqrt{n_{g,k}(S)L}\right)
\]
where $L:=\log(2K|G|T)$ and $C_N>0$ is a constant.
Dividing this cumulative bound by $T$, summing over $k$, and using Cauchy--Schwarz with $\sum_k n_{g,k}(S)\le T$ gives:
\[
\sum_{k=1}^K
\left|\frac1T\sum_{t=1}^T
\Ebb_{P\sim \bar p_t(X_t)}[g(X_t)\mathbf 1\{P\in I_k\}(P-Y_t)]
\right|
\le
C_N\frac{KL}{T}+C_N\sqrt{\frac{KL}{T}}.
\]
Now, to round the interval predictions to a prediction grid, we apply Lemma~\ref{lem:bucket-rounding} with $g_t:=g(X_t)$ and $y_t:=Y_t$ and take the
maximum over $g\in G$, yielding the empirical mean-ECE bound
\[
\widehat{\MC}_T^{\Gamma,(1)}(S;G,\Lambda)
\le
\tfrac{1}{2K}+C_N\tfrac{KL}{T}+C_N\sqrt{\tfrac{KL}{T}}.
\]
Taking expectations and applying the online-to-batch reduction Proposition~\ref{prop:mean-ece-otb}, we then obtain (substituting the choice of $K$ from above) the bound:
\[
\Ebb\bigl[\MC_P^{\Gamma,(1)}(Q_S;G)\bigr]
\le
\tfrac{1}{2K}+C_N\tfrac{KL}{T}+C_N\sqrt{\tfrac{KL}{T}}
+C_{\mathrm{mean}}\sqrt{\tfrac{K+\log(2|G|)+1}{T}}
= \widetilde O\!\left(\left(\tfrac{\log(2|G|T)}{T}\right)^{1/3}\right).
\]
Since the objective is always in $[0,1]$, Markov's inequality converts this expectation bound into
success probability at least $2/3$ after adjusting constants. Finally, solving for $T$ yields
\[
T=\widetilde O\!\left(\varepsilon^{-3}\log(2|G|)\right). \qedhere
\]
\end{proof}

\section{Discussion and Open Problems}
An interesting question that our paper leaves open is whether randomization is \emph{necessary} to achieve the minimax optimal sample complexity rates that we derive. In the notation of Definition~\ref{def:sample-complexity}, this asks whether restricting the infimum in $\SC_{\mathrm{mean\text{-}ECE}}^{(\kappa)}(\varepsilon)$, $\SC_{L_p}^{(\kappa)}(\varepsilon)$, and their property-specific analogues to learners that output \emph{deterministic} models changes the asymptotic rate. One natural route to derandomization is to (randomly) fix the model's internal randomness in advance and appeal to concentration: if no single context $x$ carries substantial probability mass, then a random realization of the predictor should preserve its calibration guarantees on most of the distribution. This heuristic breaks down in the presence of atoms of nontrivial mass, where a single unlucky realization at one context can create a large calibration error. It would be interesting to understand whether deterministic predictors can match the randomized minimax rates in full generality, or whether there is a genuine gap between deterministic and randomized multicalibration.

Our results characterize the minimax dependence on $\varepsilon$ for mean ECE and weighted $L_p$ multicalibration throughout the entire regime of polynomial group growth, and, under the corresponding upper-bound hypotheses, the same exponents extend to relevant concrete property classes such as quantiles and expectiles. The point is that our lower-bound constructions themselves use only polylogarithmically many groups: after choosing the hard-instance parameter as a function of $\varepsilon$, we have $|G|=\mathrm{polylog}(1/\varepsilon)$. Consequently the sharp lower bounds $\widetilde{\Omega}(\varepsilon^{-3})$ for mean ECE and $\widetilde{\Omega}(\varepsilon^{-3/p})$ for weighted $L_p$ already fit every polynomial budget $|G|\le \varepsilon^{-\kappa}$ with fixed $\kappa>0$, and therefore match the corresponding upper bounds up to polylogarithmic factors.

Some growth in $|G|$ is nevertheless essential. For a fixed binary group family $G$, the groups induce a partition of the domain into at most $2^{|G|}$ parts, corresponding to the possible intersection patterns of the groups. Estimating the mean on each cell and predicting that cellwise empirical mean yields the trivial upper bound $\widetilde{O}(2^{|G|}/\varepsilon^2)$ for mean ECE. Thus one cannot hope for a uniform $\widetilde{\Omega}(\varepsilon^{-3})$ lower bound when $|G|$ is fixed, and at the exponent level the constant-budget regime behaves like ordinary mean estimation.

The remaining open problem is therefore to determine the optimal joint dependence of the minimax sample complexity on both $\varepsilon$ and the group-budget parameter $M$. In particular, it would be very interesting to know whether the current $\widetilde{O}((\log |G|)/\varepsilon^3)$ upper bounds for mean ECE can be improved to $\widetilde{O}(f(|G|)/\varepsilon^3)$ for some $f(x)=o(\log x)$, or whether such sublogarithmic dependence can be ruled out in regimes where $|G|$ grows with $1/\varepsilon$ but remains subpolynomial. More generally, one can ask for the sharp interpolation between the trivial $2^{|G|}/\varepsilon^2$ upper bound for small $|G|$ and the $\varepsilon^{-3/p}$ behavior that our lower bounds show is unavoidable throughout the polynomial-budget regime, as well as for analogous statements in terms of structural complexity parameters such as VC dimension.

\subsection*{Acknowledgments}

The authors used AI tools, specifically GPT 5.4 Pro, and GPT 5.4 in the Codex environment in the development of this paper; all of the final theorems and proofs are written and verified by the authors, and all of  the exposition and discussion of related work was written without AI assistance.  

\bibliographystyle{alpha}
\bibliography{refs}

\appendix

\section{Regularity Verifications}
\label{app:regular-verifications}

\subsection{The mean property}

\begin{proposition}[The mean admits a Bernoulli regularity witness]
\label{prop:mean-regular-app}
Let $\Gamma_{\mathrm{mean}}$ be the mean property,
\[
V_{\mathrm{mean}}(v,y):=v-y,
\]
and $D_t:=\Ber(t)$. Then for every closed interval $I_0\subset(0,1)$, the
family $(\Gamma_{\mathrm{mean}},V_{\mathrm{mean}},\{D_t:t\in I_0\})$ is a
regularity witness on $I_0$. Moreover, on $I_0=[1/4,3/4]$ one may take
\[
c_\Gamma=1,
\qquad
C_{\Gamma,\mathrm{KL}}=\frac{16}{3}.
\]
\end{proposition}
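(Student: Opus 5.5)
The plan is to verify the three conditions of Definition~\ref{def:regular} directly for $D_t=\Ber(t)$ with $t\in I_0=[\alpha,\beta]\subset(0,1)$, and then to specialize the constants to $I_0=[1/4,3/4]$.

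\emph{Conditions (1) and (2).} Condition (1) holds because the mean of $\Ber(t)$ is $t$. For condition (2), compute
\[
M_\Gamma(v,t)=\Ebb_{Y\sim\Ber(t)}[v-Y]=v-t .
\]
This gives $(v-t)M_\Gamma(v,t)=(v-t)^2$, so condition (2) holds with $c_\Gamma=1$ on every interval and for every $v\in[0,1]$.

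\emph{Condition (3).} The only real content is the KL bound. I will use the standard $\chi^2$ upper bound
\[
\KL(\Ber(t)\,\|\,\Ber(t'))\le \chi^2(\Ber(t)\,\|\,\Ber(t')).
\]
It follows from $\log x\le x-1$ applied inside $\KL(P\|Q)=\Ebb_P[\log(P/Q)]\le \Ebb_P[P/Q]-1$. The $\chi^2$ divergence between Bernoullis is
\[
\chi^2=\frac{(t-t')^2}{t'}+\frac{(t-t')^2}{1-t'}=\frac{(t-t')^2}{t'(1-t')}.
\]
Hence $\KL(D_t\|D_{t'})\le (t-t')^2/\min_{s\in I_0}s(1-s)$. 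The minimum is positive because $I_0$ is a closed subinterval of $(0,1)$, so condition (3) holds with $C_{\Gamma,\mathrm{KL}}=1/\min(\alpha(1-\alpha),\beta(1-\beta))$.

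\emph{Specializing to $[1/4,3/4]$.} On this interval the minimum of $s(1-s)$ is $3/16$, attained at the endpoints. This gives $C_{\Gamma,\mathrm{KL}}=16/3$ exactly as claimed, together with $c_\Gamma=1$.

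There is no real obstacle here. The only step requiring care is justifying the $\chi^2$ domination of KL, which is the one-line $\log x\le x-1$ argument above, and confirming that the resulting constant matches the stated $16/3$.
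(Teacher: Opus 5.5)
Your proposal is correct and follows essentially the same route as the paper: conditions (1) and (2) from $M_\Gamma(v,t)=v-t$, and condition (3) by applying $\log x\le x-1$ to obtain $\KL(\Ber(t)\,\|\,\Ber(t'))\le (t-t')^2/(t'(1-t'))$, then bounding $t'(1-t')$ below by $3/16$ on $[1/4,3/4]$. The only difference is that you name this intermediate quantity as the $\chi^2$ divergence, whereas the paper computes it termwise.
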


\begin{proof}
For $Y\sim \Ber(t)$,
\[
M_\Gamma(v,t)=\Ebb[v-Y]=v-t,
\]
so
\[
(v-t)M_\Gamma(v,t)=(v-t)^2.
\]
Thus condition~(2) in Definition~\ref{def:regular} holds with $c_\Gamma=1$, and
condition~(1) is immediate.

Fix any closed interval $I_0\subset(0,1)$ and let
\[
\delta_{I_0}:=\min_{u\in I_0} u(1-u)>0.
\]
If $p,q\in I_0$, then by $\log x\le x-1$,
\begin{align*}
\KL(\Ber(p)\,\|\,\Ber(q))
&=p\log\frac{p}{q}+(1-p)\log\frac{1-p}{1-q}\\
&\le p\left(\frac{p}{q}-1\right)+(1-p)\left(\frac{1-p}{1-q}-1\right)\\
&=\frac{(p-q)^2}{q(1-q)}
\le \frac{(p-q)^2}{\delta_{I_0}}.
\end{align*}
This proves condition~(3) on $I_0$. On $I_0=[1/4,3/4]$, the same calculation
sharpens to
\[
\KL(\Ber(p)\,\|\,\Ber(q))\le \frac{(p-q)^2}{(1/4)(3/4)}=\frac{16}{3}(p-q)^2,
\]
which is the claimed constant.
\end{proof}

\subsection{Expectiles}

\begin{definition}[$\tau$-expectile]
Fix an expectile level $\tau\in(0,1)$. For a distribution $\nu$ on $[0,1]$, the $\tau$-expectile $\Gamma_\tau(\nu)$ is the unique value $v\in[0,1]$ satisfying
\[
\E_{Y\sim \nu}\!\left[\mathsf{V}_\tau(v,Y)\right]=0,
\qquad
\mathsf{V}_\tau(v,y):=
\left|\tau-\ind{y\le v}\right|(v-y).
\]
\end{definition}

\begin{proposition}[Expectiles admit a regularity witness]
\label{prop:expectile-regular-app}
Fix $\tau\in(0,1)$. For $t\in[1/4,3/4]$, define
\[
p_\tau(t):=\frac{(1-\tau)t}{\tau+(1-2\tau)t}
\qquad\text{and}\qquad
D_t:=\Ber\bigl(p_\tau(t)\bigr).
\]
Then $(\Gamma_\tau,V_\tau,\{D_t:t\in[1/4,3/4]\})$ is a regularity witness. Moreover, for every
distribution $\nu$ on $[0,1]$, the map
\[
v\mapsto \Ebb_{Y\sim \nu}[V_\tau(v,Y)]
\]
is $1$-Lipschitz on $[0,1]$.
\end{proposition}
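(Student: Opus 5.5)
The plan is to verify the three conditions of Definition~\ref{def:regular} directly for the Bernoulli family, and then to check the Lipschitz claim separately. Throughout, $Y\sim \Ber(p)$ with $p=p_\tau(t)$.

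\emph{Condition (1).} For $v\in(0,1)$ we have $\ind{Y\le v}=\ind{Y=0}$. Hence
\[
M(v,t):=\Ebb[V_\tau(v,Y)]=(1-\tau)(1-p)\,v-\tau p\,(1-v).
\]
Solving $M(t,t)=0$ gives $(1-\tau)(1-p)t=\tau p(1-t)$, that is, $p=\frac{(1-\tau)t}{(1-\tau)t+\tau(1-t)}=\frac{(1-\tau)t}{\tau+(1-2\tau)t}$. This is exactly $p_\tau(t)$. Uniqueness of the root follows from strict monotonicity in $v$, shown next, so $\Gamma_\tau(D_t)=t$.

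\emph{Condition (2).} The map $M(\cdot,t)$ is affine in $v$ on $(0,1)$, with slope $(1-\tau)(1-p)+\tau p\ge\min(\tau,1-\tau)$. At the endpoints I check separately, using $V_\tau$ with the indicator. At $v=0$: $\ind{Y\le 0}=\ind{Y=0}$, so the same formula holds. At $v=1$: every $Y\le 1$, so the weight becomes $1-\tau$ and $M(1,t)=(1-\tau)(1-p)$; this must be compared with the affine extension, which equals $(1-\tau)(1-p)$ as well, so it agrees. Since $M(t,t)=0$, we get $M(v,t)=s(v-t)$ with $s\ge \min(\tau,1-\tau)=:c_\Gamma$, which gives condition (2).

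\emph{Condition (3).} I use the Bernoulli chi-square bound from Proposition~\ref{prop:mean-regular-app}:
\[
\KL(\Ber(p)\|\Ber(q))\le \frac{(p-q)^2}{q(1-q)}.
\]
For $t\in[1/4,3/4]$, the value $p_\tau(t)$ stays in a compact subinterval of $(0,1)$, because the numerator and denominator are bounded away from $0$ and $p<1$ strictly. So $q(1-q)$ is bounded below by a $\tau$-dependent constant. Moreover, $p_\tau$ is smooth with derivative $\tau(1-\tau)/(\tau+(1-2\tau)t)^2$, which is bounded on the interval, so $|p-q|\le C_\tau|t-t'|$. Together these give $C_{\Gamma,\mathrm{KL}}$. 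One must also note $[1/4,3/4]\subset(0,1)$ is nondegenerate.

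\emph{Lipschitz claim.} For fixed $y$, the function $v\mapsto V_\tau(v,y)$ equals $(1-\tau)(v-y)$ for $v\ge y$ and $\tau(v-y)$ for $v<y$. It is continuous at $v=y$ and piecewise linear with slopes $\tau,1-\tau\le1$, hence $1$-Lipschitz. Averaging over $Y\sim\nu$ preserves this. The only mild obstacle is the bookkeeping at the endpoints $v\in\{0,1\}$ and the weak inequality in the indicator, and these are handled as above.
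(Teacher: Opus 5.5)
Your proof is correct and follows essentially the same route as the paper. You write $\Ebb[V_\tau(v,Y)]$ as an affine function of $v$ that vanishes at $t$ with slope at least $\min\{\tau,1-\tau\}$, bound the KL divergence via the Bernoulli chi-square inequality together with Lipschitzness of $p_\tau$ and the fact that its image is a compact subset of $(0,1)$, and get the Lipschitz claim from piecewise linearity with slopes $\tau$ and $1-\tau$; your explicit endpoint check at $v\in\{0,1\}$ and the uniqueness-of-root argument are small details the paper leaves implicit.
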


\begin{proof}
For $v\in[0,1]$ and $Y\sim\Ber(p)$,
\[
\Ebb[V_\tau(v,Y)]=(1-p)(1-\tau)v+p\tau(v-1).
\]
Substituting $p=p_\tau(t)$ gives
\[
\Ebb_{Y\sim D_t}[V_\tau(v,Y)]
=
a_{\tau,t}(v-t),
\qquad
a_{\tau,t}:=\frac{\tau(1-\tau)}{\tau+(1-2\tau)t}.
\]
Hence $\Gamma_\tau(D_t)=t$ and
\[
(v-t)\Ebb_{Y\sim D_t}[V_\tau(v,Y)]
=
a_{\tau,t}(v-t)^2.
\]
Since
\[
\tau+(1-2\tau)t\in[\min\{\tau,1-\tau\},\max\{\tau,1-\tau\}]
\qquad (t\in[0,1]),
\]
we have
\[
a_{\tau,t}\ge \min\{\tau,1-\tau\}>0.
\]
So the quantitative sign condition holds with
\[
c_{\Gamma_\tau}:=\min\{\tau,1-\tau\}.
\]

For the KL condition, note that $p_\tau$ is smooth on $[0,1]$ with derivative
\[
p_\tau'(t)=\frac{\tau(1-\tau)}{(\tau+(1-2\tau)t)^2}.
\]
Thus $p_\tau$ is Lipschitz on $[1/4,3/4]$; let
\[
L_\tau:=\sup_{t\in[1/4,3/4]}|p_\tau'(t)|.
\]
Also $p_\tau([1/4,3/4])$ is a compact subset of $(0,1)$, so there is $\eta_\tau>0$ such that
\[
p_\tau(t)\in[\eta_\tau,1-\eta_\tau]
\qquad (t\in[1/4,3/4]).
\]
For $t,t'\in[1/4,3/4]$,
\[
\KL(D_t\,\|\,D_{t'})
\le
\frac{(p_\tau(t)-p_\tau(t'))^2}{\eta_\tau(1-\eta_\tau)}
\le
\frac{L_\tau^2}{\eta_\tau(1-\eta_\tau)}(t-t')^2.
\]
So the KL regularity condition holds as well. For the final claim, fix $y\in[0,1]$. The function
\[
v\mapsto V_\tau(v,y)
\]
is continuous and piecewise linear, with slope $\tau$ on $(-\infty,y)$ and slope $1-\tau$ on
$(y,\infty)$. Hence it is $1$-Lipschitz on $[0,1]$. Taking expectation over $Y\sim \nu$ preserves
the same Lipschitz constant.
\end{proof}

\subsection{Quantiles}

\begin{definition}[$q$-quantile]
Fix a quantile level $q\in(0,1)$. Consider a distribution $\nu$ on $[0,1]$, and let $F_\nu$ be the CDF of $\nu$. Then, the $q$-quantile $\Gamma_q(\nu)$ of this distribution is
\[
\Gamma_q(\nu):=\inf\{v\in[0,1]:F_\nu(v)\ge q\}.
\]
We use the standard identification function
\[
\mathsf{V}_q(v,y):=\ind{y\le v}-q.
\]
\end{definition}

For $\lambda\in\R$, let $D_\lambda$ be the truncated exponential distribution on $[0,1]$ with density
\[
f_\lambda(y):=\exp(\lambda y-A(\lambda)),
\qquad
A(\lambda):=\log\!\int_0^1 e^{\lambda z}\,dz.
\]

\begin{proposition}[Quantiles admit a regularity witness]
\label{prop:quantile-regular-app}
For every $q\in(0,1)$ there exist a constant $\Lambda_q>0$ and a closed interval $I_q\subset(0,1)$
such that, after reparameterizing the family $\{D_\lambda:\lambda\in[-\Lambda_q,\Lambda_q]\}$ by its
$q$-quantile, $(\Gamma_q,V_q,\{D_t:t\in I_q\})$ is a regularity witness. Moreover, the family
$\{D_t:t\in I_q\}$ has densities uniformly bounded above by some constant $C_q<\infty$.
\end{proposition}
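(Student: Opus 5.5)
The plan is to verify the three conditions of Definition~\ref{def:regular} directly for a small neighborhood of $\lambda=0$ in the truncated exponential family, exploiting that at $\lambda=0$ we have the uniform distribution, whose $q$-quantile is $q$ itself.

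\emph{Step 1: an explicit quantile map.} The CDF of $D_\lambda$ is $F_\lambda(v)=(e^{\lambda v}-1)/(e^\lambda-1)$ for $\lambda\neq 0$ and $F_0(v)=v$. Since $F_\lambda$ is continuous and strictly increasing on $[0,1]$, the $q$-quantile is the unique solution of $F_\lambda(t)=q$:
\[
t(\lambda)=\frac{\log\bigl(1+q(e^\lambda-1)\bigr)}{\lambda}\quad(\lambda\neq 0),\qquad t(0)=q.
\]
The singularity at $0$ is removable, so $t$ is real-analytic near $0$. Expanding gives $t(\lambda)=q+\tfrac{q(1-q)}{2}\lambda+O(\lambda^2)$, hence $t'(0)=q(1-q)/2>0$.

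By continuity I pick $\Lambda_q>0$ small enough that three things hold on $[-\Lambda_q,\Lambda_q]$:
\begin{itemize}
\item $t'(\lambda)\ge q(1-q)/4$;
\item $t(\lambda)\in(0,1)$;
\item $\Lambda_q\le 1$, for convenience.
\end{itemize}
Set $I_q:=t([-\Lambda_q,\Lambda_q])$. This is a nondegenerate closed interval in $(0,1)$ containing $q$. The inverse $\lambda(\cdot)$ is $C^1$ on $I_q$ with $|\lambda'|\le 4/(q(1-q))$. Define $D_t:=D_{\lambda(t)}$; then condition~(1), $\Gamma_q(D_t)=t$, holds by construction.

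\emph{Step 2: density bounds and condition~(2).} For $|\lambda|\le\Lambda_q$ we have $e^{-|\lambda|}\le e^{\lambda z}\le e^{|\lambda|}$ on $[0,1]$, so $e^{-2\Lambda_q}\le f_\lambda\le e^{2\Lambda_q}$. This already gives $C_q:=e^{2\Lambda_q}$.

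Next, $M_\Gamma(v,t)=\Ebb_{D_t}[\ind{Y\le v}-q]=F_{\lambda(t)}(v)-F_{\lambda(t)}(t)$. By the mean value theorem with the density lower bound,
\[
(v-t)M_\Gamma(v,t)\ge e^{-2\Lambda_q}(v-t)^2\qquad\text{for all } v\in[0,1],
\]
so $c_\Gamma=e^{-2\Lambda_q}$.

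\emph{Step 3: condition~(3).} For a one-parameter exponential family with log-partition $A$, KL divergence is a Bregman divergence:
\[
\KL(D_\lambda\|D_{\lambda'})=A(\lambda')-A(\lambda)-A'(\lambda)(\lambda'-\lambda)\le \tfrac12\sup A''\,(\lambda-\lambda')^2 .
\]
Here $A''(\lambda)=\mathrm{Var}_{D_\lambda}(Y)\le 1/4$, because $Y\in[0,1]$. Composing with the Lipschitz inverse from Step~1 yields
\[
\KL(D_t\|D_{t'})\le \frac{1}{8}\Bigl(\frac{4}{q(1-q)}\Bigr)^2(t-t')^2,
\]
which gives $C_{\Gamma,\mathrm{KL}}$.

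The only step needing care is Step~1: establishing that $t(\lambda)$ is smooth through $\lambda=0$ and has a uniformly positive derivative. I would handle this by writing $t(\lambda)$ as a convergent power series, or via the implicit function theorem applied to $F(\lambda,t)=q$. That route uses $\partial_t F=f_\lambda(t)>0$, and $\partial_\lambda F$ at $\lambda=0$ equals $t(t-1)/2\neq 0$ at $t=q$, which gives $t'(0)=q(1-q)/2$ directly.
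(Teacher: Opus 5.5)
Your proposal is correct and follows the paper's proof step for step. Both use the explicit quantile map with $t'(0)=q(1-q)/2$ on a bi-Lipschitz neighborhood of $\lambda=0$, the mean value theorem with a density lower bound for condition~(2), and the exponential-family Bregman identity with $A''=\mathrm{Var}(Y)\le 1/4$ composed with the Lipschitz inverse for condition~(3). The only difference is that you give explicit constants ($e^{\pm 2\Lambda_q}$ for the density bounds, $q(1-q)/4$ for the derivative), where the paper appeals to continuity and compactness.
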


\begin{proof}
For $\lambda\neq 0$, the CDF of $D_\lambda$ is
\[
F_\lambda(v)=\frac{e^{\lambda v}-1}{e^\lambda-1},
\qquad v\in[0,1],
\]
and for $\lambda=0$ it is the uniform CDF $F_0(v)=v$. Hence the $q$-quantile of $D_\lambda$ is
\[
t_q(\lambda):=
\begin{cases}
\dfrac{1}{\lambda}\log\bigl(1+q(e^\lambda-1)\bigr),&\lambda\neq 0,\\[1ex]
q,&\lambda=0.
\end{cases}
\]
This function is smooth and satisfies
\[
t_q'(0)=\frac{q(1-q)}{2}>0.
\]
By continuity, there is $\Lambda_q>0$ and constants $0<\ell_q\le L_q<\infty$ such that
\[
\ell_q\le t_q'(\lambda)\le L_q
\qquad\text{for every }\lambda\in[-\Lambda_q,\Lambda_q].
\]
Shrinking $\Lambda_q$ further if necessary, we may also ensure that
\[
t_q([-\Lambda_q,\Lambda_q])\subset(0,1).
\]
Therefore $t_q$ is bi-Lipschitz on that interval. Let
\[
I_q:=t_q([-\Lambda_q,\Lambda_q])
\]
and let $\lambda_q:I_q\to[-\Lambda_q,\Lambda_q]$ be the inverse map. For $t\in I_q$, define
\[
D_t:=D_{\lambda_q(t)}.
\]
By construction, $\Gamma_q(D_t)=t$.

Now fix $t\in I_q$ and $v\in[0,1]$. Since
\[
\Ebb_{Y\sim D_t}[V_q(v,Y)]
=
F_{\lambda_q(t)}(v)-q
=
F_{\lambda_q(t)}(v)-F_{\lambda_q(t)}(t),
\]
the mean value theorem gives
\[
\Ebb_{Y\sim D_t}[V_q(v,Y)]
=
f_{\lambda_q(t)}(\xi)(v-t)
\]
for some $\xi$ between $v$ and $t$. The function $(\lambda,y)\mapsto f_\lambda(y)$ is continuous
and strictly positive on the compact set $[-\Lambda_q,\Lambda_q]\times[0,1]$, so there are constants
$0<c_q\le C_q<\infty$ such that
\[
c_q\le f_\lambda(y)\le C_q
\qquad (\lambda\in[-\Lambda_q,\Lambda_q],\ y\in[0,1]).
\]
Hence
\[
c_q(v-t)^2\le (v-t)\Ebb_{Y\sim D_t}[V_q(v,Y)]\le C_q(v-t)^2,
\]
so the sign condition holds with $c_{\Gamma_q}=c_q$.

For the KL condition, note that $D_\lambda$ is a one-dimensional exponential family. Therefore
\[
\KL(D_\lambda\,\|\,D_{\lambda'})=A(\lambda')-A(\lambda)-(\lambda'-\lambda)A'(\lambda).
\]
Taylor's theorem gives
\[
\KL(D_\lambda\,\|\,D_{\lambda'})=\frac{A''(\xi)}{2}(\lambda-\lambda')^2
\]
for some $\xi$ between $\lambda$ and $\lambda'$. Since
\[
A''(\xi)=\mathrm{Var}_{D_\xi}(Y)\le \frac14,
\]
we obtain
\[
\KL(D_\lambda\,\|\,D_{\lambda'})\le \frac18(\lambda-\lambda')^2.
\]
Finally, the inverse map $\lambda_q$ is Lipschitz on $I_q$ with constant $1/\ell_q$, so
\[
|\lambda_q(t)-\lambda_q(t')|\le \frac{1}{\ell_q}|t-t'|.
\]
Therefore
\[
\KL(D_t\,\|\,D_{t'})\le \frac{1}{8\ell_q^2}(t-t')^2,
\]
which proves that the displayed family is a regularity witness. The same compactness argument already gave the uniform density upper
bound $C_q$.
\end{proof}

\section{Details for the related-work rate conversions}
\label{app:rw-rate-conversions}

This appendix records the calculations behind the converted upper-bound rates summarized in Table~\ref{tab:intro-rates}. We translate each prior guarantee into the mean ECE metric used in the main body. Throughout this section, $G$ is a finite family of binary groups, confidence is a fixed constant, and $\widetilde O(\cdot)$ hides polylogarithmic factors in $1/\varepsilon$, $|G|$, and any confidence parameter. When a paper states a bound in terms of $\mathrm{VC}(G)$, we use the standard inequality $\mathrm{VC}(G)\le \log_2 |G|$ for finite classes.

The conversions repeatedly use the same bucket-to-ECE calculation. Suppose a predictor uses $\lambda$ prediction buckets and has per-bucket bias at most $\alpha$ on every group. Rounding each prediction to the center of its bucket changes the bias on any example by at most $O(1/\lambda)$, and the sum of the absolute bucket biases is at most $\lambda \alpha$. Thus the resulting mean ECE is at most
\[
\lambda \alpha + O(1/\lambda).
\]
This is the same center-rounding calculation formalized in Lemma~\ref{lem:bucket-rounding}.

\paragraph{\cite{hebert2018multicalibration}}
Definition~2 of \cite{hebert2018multicalibration} defines $\alpha$-calibration on a group after discarding an $\alpha$ fraction of the mass, and Theorem~2 proves that when every group in $G$ has mass at least $\gamma$, running their algorithm with $\lambda=\alpha$ returns a $(G,2\alpha)$-multicalibrated predictor from
\[
\widetilde O\!\left(\frac{\log |G|}{\alpha^{11/2}\gamma^{3/2}}\right)
\]
samples. Immediately after Definition~2 they note that $\beta$-calibration implies $2\beta$ accurate-in-expectation error. Applying this with $\beta=2\alpha$, their theorem yields mean ECE of order $\alpha$ on every group of mass at least $\gamma$. For a group of mass below $\gamma$, our ECE metric is trivially at most $\gamma$ because $|v-Y|\le 1$. Hence the unrestricted mean ECE is at most $O(\alpha+\gamma)$. Choosing $\alpha\asymp \gamma\asymp \varepsilon$ gives
\[
\widetilde O\!\left(\frac{\log |G|}{\varepsilon^{7}}\right).
\]
Under their original minimum-group-mass assumption, one may instead regard $\gamma$ as a fixed parameter and recover the sharper conditional rate
\[
\widetilde O\!\left(\frac{\log |G|}{\varepsilon^{11/2}\gamma^{3/2}}\right).
\]

\paragraph{\cite{gupta2021online}}
Definition~2 of \cite{gupta2021online} is bucketed $(\alpha,n)$-mean multicalibration: predictions are grouped into $n$ buckets of width $1/n$, and every group-bucket pair must have mean bias at most $\alpha$. Their Theorem~6 gives online bucketed mean multicalibration with
\[
\alpha = \widetilde O\!\left(\sqrt{\frac{\log(|G|n)}{T}}\right),
\]
and their Appendix~A gives the corresponding randomized batch predictor; specifically, Theorem~A.1 states the same $\widetilde O(\sqrt{\log(|G|n)/T})$ bucketed guarantee in the batch setting. Rounding every bucket to its center therefore gives mean ECE at most
\[
\widetilde O\!\left(n\sqrt{\frac{\log(|G|n)}{T}}+\frac{1}{n}\right).
\]
Setting $n\asymp 1/\varepsilon$ and requiring each term to be $O(\varepsilon)$ gives
\[
T = \widetilde O\!\left(\frac{\log |G|}{\varepsilon^4}\right).
\]

\paragraph{Haghtalab, Jordan, and Zhao.}
Definition~2.1 of \cite{haghtalab2023unifying} is the standard $\lambda$-bucket multicalibration condition: for each group, bucket, and class coordinate, the bucket bias is at most $\varepsilon$ in absolute value. In the binary mean setting, Theorem~4.1 returns a randomized predictor that is $(G,\varepsilon,\lambda)$-multicalibrated using
\[
O\!\left(\frac{\log |G|+\log \lambda}{\varepsilon^2}\right)
\]
samples. Rounding every bucket to its center gives mean ECE at most
\[
\lambda\varepsilon + O(1/\lambda).
\]
Choosing $\lambda\asymp 1/\eta$ and $\varepsilon\asymp \eta^2$ to target mean ECE $\eta$ yields
\[
\widetilde O\!\left(\frac{\log |G|}{\eta^4}\right)
\]
samples.

\paragraph{\cite{gopalan2022low} (Low-Degree Multicalibration).}
\cite{gopalan2022low} study the generalized condition
\[
\left|\E[c(X)\,w(f(X))(Y-f(X))]\right|\le \alpha
\]
over a hypothesis class $C$ and a weight class $W$. For fixed degree $k$, the low-degree part of Theorem~35 gives sample complexity
\[
\widetilde O\!\left(\frac{\log |G|}{\alpha^4}\right)
\]
for degree-$(k+1)$ multicalibration in the binary setting. To recover a rate for our mean ECE metric, specialize instead to their full multicalibration notion, which uses the interval basis $I_\delta$ of width $\delta$. Proposition~34 with $|I_\delta|=O(1/\delta)$ gives sample complexity
\[
\widetilde O\!\left(\frac{\log |G|}{\alpha^4}\right)
\]
for $(G,I_\delta,\alpha)$-multicalibration. In the binary case this implies mean ECE at most
\[
O(\alpha/\delta+\delta),
\]
because there are $O(1/\delta)$ interval buckets and center-rounding contributes another $O(\delta)$ term. Choosing $\delta\asymp \varepsilon$ and $\alpha\asymp \varepsilon^2$ gives
\[
\widetilde O\!\left(\frac{\log |G|}{\varepsilon^8}\right).
\]

\paragraph{\cite{globus2023boosting}}
Definition~2.1 of \cite{globus2023boosting} is the exact weighted $L_2$ multicalibration quantity
\[
K^2(f,h,D)
:=
\sum_{v\in R(f)} \Pr[f(X)=v]\,
\E[h(X)(Y-v)\mid f(X)=v]^2.
\]
Their Theorem~4.3 gives an in-sample boosting algorithm, and Theorem~C.9 gives the out-of-sample guarantee: with
\[
n=\widetilde O\!\left(\frac{dB^5}{\alpha^5}\right)
\]
samples, where $d=\mathrm{Pdim}(H)$, the output predictor is $O(\alpha)$-approximately multicalibrated in this weighted $L_2$ sense. For finite classes, $d=O(\log |G|)$. By Cauchy--Schwarz,
\[
\sum_v \pi_v |\mathrm{bias}_v|
\le
\left(\sum_v \pi_v \,\mathrm{bias}_v^2\right)^{1/2},
\]
so mean ECE is at most the square root of their weighted $L_2$ error. Therefore to achieve mean ECE at most $\varepsilon$, it suffices to take $\alpha\asymp \varepsilon^2$, which yields
\[
\widetilde O\!\left(\frac{\log |G|}{\varepsilon^{10}}\right).
\]
Equivalently, their own weighted $L_2$ metric has sample complexity $\widetilde O(\rho^{-5}\log |G|)$ at target error $\rho$.

\paragraph{\cite{noarov2025high}}
Theorem~2.4 of \cite{noarov2025high} gives a per-event cumulative bias bound of order
\[
\widetilde O\!\left(\sqrt{n_T(E)}\right)
\]
simultaneously for all conditioning events $E$. Their Section~2.2 applies this to mean multicalibration with $m$ prediction buckets and obtains cumulative bucketed multicalibration error
\[
\widetilde O\!\left(\frac{T}{m}+\sqrt{Tm}\right).
\]
Dividing by $T$ gives normalized bucketed error
\[
\widetilde O\!\left(\frac{1}{m}+\sqrt{\frac{m}{T}}\right).
\]
Choosing $m\asymp T^{1/3}$ yields normalized bucketed error $\widetilde O(T^{-1/3})$. Lemma~\ref{lem:bucket-rounding} converts this to the same ECE rate on the transcript, and our online-to-batch conversion then yields the same order in batch. Hence their online theorem implies batch mean ECE sample complexity
\[
\widetilde O\!\left(\frac{\log |G|}{\varepsilon^3}\right),
\]
which is the sharp upper bound used in the main text.

\section{Quantizing arbitrary randomized predictors}

The main text works with finitely supported randomized predictors because that is the form produced by the online-to-batch reduction and because it lets us write ECE as an explicit sum over prediction values. For lower-bound purposes this is essentially without loss of generality: an arbitrary randomized predictor can be quantized to a finite grid with arbitrarily small additive loss relative to the total-variation extension below.

Let $P$ be a distribution on $\mathcal{X}\times[0,1]$, let $\mu(x):=\E[Y\mid X=x]$, and let $Q=(Q_x)_{x\in\mathcal{X}}$ be an arbitrary randomized predictor. For a signed weight $w:\mathcal{X}\to[-1,1]$, define the signed measure
\[
\nu_{P,Q;w}(A):=\E\!\left[w(X)\int_A (v-Y)\,Q_X(dv)\right]
\]
for Borel sets $A\subseteq [0,1]$, and set
\[
\Err_P^{\mathrm{TV}}(Q;w):=\|\nu_{P,Q;w}\|_{\mathrm{TV}},\qquad
\MC_P^{\mathrm{TV}}(Q;G):=\max_{g\in G}\Err_P^{\mathrm{TV}}(Q;g),
\]
\[
\Delta_P^{\mathrm{TV}}(Q):=\E\!\left[\int |v-\mu(X)|\,Q_X(dv)\right].
\]
If $Q$ is finitely supported, then $\nu_{P,Q;w}=\sum_{v\in V(Q)} B_P(Q;v,w)\,\delta_v$, so $\Err_P^{\mathrm{TV}}(Q;w)=\Err_P(Q;w)$ and $\MC_P^{\mathrm{TV}}(Q;G)=\MC_P(Q;G)$.

\begin{proposition}[Quantization of arbitrary randomized predictors]
\label{prop:quantize-arbitrary}
For every $\eta>0$ there exists a map $T_\eta:[0,1]\to[0,1]$ with finite image and $|T_\eta(v)-v|\le \eta$ for all $v\in[0,1]$. Let $Q^{(\eta)}$ be defined by $Q_x^{(\eta)}:=(T_\eta)_\# Q_x$. Then $Q^{(\eta)}$ is finitely supported and, for every signed weight $w:\mathcal{X}\to[-1,1]$,
\[
\Err_P(Q^{(\eta)};w)\le \Err_P^{\mathrm{TV}}(Q;w)+\eta
\quad\text{and}\quad
\Delta_P(Q^{(\eta)})\le \Delta_P^{\mathrm{TV}}(Q)+\eta.
\]
Consequently, for every family $G$ of binary groups, $\MC_P(Q^{(\eta)};G)\le \MC_P^{\mathrm{TV}}(Q;G)+\eta$.
\end{proposition}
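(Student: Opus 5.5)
Take $T_\eta$ to be the left-endpoint rounding to a grid of mesh at most $\eta$: fix an integer $N\ge 1/\eta$, partition $[0,1]$ into Borel cells $A_j:=[j/N,(j+1)/N)$ for $j=0,\dots,N-2$ and $A_{N-1}:=[(N-1)/N,1]$, and set $T_\eta(v):=j/N$ for $v\in A_j$. Then $T_\eta$ is Borel, has finite image $\{0,1/N,\dots,(N-1)/N\}$, and satisfies $|T_\eta(v)-v|\le 1/N\le\eta$. The pushforward $Q^{(\eta)}_x=(T_\eta)_\#Q_x$ is supported on this grid for every $x$, so $Q^{(\eta)}$ is finitely supported with $V(Q^{(\eta)})\subseteq T_\eta([0,1])$.

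For the error bound, fix a signed weight $w$ and a grid value $u=j/N$. Since $Q^{(\eta)}_X(\{u\})=Q_X(A_j)$,
\[
B_P(Q^{(\eta)};u,w)=\E\!\left[w(X)\int_{A_j}(u-Y)\,Q_X(dv)\right]
=\nu_{P,Q;w}(A_j)+\E\!\left[w(X)\int_{A_j}(u-v)\,Q_X(dv)\right].
\]
Summing absolute values over $j$ gives
\[
\Err_P(Q^{(\eta)};w)\le \sum_j|\nu_{P,Q;w}(A_j)|+\sum_j\E\!\left[|w(X)|\int_{A_j}|u_j-v|\,Q_X(dv)\right].
\]
The first sum is at most $\|\nu_{P,Q;w}\|_{\mathrm{TV}}=\Err_P^{\mathrm{TV}}(Q;w)$, because the $A_j$ form a finite Borel partition. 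The second sum is at most $\eta\,\E[\int Q_X(dv)]=\eta$, using $|w|\le 1$ and $|u_j-v|\le\eta$ on $A_j$.

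The only point needing care is measurability. We need $x\mapsto Q_x(A)$ to be measurable so that $\nu_{P,Q;w}$ is a well-defined finite signed measure; this is implicit in the definition of $\nu_{P,Q;w}$, so we assume $Q$ is a Markov kernel. With that in place, Fubini justifies splitting the expectation and interchanging it with the integral over $v$ in the display above.

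For prediction error, change variables in the pushforward and apply the triangle inequality:
\[
\Delta_P(Q^{(\eta)})=\E\!\left[\int|T_\eta(v)-\mu(X)|\,Q_X(dv)\right]\le \E\!\left[\int\bigl(|v-\mu(X)|+\eta\bigr)\,Q_X(dv)\right]=\Delta_P^{\mathrm{TV}}(Q)+\eta.
\]
For the group statement, apply the error bound to each binary group $g\in G$, which is a valid signed weight, and take the maximum over $g\in G$ on both sides. This gives $\MC_P(Q^{(\eta)};G)\le \MC_P^{\mathrm{TV}}(Q;G)+\eta$.
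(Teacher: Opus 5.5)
Your proposal is correct and follows the paper's proof essentially step for step. You split each quantized bias into $\nu_{P,Q;w}(A_j)$ plus a rounding term whose total over cells is at most $\eta$. You bound the first sum by the total variation norm over the finite partition, and you handle $\Delta_P$ by change of variables and the triangle inequality. The only differences are cosmetic: you use left-endpoint rather than nearest-grid rounding, and you state the Markov-kernel measurability assumption explicitly, which the paper leaves implicit.
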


\begin{proof}
Choose any finite-grid rounding map $T_\eta$ with mesh at most $\eta$, for example nearest-grid rounding onto $\{0,\eta,2\eta,\dots,\lfloor 1/\eta\rfloor\eta,1\}$. For each grid value $z$ in the finite image of $T_\eta$, let $C_z:=T_\eta^{-1}(\{z\})$; these sets partition $[0,1]$, so $Q^{(\eta)}$ is finitely supported.

Fix a signed weight $w$. For each such $z$,
\begin{align*}
B_P(Q^{(\eta)};z,w)
&=
\E\!\left[w(X)\int_{C_z} (T_\eta(v)-Y)\,Q_X(dv)\right] \\
&=
\nu_{P,Q;w}(C_z)
+
\E\!\left[w(X)\int_{C_z} (T_\eta(v)-v)\,Q_X(dv)\right].
\end{align*}
Since $\{C_z\}_z$ is a finite measurable partition of $[0,1]$, we have $\sum_z |\nu_{P,Q;w}(C_z)|\le \|\nu_{P,Q;w}\|_{\mathrm{TV}}$. Therefore,
\begin{align*}
\Err_P(Q^{(\eta)};w)
&=
\sum_z |B_P(Q^{(\eta)};z,w)| \\
&\le
\sum_z |\nu_{P,Q;w}(C_z)|
+
\E\!\left[|w(X)|\int |T_\eta(v)-v|\,Q_X(dv)\right] \\
&\le
\|\nu_{P,Q;w}\|_{\mathrm{TV}}+\eta \\
&=
\Err_P^{\mathrm{TV}}(Q;w)+\eta.
\end{align*}
Taking the maximum over $g\in G$ gives the multicalibration claim.

For the prediction error,
\begin{align*}
\Delta_P(Q^{(\eta)})
&=
\E\!\left[\int |T_\eta(v)-\mu(X)|\,Q_X(dv)\right] \\
&\le
\E\!\left[\int |v-\mu(X)|\,Q_X(dv)\right]
+
\E\!\left[\int |T_\eta(v)-v|\,Q_X(dv)\right] \\
&\le
\Delta_P^{\mathrm{TV}}(Q)+\eta.
\end{align*}
\end{proof}

\begin{corollary}[Main lower bound extends to arbitrary randomized predictors up to constant-factor slack]
\label{cor:quantized-main}
Under the same construction and choice of constants as in Theorem~\ref{thm:lower-regular}, suppose a learning algorithm receives $n$ i.i.d.\ samples from $P_\theta$ and outputs an arbitrary randomized predictor $Q_S=(Q_{S,1},\dots,Q_{S,m})$, where each $Q_{S,i}$ is a probability measure on $[0,1]$.
Assume that for every $\theta\in\Theta_m$,
\[
\Pbb_{S\sim P_\theta^n,\ \mathcal{A}}
\!\left(
\MC_{P_\theta}^{\mathrm{TV}}(Q_S;G_m)\le \varepsilon
\right)
\ge \frac23.
\]
If
\[
\varepsilon \le \frac{c_3}{2}\,\frac{\gamma}{1+\log_2 m},
\]
then necessarily
\[
n\ge c_2\,\frac{m}{\gamma^2}.
\]
In particular, along the sequence $\gamma\asymp 1/m$, arbitrary randomized predictors also require
\[
n=\widetilde{\Omega}(\varepsilon^{-3})
\]
for this ECE notion of multicalibration.
\end{corollary}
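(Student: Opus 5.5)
The plan is to reduce to the finitely supported case by quantization, then rerun the decoding-plus-Fano chain from the proof of Theorem~\ref{thm:lower-regular}. Here $c_3$, $c_2$, $\gamma$ play the roles of $c_{3,\Gamma}$, $c_{2,\Gamma}$, $\gamma_m$ for the mean property, so $c_3\gamma/(1+\log_2 m)$ is exactly the threshold in Proposition~\ref{prop:decode}.

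\emph{Step 1 (quantization).} Given the learner's output $Q_S$, the analysis applies Proposition~\ref{prop:quantize-arbitrary} with
\[
\eta:=\frac{c_3}{2}\,\frac{\gamma}{1+\log_2 m}
\]
to get a finitely supported predictor $Q_S^{(\eta)}$ with
\[
\MC_{P_\theta}(Q_S^{(\eta)};G_m)\le \MC^{\mathrm{TV}}_{P_\theta}(Q_S;G_m)+\eta .
\]
This map is deterministic and does not depend on $\theta$. So the composed procedure ``run the learner, then quantize'' is itself a learner in the sense of Definition~\ref{def:learner}: it uses the same samples and seed and outputs finitely supported predictors. On the success event $\MC^{\mathrm{TV}}\le\varepsilon$, the assumption on $\varepsilon$ gives
\[
\MC_{P_\theta}(Q_S^{(\eta)};G_m)\le \varepsilon+\eta\le c_3\,\frac{\gamma}{1+\log_2 m}.
\]
This event therefore still has probability at least $2/3$ for every $\theta$.

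\emph{Step 2 (decoding and Fano).} The mean-specific staircase $t_\theta$ is nondecreasing, so Proposition~\ref{prop:anticoarsen} and Jensen's inequality, exactly as in Proposition~\ref{prop:decode}, give on that event
\[
\lVert \bar f_{Q_S^{(\eta)}}-t_\theta\rVert_{L_1(U_m)}\le \frac{6(1+\log_2 m)}{c_\Gamma}\cdot\frac{c_\Gamma\gamma}{384(1+\log_2 m)}=\frac{\gamma}{64}.
\]
This is precisely the hypothesis of Lemma~\ref{lem:fano-close}, applied to the quantized learner. Lemma~\ref{lem:fano-close} then yields $n\ge c_2 m/\gamma^2$. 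Its proof only needs the learner's seed law to be independent of $\theta$, and quantization adds no randomness.

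\emph{Step 3 (rate).} Along $\gamma\asymp 1/m$, the bound becomes $n=\Omega(m^3)$. The hypothesis on $\varepsilon$ is satisfiable with $\varepsilon\asymp \gamma/\log m$, so $m\asymp \varepsilon^{-1}/\log(1/\varepsilon)$ and $n=\widetilde\Omega(\varepsilon^{-3})$. The proof carries no real obstacle. The only care points are the factor-$2$ slack that absorbs $\eta$, and checking that the quantized output is a legitimate learner output, so that Lemma~\ref{lem:fano-close} applies verbatim.
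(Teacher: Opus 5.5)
Your proposal is correct and follows the paper's argument: quantize via Proposition~\ref{prop:quantize-arbitrary}, absorb the additive $\eta$ using the factor-$2$ slack in the hypothesis on $\varepsilon$, and rerun the decoding-plus-Fano chain on the post-processed learner. The differences are cosmetic. The paper takes $\eta=\varepsilon$, so that $\varepsilon+\eta=2\varepsilon$, where you take $\eta=\frac{c_3}{2}\frac{\gamma}{1+\log_2 m}$. The paper also simply cites Theorem~\ref{thm:lower-regular}, whereas you invoke Proposition~\ref{prop:anticoarsen} and Lemma~\ref{lem:fano-close} directly, which matches the corollary's explicitly parametrized hypothesis more precisely.
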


\begin{proof}
Fix $\eta:=\varepsilon$. Given the learner's output $Q_S$, apply Proposition~\ref{prop:quantize-arbitrary} to obtain a finitely supported predictor $Q_S^{(\eta)}$ with
\[
\MC_{P_\theta}(Q_S^{(\eta)};G_m)
\le
\MC_{P_\theta}^{\mathrm{TV}}(Q_S;G_m)+\eta
\le
2\varepsilon
\le
c_3\,\frac{\gamma}{1+\log_2 m}.
\]
Thus the post-processed learner satisfies the hypothesis of Theorem~\ref{thm:lower-regular}, which gives $n\ge c_2\,m/\gamma^2$. Taking $\gamma\asymp 1/m$ gives the same $\widetilde{\Omega}(\varepsilon^{-3})$ reformulation as before.
\end{proof}

\end{document}